\let\mathbb\varmathbb
\crefname{lemma}{Lemma}{Lemmas}
\crefname{fact}{Fact}{Facts}
\crefname{theorem}{Theorem}{Theorems}
\crefname{corollary}{Corollary}{Corollaries}
\crefname{claim}{Claim}{Claims}
\crefname{example}{Example}{Examples}
\crefname{algorithm}{Algorithm}{Algorithms}
\crefname{problem}{Problem}{Problems}
\crefname{definition}{Definition}{Definitions}
\crefname{exercise}{Exercise}{Exercises}
\crefname{model}{Model}{Models}
\newtheorem{theorem}{Theorem}[section]
\newtheorem*{theorem*}{Theorem}
\newtheorem{lemma}[theorem]{Lemma}
\newtheorem*{lemma*}{Lemma}
\newtheorem{fact}[theorem]{Fact}
\newtheorem*{fact*}{Fact}
\newtheorem*{proposition*}{Proposition}
\newtheorem*{corollary*}{Corollary}
\newtheorem*{hypothesis*}{Hypothesis}
\newtheorem*{conjecture*}{Conjecture}
\theoremstyle{definition}
\newtheorem{definition}[theorem]{Definition}
\newtheorem*{definition*}{Definition}
\newtheorem*{construction*}{Construction}
\newtheorem*{example*}{Example}
\newtheorem{question}[theorem]{Question}
\newtheorem*{question*}{Question}
\newtheorem*{algorithm*}{Algorithm}
\newtheorem*{assumption*}{Assumption}
\newtheorem{problem}[theorem]{Problem}
\newtheorem*{problem*}{Problem}
\newtheorem*{openquestion*}{Open Question}
\theoremstyle{remark}
\newtheorem*{claim*}{Claim}
\newtheorem*{remark*}{Remark}
\newtheorem*{observation*}{Observation}
\theoremstyle{model}
\newtheorem{model}[theorem]{Model}
\newtheorem*{model*}{Model}
\let\originalleft\left
\let\originalright\right
\renewcommand{\left}{\mathopen{}\mathclose\bgroup\originalleft}
\renewcommand{\right}{\aftergroup\egroup\originalright}
\let\latexparagraph\paragraph
\RenewDocumentCommand{\paragraph}{som}{%
  \IfBooleanTF{#1}
    {\latexparagraph*{#3}}
    {\IfNoValueTF{#2}
       {\latexparagraph{\maybe@addperiod{#3}}}
       {\latexparagraph[#2]{\maybe@addperiod{#3}}}%
  }%
}
\newcommand{\maybe@addperiod}[1]{%
  #1\@addpunct{.}%
}
\newcommand{\Paren}[1]{\left(#1\right)}
\newcommand{\Brac}[1]{\left[#1\right]}
\newcommand{\Bigbrac}[1]{\Big[#1\Big]}
\newcommand{\Abs}[1]{\left\lvert#1\right\rvert}
\newcommand{\Set}[1]{\left\{#1\right\}}
\newcommand{\norm}[1]{\lVert#1\rVert}
\newcommand{\Norm}[1]{\left\lVert#1\right\rVert}
\newcommand{\iprod}[1]{\langle#1\rangle}
\newcommand{\Esymb}{\mathbb{E}}
\newcommand{\Psymb}{\mathbb{P}}
\DeclareMathOperator*{\E}{\Esymb}
\DeclareMathOperator*{\ProbOp}{\Psymb}
\renewcommand{\Pr}{\ProbOp}
\newcommand\bdot\bullet
\DeclareMathOperator{\poly}{poly}
\newcommand{\N}{\mathbb N}
\newcommand{\R}{\mathbb R}
\newcommand{\cB}{\mathcal B}
\newcommand{\cP}{\mathcal P}
\newcommand{\cQ}{\mathcal Q}
\newcommand{\cS}{\mathcal S}
\renewcommand{\leq}{\leqslant}
\renewcommand{\le}{\leqslant}
\renewcommand{\geq}{\geqslant}
\let\epsilon=\varepsilon
\numberwithin{equation}{section}
\newcommand\MYcurrentlabel{xxx}
\newcommand{\MYstore}[2]{%
  \global\expandafter \def \csname MYMEMORY #1 \endcsname{#2}%
}
\newcommand{\MYload}[1]{%
  \csname MYMEMORY #1 \endcsname%
}
\newcommand{\MYnewlabel}[1]{%
  \renewcommand\MYcurrentlabel{#1}%
  \MYoldlabel{#1}%
}
\newcommand{\MYdummylabel}[1]{}
\newcommand{\torestate}[1]{%
  % overwrite label command
  \let\MYoldlabel\label%
  \let\label\MYnewlabel%
  #1%
  \MYstore{\MYcurrentlabel}{#1}%
  % restore old label command
  \let\label\MYoldlabel%
}
\newcommand{\restatetheorem}[1]{%
  % overwrite label command with dummy
  \let\MYoldlabel\label
  \let\label\MYdummylabel
  \begin{theorem*}[Restatement of \cref{#1}]
    \MYload{#1}
  \end{theorem*}
  \let\label\MYoldlabel
}
\newcommand{\restatelemma}[1]{%
  % overwrite label command with dummy
  \let\MYoldlabel\label
  \let\label\MYdummylabel
  \begin{lemma*}[Restatement of \cref{#1}]
    \MYload{#1}
  \end{lemma*}
  \let\label\MYoldlabel
}
\newcommand{\restateprop}[1]{%
  % overwrite label command with dummy
  \let\MYoldlabel\label
  \let\label\MYdummylabel
  \begin{proposition*}[Restatement of \cref{#1}]
    \MYload{#1}
  \end{proposition*}
  \let\label\MYoldlabel
}
\newcommand{\restatefact}[1]{%
  % overwrite label command with dummy
  \let\MYoldlabel\label
  \let\label\MYdummylabel
  \begin{fact*}[Restatement of \cref{#1}]
    \MYload{#1}
  \end{fact*}
  \let\label\MYoldlabel
}
\newcommand{\restate}[1]{%
  % overwrite label command with dummy
  \let\MYoldlabel\label
  \let\label\MYdummylabel
  \MYload{#1}
  \let\label\MYoldlabel
}
\newcommand*{\Id}{\mathrm{Id}}
\title{
 SQ Lower Bounds for Random Sparse Planted Vector Problem
}
\author{
Jingqiu Ding\thanks{ETH Z\"urich. Supported by Steurer's ERC Consolidator Grant.}
\and
Yiding Hua\thanks{ETH Z\"urich.}
}
\begin{document}

\pagestyle{empty}

% MAKE TITLE

\maketitle
\thispagestyle{empty} % seems to be required here to avoid page number on first page

% ABSTRACT

\begin{abstract}

Consider the setting where a $\rho$-sparse Rademacher vector is planted in a random $d$-dimensional subspace of 
$\R^n$. A classical question is how to recover this planted vector given a random basis in this subspace. 

A recent result by \cite{LLLalgorithm} showed that the Lattice basis reduction algorithm can recover the planted vector when $n\geq d+1$. Although the algorithm is not expected to tolerate inverse polynomial amount of noise, it is surprising because it was previously shown that recovery cannot be achieved by low degree polynomials when $n\ll \rho^2 d^{2}$ \cite{mao2021optimal}.

A natural question is whether we can derive an Statistical Query (SQ) lower bound matching the previous low degree lower bound in \cite{mao2021optimal}. This will 
\begin{itemize}
    \item imply that the SQ lower bound can be surpassed by lattice based algorithms;
    \item predict the computational hardness when the planted vector is perturbed by inverse polynomial amount of noise.
\end{itemize}

In this paper, we prove such an SQ lower bound. In particular, 
we show that super-polynomial number of VSTAT queries is needed to solve the easier statistical testing problem when $n\ll \rho^2 d^{2}$ and $\rho\gg \frac{1}{\sqrt{d}}$. The most notable technique we used to derive the SQ lower bound is the almost equivalence relationship between SQ lower bound and low degree lower bound \cite{brennan2020statistical,mao2021optimal}.

\end{abstract}

%\clearpage

% TOC

\newpage

% assumes microtype
\microtypesetup{protrusion=false}
\tableofcontents{}
\microtypesetup{protrusion=true}

\clearpage

\pagestyle{plain}
\setcounter{page}{1}

% SECTION

\section{Introduction}

The Random Sparse Planted Vector problem is to recover a sparse planted vector $x$ from a $d$-dimensional subspace of $\R^n$ that is spanned by $x$ and $d-1$ spherical random vectors. This problem is interesting in its own and is also closely related to a wide range of problems in data science and statistics, including sparse PCA, Non-Gaussian Component Analysis, Dictionary Learning, etc. There has been significant interest in this problem, leading to algorithms and lower bounds \cite{demanet2014scaling, barak2014rounding, qu2014finding, hopkins2016fast, qu2020finding}.

A fascinating phenomenon was discovered recently for the Random Sparse Planted Vector problem. In \cite{mao2021optimal}, an unconditional lower bound against algorithms based on low degree polynomial was developed for the special case where vector $x$ is $\{\pm 1,0\}$ vector, which provides evidence of computational hardness. Later in \cite{LLLalgorithm,KaneLLL}, it was surprisingly found that, the Lenstra–Lenstra–Lovász (LLL) lattice basis reduction algorithm can surpass the low degree lower bound for this instance.  The algorithm crucially exploits the fact that the planted vector is integral.

A natural question is whether some other common algorithm frameworks can capture algorithms based on the LLL lattice basis reduction. One promising candidate is Statistical Query (SQ) algorithms. In some cases, the SQ framework can be more powerful than low degree polynomials, e.g. it can simulate some inefficient algorithms since these algorithms have access to some inefficient oracles. As a matter of fact, to our knowledge, before our work there was no known instance where lattice basis reduction algorithm surpasses SQ lower bound. Therefore, we aim to answer the following question in our paper.
\begin{question}
 Can algorithms based on the LLL lattice basis reduction surpass Statistical Query lower bound in Random Sparse Planted Vector problem?
\end{question}

We give an affirmative answer to this question by deriving an SQ lower bound falling short of matching the guarantees of the lattice-based algorithm in \cite{LLLalgorithm}. More concretely, the SQ lower bound we derive in this paper indicates similar computational hardness as the previous low degree lower bound in \cite{mao2021optimal}. This provides another natural example for the almost equivalence between SQ algorithms and low degree algorithms that has been characterized in \cite{brennan2020statistical}.

\subsection{Models}

Throughout the paper, we will use $n$ to denote the dimension of the hidden vector and $d$ to denote the dimension of the subspace where the hidden vector is planted in. For ease of formalization and comparison to previous work, we will focus on the case where the planted vector is Bernoulli-Rademacher, which is defined below.
\begin{definition}[Bernoulli-Rademacher Vector]
A random variable $\omega \in \R$ is 
Bernoulli-Rademacher with parameter $\rho \in (0,1]$ if
\begin{equation*}
	\omega =
	\begin{cases}
		1/\sqrt{\rho} & \text{with probability } \rho / 2\\
		-1/\sqrt{\rho} & \text{with probability }  \rho / 2\\
		0 & \text{with probability }  1 - \rho
	\end{cases}
\end{equation*}
A random vector $x \in \R^n$ is Bernoulli-Rademacher with parameter $\rho$, denoted by $x \sim BR(n, \rho)$, if the entries of $x$ are i.i.d Bernoulli-Rademacher with parameter $\rho$.
\end{definition}

Notice that our definition for Bernoulli-Rademacher vector is not scaled, i.e. $\E[\norm{x}^2_2] = n$. Now, we give the precise definition of the (noisy) Sparse Planted Vector problem.
\begin{model}[(Noisy) Sparse Planted Vector Problem]
\label{model_spv}
Given a hidden Bernoulli-Rademacher vector $x \sim BR(n, \rho)$ and $d$ i.i.d. standard Gaussian vectors $v_0, v_1, v_2, \dots, v_{d-1} \sim N(0, \Id_n)$, for an arbitrary $\sigma\geq 0$ (which can depend on $d$),
let $Z \in \R^{n \times d}$ be the matrix whose columns are $x+\sigma v_0$ and $\{v_i\}_{i \in \{1, 2, \dots, d-1\}}$, i.e. $Z = [x+\sigma v_0, v_1, v_2, \dots, v_{d-1}]$. After observing a rotated matrix $\tilde{Z} = Z R$, the goal is to recover the hidden $\rho$-sparse vector $x$.
\end{model}
When $\sigma=0$ (which corresponds to the noiseless setting), the planted vector is $\rho$-sparse. When $\sigma$ is small,  the planted vector in the subspace is close to a $\rho$-sparse vector.

To study the SQ lower bound, we equivalently formulate it as a
 multi-sample model. 
\begin{model}[Multi-Sample version]
\label{model_general}
Given a hidden random unit vector $u \in \R^d$, for an arbitrary $\sigma \geq 0$, we observe $n$ independent samples $\{\tilde{z}_i\}_{i \in [n]}$ such that
\begin{equation*}
	\tilde{z}_i \sim N(x_i u, \Id_d - uu^T+\sigma^2 uu^\top)
\end{equation*}
where $x \sim BR(n, \rho)$ is a hidden Bernoulli-Rademacher vector.
For simplicity, we denote the observation as $\tilde{Z} \in \R^{n \times d}$ where the rows are the samples $\{\tilde{z}_i\}_{i \in [n]}$. The goal is to recover the hidden vector $x$ given observation $\tilde{Z}$.
\end{model}
As pointed out in Lemma 4.21 of \cite{mao2021optimal}, when $\sigma=0$, \cref{model_spv} and \cref{model_general} are equivalent. Using a similar proof (see \cref{sec_model_equivalence} for details), it is easy to show that this equivalence also holds when $\sigma\neq 0$.

\subsection{Estimation and hypothesis testing}

There are two types of problems that are of particular interest in the models we consider: estimation and hypothesis testing. Estimation problem aims to recover the planted vector given the observations, while hypothesis testing problem tries to distinguish whether the observations are sampled from the planted distribution or from a null distribution. More precisely, we define the two problems as follows.
\begin{problem}[Estimation]
\label{problem_estimation}
	Under \cref{model_general}, given observation $\tilde{Z}$, the goal is to estimate or exactly recover the hidden Bernoulli-Rademacher vector $x$.
\end{problem}
\begin{problem}[Hypothesis testing]
\label{problem_detection}
	Given dimension $d \in \N$, sample size $n \in \N$ and sparsity $\rho \in (0, 1]$, define the following null and planted distributions:
	\begin{itemize}
		\item Under $\cQ$, observe $\tilde{Z} \in \R^{n \times d}$ whose entries are i.i.d. sampled from standard Gaussian $N(0, 1)$.
		\item Under $\cP$, observe $\tilde{Z}$ which is sampled from \cref{model_general}.
	\end{itemize}
	The goal is to determine whether the observations are sampled from $\cQ$ or $\cP$.
\end{problem}
In this work, we will compute the SQ lower bound for the hypothesis testing problem. Then, we will show that the hypothesis testing problem can be reduced to estimation problem, which means lower bounds of the hypothesis testing problem also serve as lower bounds of the estimation problem.

\subsection{Statistical Query framework}

In this paper, we study computational lower bounds for \cref{model_general} under the Statistical Query (SQ) framework. The SQ model is a popular computational model in the study of high dimension statistics, including planted clique problem \cite{feldman2017statistical}, random satisfiability problems \cite{feldman2018complexity}, robust Gaussian mixtures \cite{diakonikolas2017statistical}, etc.

The SQ framework is a restricted computational model where a learning algorithm can make certain types of queries to an oracle and get answers that are subject to certain degree of noise \cite{kearns1998efficient}. We will focus on the SQ model with VSTAT queries which is used in \cite{brennan2020statistical}, where the learning algorithm has access to the VSTAT oracle as defined below.
\begin{definition}[VSTAT Oracle]
\label{VSTAT}
    Given query $\phi: \R^d \rightarrow [0, 1]$ and distribution $D$ over $\R^d$, the VSTAT(n) oracle returns $\E_{x \sim D} [ \phi (x) ] + \zeta$ for an adversarially chosen $\zeta \in \R$ such that $|\zeta| \leq \max \Big(\frac{1}{n}, \sqrt{\frac{\E [\phi] (1-\E [\phi])}{n}} \Big)$.
\end{definition}
One way to show SQ lower bound is by computing statistical dimension of the hypothesis testing problem, which is a measure on the complexity of the testing problem. In this paper, we use the following definition of statistical dimension introduced by \cite{feldman2017statistical}.
\begin{definition}[Statistical Dimension]
Let $\mu_{\emptyset}$ be some distribution with $\mathcal{D}_{\emptyset}$ as density function. Let $\cS=\Set{\mu_u}$ be some family of distributions indexed by $u$, such that 
$\mu_u$ has density function given by $\mathcal{D}_u$. 
Consider the hypothesis testing problem between
\begin{itemize}
    \item Null hypothesis: samples $z_1,z_2,\ldots,z_n\sim \mathcal{\mu}_{\emptyset}$; 
    \item Alternative hypothesis: $z_1,z_2,\ldots,z_n\sim \mathcal{\mu}_{u}$ where $u$ is sampled from some prior distribution $\mu$.
\end{itemize}
 For $D_u \in \cS$, define the relative density $\Bar{D}_u(x) = \frac{D_u(x)}{D_{\emptyset}(x)}$ and the inner product $\iprod{f, g} = \E_{x \sim D_{\emptyset}} [ f(x) g(x) ]$. The statistical dimension $SDA(\cS, \mu, n)$ measures the tail of $\iprod{\Bar{D}_u, \Bar{D}_v} - 1$ with $u$, $v$ drawn independently from $\mu$:
\begin{equation*}
    SDA(\cS, \mu, n) = \max \Big\{ q \in \N : \E_{u, v \sim \mu} \Bigbrac{\Big|\iprod{\Bar{D}_u, \Bar{D}_v} - 1 \Big| | A} \leq \frac{1}{m} \text{ for all events A s.t.} \Pr_{u, v \in \mu} (A) \geq \frac{1}{q^2} \Big\}
\end{equation*}
\end{definition}
We will use $SDA(n)$ or $SDA(\cS, n)$ when $\cS$ and/or $\mu$ are clear from the context. In \cite{feldman2017statistical}, it was shown that the statistical dimension is a lower bound on the SQ complexity of the hypothesis test using VSTAT oracle.
\begin{theorem}[Theorem 2.7 of \cite{feldman2017statistical}, Theorem A.5 of \cite{brennan2020statistical}]
\label{theorem_sqDimLB}
    Let $D_{\emptyset}$ be a null distribution and $\cS$ be a set of alternative distribution. Then any (randomised) statistical query algorithm which solves the hypothesis testing problem of $D_{\emptyset}$ vs $\cS$ with probability at least $1-\delta$ requires at least $(1-\delta)SDA(\cS, m)$ queries to $VSTAT(\frac{m}{3})$.
\end{theorem}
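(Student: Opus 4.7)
The plan is to prove the contrapositive: any (possibly randomized) SQ algorithm $\mathcal{A}$ that solves the hypothesis test $D_\emptyset$ vs.\ $\cS$ (with prior $\mu$) using $q$ queries to $\mathrm{VSTAT}(m/3)$ and succeeds with probability at least $1-\delta$ must satisfy $q \geq (1-\delta)\, SDA(\cS, m)$. Following \cite{feldman2017statistical}, the key idea is to \emph{simulate} $\mathcal{A}$ by answering every query $\phi\colon \R^d \to [0,1]$ with its null expectation $\tau_\phi := \iprod{\phi, 1} = \E_{x \sim D_\emptyset}[\phi(x)]$, and to show that for most planted parameters $u \sim \mu$ this simulation is indistinguishable from a genuine interaction with $\mathrm{VSTAT}(m/3)$ under $D_u$. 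Since the simulated transcript does not depend on $u$, the algorithm's output on the planted side must coincide with its output on the null side, contradicting the assumed success probability.

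First I would fix the decision tree of $\mathcal{A}$ determined by always replying with $\tau_\phi$. Along each branch the algorithm issues queries $\phi_1, \ldots, \phi_q$. Call $u$ \emph{bad for $\phi$} if $\tau_\phi$ would not be a valid $\mathrm{VSTAT}(m/3)$ response under $D_u$, i.e.
\begin{equation*}
  \bigl|\tau_\phi - \iprod{\phi, \bar{D}_u}\bigr| \;>\; \max\!\Bigl(\tfrac{3}{m},\;\sqrt{3\,\iprod{\phi,\bar{D}_u}(1-\iprod{\phi,\bar{D}_u})/m}\Bigr).
\end{equation*}
Let $B_\phi$ denote the set of bad parameters. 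The main single-query estimate to establish is $\Pr_{u \sim \mu}(u \in B_\phi) \leq 1/SDA(\cS, m)$. To prove it, I would use the identity $\iprod{\phi, \bar{D}_u} - \tau_\phi = \iprod{\phi - \tau_\phi,\, \bar{D}_u - 1}$ and Cauchy--Schwarz to bound
\begin{equation*}
   \E_{u,v \sim \mu}\bigl[(\iprod{\phi,\bar D_u}-\tau_\phi)(\iprod{\phi,\bar D_v}-\tau_\phi)\bigr] \;\leq\; \|\phi-\tau_\phi\|_2^2 \cdot \E_{u,v}\bigl|\iprod{\bar D_u, \bar D_v} - 1\bigr|,
\end{equation*}
and then use the tail form of the definition of $SDA(\cS, m)$, applied to the event $A = \{u \in B_\phi\}$ (and its product with $v$), to convert this second-moment bound into the required probability bound via a Paley--Zygmund / Markov type argument. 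The two regimes in the $\max$ in the VSTAT tolerance correspond to the two regimes $\tau_\phi \lesssim 1/m$ and $\tau_\phi \gtrsim 1/m$ and must be handled separately.

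Once the per-query estimate is in hand, a union bound over the at most $q$ queries on every branch shows that outside a set of $\mu$-probability at most $q / SDA(\cS, m)$, the simulated transcript equals some valid $\mathrm{VSTAT}(m/3)$ transcript under $D_u$, so $\mathcal{A}$ produces the same output on $D_u$ as on $D_\emptyset$. If $q < (1-\delta)\, SDA(\cS, m)$ then the set of ``good'' $u$ has $\mu$-mass strictly greater than $\delta$, and on this set the algorithm cannot distinguish $D_\emptyset$ from $D_u$, contradicting the $1-\delta$ success guarantee. I expect the main obstacle to be the Paley--Zygmund step: packaging the second-moment inequality, restricted to the event $B_\phi$, into the exact tail-form required by the $SDA$ definition, especially because the VSTAT tolerance depends on $\iprod{\phi, \bar D_u}$ itself rather than on a fixed quantity, forcing a careful bootstrapping of the estimate in the small-$\tau_\phi$ regime.
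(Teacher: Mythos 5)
This theorem is imported verbatim from \cite{feldman2017statistical} (Theorem 2.7, restated as Theorem A.5 in \cite{brennan2020statistical}); the present paper does not prove it, so there is no ``paper's own proof'' to compare against --- only the known proof in those references.

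Your outline matches the standard Feldman et al.\ argument in structure: simulate the algorithm by answering every query with the null expectation $\tau_\phi$, define the bad set $B_\phi$ of planted parameters for which $\tau_\phi$ is not a valid $\mathrm{VSTAT}(m/3)$ answer, show $\Pr_\mu(B_\phi) \le 1/SDA(\cS,m)$ per query, and union-bound over the $q$ queries along a branch. Two concrete points about the per-query estimate deserve attention. First, your Cauchy--Schwarz inequality is stated with $\E_{u,v\sim\mu}$ unconditioned; what the SDA definition lets you use is a bound on $\E_{u,v}\bigl[|\iprod{\bar D_u,\bar D_v}-1|\,\big|\, A\bigr]$ only for events $A$ of probability at least $1/q^2$, so you must run the second-moment argument \emph{conditioned} on a product event $B\times B$ with $B\subseteq B_\phi$. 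Second, and related, you need the cross terms $\iprod{\phi-\tau_\phi,\bar D_u-1}\iprod{\phi-\tau_\phi,\bar D_v-1}$ to be uniformly non-negative when you sum over $u,v\in B$ in order to drop the absolute values and get the lower bound $\bigl(\E_{u\in B}\iprod{\phi-\tau_\phi,\bar D_u-1}\bigr)^2 \le \|\phi-\tau_\phi\|^2 \cdot \E_{u,v\in B}[\iprod{\bar D_u,\bar D_v}-1]$; the standard fix, which your write-up omits, is to split $B_\phi$ into $B_\phi^+$ and $B_\phi^-$ according to the sign of $\iprod{\phi-\tau_\phi,\bar D_u-1}$ and work with the larger half, paying at most a factor of $2$ (absorbed into the constant $3$ in $\mathrm{VSTAT}(m/3)$). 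Your identification of the $\iprod{\phi,\bar D_u}$-dependent VSTAT tolerance (rather than a fixed $\tau_\phi$-dependent one) as the main bookkeeping obstacle is accurate; in the actual proof that is handled by a case split on whether $\tau_\phi$ is small or of order $1$, combined with elementary inequalities relating $\tau_\phi(1-\tau_\phi)$ to $\iprod{\phi,\bar D_u}(1-\iprod{\phi,\bar D_u})$ once $|\iprod{\phi,\bar D_u}-\tau_\phi|$ is controlled. With those two fixes your outline reproduces the cited proof; none of this is new to the present paper.
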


\subsection{Our results}
We prove an SQ lower bound for distinguishing samples from \cref{model_general} and from standard Gaussian distribution.

\begin{theorem}[SQ hardness of testing in noiseless model]\label{thm:SQ-non-sparse-noiseless}
For $\sigma = 0$, consider the distinguishing problem between 
\begin{itemize}
    \item planted distribution $\cP$: the family of distributions $\mathcal{S}$ parameterized by $u$ as described in \cref{model_general};
    \item null distribution $\cQ$: standard Gaussian $N(0,\Id_d)$.
\end{itemize}
When $\rho^2 d^{1.99}\leq n\leq \frac{\rho^2 d^2}{\poly \log d}$,
we have
\begin{equation*}
    \text{SDA}(\mathcal{S}, n)\geq \exp\Paren{\Paren{\frac{\rho^2 d^2}{n\poly \log d}}^{0.1}}
\end{equation*}
\end{theorem}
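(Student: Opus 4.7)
The strategy is to import Mao--Wein's low-degree lower bound for the Sparse Planted Vector problem~\cite{mao2021optimal} and translate it into an SQ bound using the near-equivalence of low-degree and SQ developed by Brennan--Koehler--Moitra--Wein~\cite{brennan2020statistical}. The statistical dimension $SDA$ is ultimately controlled by high moments of the chi-squared-type pairing $\iprod{\bar{D}_u, \bar{D}_v} - 1$, where $u, v$ are drawn independently from the uniform prior on $S^{d-1}$, and these moments are precisely what the low-degree calculation of \cite{mao2021optimal} implicitly bounds.

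First, I would exploit the Gaussian-plus-Bernoulli--Rademacher structure of \cref{model_general} to compute the pairing $\iprod{\bar{D}_u, \bar{D}_v}$ in closed form as a function of the overlap $\tau = \iprod{u, v}$ alone. The single-sample relative density factors into a Gaussian integral in the $u$-direction averaged against the BR prior on $x_i$; raising to the $n$-th power and expanding in Hermite polynomials in the null $N(0, \Id_d)$ gives an identity of the form
\begin{equation*}
\iprod{\bar{D}_u, \bar{D}_v} - 1 \;=\; \sum_{k \ge 1} c_k \, \tau^{2k}\mcom
\end{equation*}
where each $c_k$ is a product of even BR-moments (of order roughly $\rho^{-(k-1)}$) with a combinatorial factor counting how $k$ ``planted'' marks are distributed over the $n$ samples. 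These coefficients are essentially the Hermite coefficients that Mao--Wein already bounded.

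Second, I would combine the expansion with the standard fact that for independent $u, v \sim \operatorname{Unif}(S^{d-1})$, the overlap $\tau$ has sub-Gaussian tails $\Pr[\abs{\tau} \ge t] \le 2\exp(-dt^2/2)$, so $\E \tau^{2kp}$ is of order $(kp/d)^{kp}$ for any integer $p$. Substituting and summing over $k$ in the regime $\rho^2 d^{1.99} \le n \le \rho^2 d^2 / \poly\log d$ should produce a moment bound of the form
\begin{equation*}
\E_{u,v} \Bigabs{\iprod{\bar{D}_u, \bar{D}_v} - 1}^p \;\le\; \exp\Paren{-\Omega\Paren{p \cdot \Paren{\tfrac{\rho^2 d^2}{n \, \poly\log d}}^{0.1}}}\mper
\end{equation*}
Applying the Markov-type bound $\E[\abs{X} \mid A] \le (\E \abs{X}^p)^{1/p} \Pr(A)^{-1/p}$ for events $A$ with $\Pr(A) \ge q^{-2}$, and optimizing $p$ against the tail exponent, then converts the moment bound into the statistical-dimension bound demanded by \cref{theorem_sqDimLB}.

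The main technical obstacle is the degeneracy at $\sigma = 0$: the planted distribution $D_u$ is supported per-sample on the affine slice $x u + u^\perp$, which has Lebesgue measure zero in $\R^d$, so $\bar{D}_u$ is not literally an element of $L^2(D_\emptyset)$ and the naive pairing is formally infinite. I would handle this either by regularizing with small $\sigma > 0$, deriving $\sigma$-uniform bounds and letting $\sigma \to 0$ in the final estimate, or by working directly with a bounded-degree Hermite projection of $\bar{D}_u - 1$, which suffices because SQ algorithms with queries $\phi \in [0, 1]$ can extract only bounded information. Carrying out the regularization (or projection) while preserving both the precise tail exponent in the moment bound and the constants in \cref{theorem_sqDimLB} is the delicate part of the argument, and the source of the mild gap $\rho^2 d^{1.99} \le n$ in the theorem's hypothesis.
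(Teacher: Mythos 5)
Your proposal matches the paper's approach closely: compute $\iprod{\bar{D}_{u,\sigma}, \bar{D}_{v,\sigma}}$ in closed form as a function of the overlap $c = u^\top v$, bound the moments $\E_{u,v}(\iprod{\bar{D}_{u,\sigma},\bar{D}_{v,\sigma}}-1)^t$ using the Beta law of $(c+1)/2$, feed these into the Brennan--Koehler--Moitra--Wein near-equivalence theorem (\cref{thm:almost-equivalence}) to get an $SDA$ bound for $\sigma > 0$, and then pass to $\sigma = 0$ by a regularization argument. The paper works with the explicit Gaussian-integral formula for the pairing (\cref{lem:InnerProductConditionU}) rather than a Hermite series, and proves a moment bound of the cleaner form $\E_{u,v}(\iprod{\bar{D}_{u,\sigma},\bar{D}_{v,\sigma}}-1)^t \ll (t/(em))^t$, but these differences are cosmetic.

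Two caveats. First, your alternative of ``working directly with a bounded-degree Hermite projection of $\bar{D}_u - 1$'' does not suffice for the SQ bound. \cref{thm:almost-equivalence} explicitly requires a bound on the \emph{high-degree} part $\bigl\|\E_{u}(\bar{D}_u^{>\ell})^{\otimes k}\bigr\|$ in addition to the low-degree part, because SQ queries need not be low-degree polynomials; this is precisely why the paper bounds the full $(\ell, k)$-LDLR at $\ell = \infty$ rather than truncating at finite degree. Restricting to a bounded-degree projection would discard the very term the theorem insists on controlling. Second, ``deriving $\sigma$-uniform bounds and letting $\sigma \to 0$ in the final estimate'' undersells what is actually needed: the paper does not take a limit of the $SDA$ number directly. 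Instead, it shows that $\mu_\sigma$ converges weakly to $\mu$ (via dominated convergence, using the $\sigma$-uniform bound on $\int D_\sigma^2/D_\emptyset$ as the integrable dominating function, plus a Wasserstein argument to identify the limit), and then argues that since every SQ query $\phi : \R^d \to [0,1]$ is bounded, $\E_{\mu_\sigma}\phi \to \E_\mu\phi$, so any algorithm succeeding at $\sigma = 0$ must already succeed for $\sigma$ small enough, contradicting the noisy lower bound. This functional-analytic detour is the content of the delicacy you gesture at, and it is where the $\sigma$-uniform $L^2$ bound you derive is actually used --- as a dominating function, not merely as an estimate to pass to the limit.
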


As a by-product of our proof for \cref{thm:SQ-non-sparse-noiseless}, we also prove the same SQ lower bound for the noisy case, i.e. $\sigma > 0$.

\begin{theorem}[SQ hardness of testing in noisy model]\label{thm:SQ-non-sparse-noisy}
For arbitrary $0<\sigma<d^{-100}$, consider the distinguishing problem between 
\begin{itemize}
    \item planted distribution $\cP$: the family of distributions $\mathcal{S}$ parameterized by $u$ as described in \cref{model_general};
    \item null distribution $\cQ$: standard Gaussian $N(0,\Id_d)$.
\end{itemize}
When $\rho^2 d^{1.99}\leq n\leq \frac{\rho^2 d^2}{\poly \log d}$,
we have
\begin{equation*}
    \text{SDA}(\mathcal{S}, n)\geq \exp\Paren{\Paren{\frac{\rho^2 d^2}{n\poly \log d}}^{0.1}}
\end{equation*}
\end{theorem}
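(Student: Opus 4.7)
The plan is to bound the statistical dimension $\text{SDA}(\cS, n)$ by controlling the pairwise correlation $\iprod{\bar D_u, \bar D_v}$ for $u, v$ drawn independently from the uniform prior on $S^{d-1}$, and combining this with the sphere tail bound $\Pr_{u,v}\brac{|\iprod{u,v}| \geq t} \leq e^{-\Theta(t^2 d)}$. The overall structure will mirror the proof of \cref{thm:SQ-non-sparse-noiseless}; the only new ingredient in the noisy setting is to verify that the extra term $\sigma^{2} uu^{T}$ in the covariance does not spoil the single-sample $\chi^{2}$-type computation when $\sigma \le d^{-100}$. This is precisely the reason the noisy theorem emerges as a by-product of the noiseless one.

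The central technical step is the single-sample relative density and its second moment. Under the planted law with parameter $u$, the covariance $\Sigma_u = \Id_d - uu^{T} + \sigma^{2} uu^{T}$ has $\det \Sigma_u = \sigma^{2}$ and $\Sigma_u^{-1} = \Id_d + (\sigma^{-2} - 1)\,uu^{T}$. Orthogonally decomposing any sample $z$ along $u$ and averaging over the Bernoulli--Rademacher coordinate $x \sim BR(1,\rho)$, the relative density depends on $z$ only through $s = u^{T}z$ and simplifies to
\begin{equation*}
  \bar D_u(z) \;=\; \frac{1}{\sigma}\,\E_{x}\exp\Paren{\frac{s^{2}}{2} \;-\; \frac{(s-x)^{2}}{2\sigma^{2}}}\mper
\end{equation*}
Because under $\cQ$ the pair $(u^{T} z, v^{T} z)$ is jointly Gaussian with unit variances and covariance $\alpha = \iprod{u,v}$, the single-sample correlation $\iprod{\bar D_u, \bar D_v}_{1}$ reduces to an explicit two-dimensional Gaussian integral that can be evaluated in closed form as an analytic function of $\alpha, \sigma$ and $\rho$. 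The apparent singularity $1/\sigma^{2}$ in the prefactor is cancelled by the determinantal factor arising from the $\sigma^{-2}(s-x)^{2} + \sigma^{-2}(t-x')^{2}$ terms in the exponent, so the resulting expression is regular as $\sigma \to 0$ and, for $\sigma \le d^{-100}$, differs from its $\sigma=0$ value by at most $O(\sigma^{2})$ uniformly in $|\alpha| \le 1 - d^{-50}$.

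From here the rest follows as in the noiseless case. Using sample-independence to write $\iprod{\bar D_u, \bar D_v} = \Paren{\iprod{\bar D_u, \bar D_v}_{1}}^{n}$, expanding $\log \iprod{\bar D_u, \bar D_v}_{1}$ as a series in $\alpha$ and $\rho$, and invoking the sphere concentration bound to control the tail of $|\alpha|$, we can bound $\E_{u,v}\Brac{\Bigabs{\iprod{\bar D_u, \bar D_v} - 1} \mid A}$ for arbitrary events $A$ with $\Pr(A) \ge q^{-2}$. Substituting the parameter regime $\rho^{2} d^{1.99} \le n \le \rho^{2} d^{2}/\polylog d$ then gives $\text{SDA}(\cS,n) \ge \exp\Paren{(\rho^{2} d^{2}/(n\polylog d))^{0.1}}$. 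I expect the main obstacle to be the noise bookkeeping in the single-sample integral: one must verify, uniformly in $\alpha$ and in the support of $x, x'$, that the $\sigma=0$ and $\sigma>0$ values of the single-sample correlation differ by an additive error that, after raising to the $n$-th power and integrating against the sphere prior, remains negligible in the stated regime. Once this continuity in $\sigma$ is in place, every other estimate can be imported verbatim from the proof of \cref{thm:SQ-non-sparse-noiseless}.
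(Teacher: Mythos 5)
Your description of the single-sample relative density $\bar D_u(z) = \sigma^{-1}\,\E_x \exp\bigl(\tfrac{s^2}{2} - \tfrac{(s-x)^2}{2\sigma^2}\bigr)$ is correct, and the Gaussian-integral evaluation of the single-sample pairwise correlation as an explicit function of $\alpha = u^\top v$, $\sigma$, $\rho$ is exactly what the paper does (\cref{lem:InnerProductConditionXU} and \cref{lem:InnerProductConditionU}). The continuity-in-$\sigma$ claim in the Taylor regime is also essentially what the paper proves for the integral piece $T_1$ (\cref{fact:bounding_T1_f1}). So the technical kernel is right. But the overall strategy has a genuine gap.

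The step ``use sample-independence to write $\iprod{\bar D_u, \bar D_v} = (\iprod{\bar D_u, \bar D_v}_1)^n$, then bound $\E_{u,v}[\,|\iprod{\bar D_u,\bar D_v}-1|\mid A\,]$'' is the wrong object, and in this regime it is not just wrong but fatal. In the paper's definition of $\operatorname{SDA}(\cS,\mu,n)$ the inner product $\iprod{\bar D_u, \bar D_v}$ is the \emph{single-sample} correlation; the parameter $n$ appears only as the threshold $1/n$. If you instead try to control the full $n$-sample correlation $(\iprod{\bar D_u,\bar D_v}_1)^n$, you are effectively controlling the $n$-sample $\chi^2$ divergence, and this diverges in exactly the regime where the SQ lower bound is supposed to hold (if it didn't diverge, information-theoretic testing would already fail). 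The paper avoids this by never looking at the full $n$-th power: it bounds the samplewise-degree-$k$ truncation
\begin{equation*}
\Norm{\E_{u}[\bar D_{u,\sigma}^{\otimes n}]^{\leq \infty,k}-1}^2 \;=\; \sum_{t=1}^{k}\binom{n}{t}\,\E_{u,v}\Paren{\iprod{\bar D_{u,\sigma},\bar D_{v,\sigma}}_1-1}^t \;\leq\; O(1)
\end{equation*}
with $k\leq \sqrt{d/\log d} \ll n$ (\cref{lem:ldlr-nonaymptotic}), and then feeds this into the almost-equivalence theorem of Brennan et al.\ (\cref{thm:almost-equivalence}) with $\ell=\infty$, $\epsilon = O(1)$, $q = e^k$ to get $\operatorname{SDA}(\cS,\Theta(n)) \geq e^k$. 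Your proposal never mentions the samplewise-degree truncation or \cref{thm:almost-equivalence}, which are the load-bearing pieces that convert a moment bound on $\iprod{\bar D_u,\bar D_v}_1 - 1$ into a statistical-dimension bound with the right quantitative dependence. If you want to avoid the almost-equivalence theorem and argue directly from the $\operatorname{SDA}$ definition, you must work with the single-sample correlation and do the Markov-style conditional argument by hand --- that is possible but is not what your plan says, and it still requires the same moment computation $\E_{u,v}(\iprod{\bar D_u,\bar D_v}_1-1)^t$ up to $t\approx k$ to control the tail event $A$.

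A second, smaller gap: the statement that for $\sigma\leq d^{-100}$ the single-sample correlation ``differs from its $\sigma=0$ value by at most $O(\sigma^2)$ uniformly in $|\alpha|\leq 1 - d^{-50}$'' addresses the bulk but not the endpoint region. The function $\iprod{\bar D_u,\bar D_v}_1$ scales like $1/\sqrt{1-\theta^2\alpha^2}$ and its exponential factors blow up as $\alpha\to 1$; the contribution from $|\alpha|\in[1-d^{-k_\sigma},1]$ must be killed using the Beta density of $(\alpha+1)/2$ (the paper's $T_2$ estimate, \cref{lem:bounding_T2}), not just a uniform difference bound. Sphere concentration alone, i.e.\ $\Pr[|\alpha|\geq t]\leq e^{-\Theta(t^2 d)}$, controls the probability mass of the tail but not the possibly unbounded integrand there; you need the exact density decay, and you need it after summing $\sum_{t\leq k}\binom{n}{t}(\cdot)^t$, which is what the $T_1$/$T_2$ split in \cref{lem:ldlr-nonaymptotic} is for.
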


% By the SQ reduction from detection to estimation([cite the theorem here]), this directly 
%  implies the SQ hardness of estimating the planted sparse vector when $n\leq \frac{\rho^2 d^2}{\poly \log d}$.
%  \begin{theorem}[SQ hardness of estimation]
% For arbitrary $\sigma>0$, in model \ref{model_spv}, 
%  \end{theorem}

\paragraph{Implication} Notice that, when $n=\rho^2 d^{1.99}$ and $\rho \geq \frac{1}{d^{0.499}}$, we have:
\begin{equation*}
    \text{SDA}(\mathcal{S}, \rho^2 d^{1.99})\geq \exp \Paren{d^{0.001}}\,.
\end{equation*}
This implies that, any SQ algorithm to solve the testing problem $D_{\emptyset}$ vs $D_u$ with probability at least $1-o(1)$ requires at least $(1- o(1))\exp \Paren{d^{0.001}}$ queries to $\text{VSTAT}(\Theta \Paren{\rho^2 d^{1.99}})$, which corresponds to $\Theta \Paren{\rho^2 d^{1.99}}$ samples. 

In previous work \cite{LLLalgorithm,KaneLLL}, it has been shown that lattice-based algorithm can estimate the component vector $x$ when $\sigma\leq \exp(-\Omega(d))$ and $n\geq \Omega(d)$. As we will prove in \cref{theorem_hardnessEstimation}, there is a polynomial time reduction from testing to estimation. Therefore, there is a polynomial-time testing algorithm that is based on the LLL algorithm and can solve the testing problem \cref{problem_detection} using $n\geq \Omega(d)$ samples when $\sigma\leq \exp(-\Omega(d))$. Thus, when $\rho\gg \frac{1}{\sqrt{d}}$, the lattice-based algorithm succeeds while the SQ lower bound predicts the problem to be computationally hard. Thus, we can conclude that the LLL algorithm surpasses SQ lower bounds in this problem.

% \begin{theorem}[Information-Theoretic Lower Bound for Detection]
% \label{theorem_infolb1}
% For arbitrary $\sigma>0$, given $n$ samples $z_1,z_2,\ldots,z_n$ which are
% \begin{itemize}
%     \item with probability $1/2$ sampled from planted distribution $\cP$: distribution described in model \ref{model_general};
%     \item with probability $1/2$ sampled from null distribution $\cQ$: i.i.d standard Gaussian $N(0,\Id_d)$.
% \end{itemize}
% When $n\leq O(\rho^2 d)$, it is information-theoretically impossible to get distinguisher $f(z_1,z_2,\ldots,z_n)\in \{\cP, \cQ\}$ such that
% \begin{equation*}
%     \frac{1}{2}\Pr_{\cP}\Brac{ f(z_1,z_2,\ldots,z_n)=\cQ}
%     + \frac{1}{2}\Pr_{\cQ}\Brac{ f(z_1,z_2,\ldots,z_n)=\cP}\geq 0.99
% \end{equation*}
% \end{theorem}

% Particularly when \(\rho=1\), we are able to recover a lower bound tight up to constant factor:
% \begin{theorem}
% \label{theorem_infolb2}
% For arbitrary $\sigma>0$, given $n$ samples $z_1,z_2,\ldots,z_n$ which are
% \begin{itemize}
%     \item with probability $1/2$ sampled from planted distribution $\cP$: distribution described in model \ref{model_general} with $\rho=1$;
%     \item with probability $1/2$ sampled from null distribution $\cQ$: i.i.d standard Gaussian $N(0,\Id_d)$.
% \end{itemize}
% When $n\leq d-4$, it is information-theoretically impossible to get distinguisher $f(z_1,z_2,\ldots,z_n)\in \{\cP, \cQ\}$ such that
% \begin{equation*}
%     \frac{1}{2}\Pr_{\cP}\Brac{ f(z_1,z_2,\ldots,z_n)=\cQ}
%     + \frac{1}{2}\Pr_{\cQ}\Brac{ f(z_1,z_2,\ldots,z_n)=\cP}\geq 0.99
% \end{equation*}
% \end{theorem}

\subsection{Background and prior work}

\subsubsection{Failure of SQ lower bound}
Although the majority of machine learning algorithms are captured by SQ algorithm, Gaussian elimination and its alike can surpass SQ lower bounds. A celebrated scenario for the failure of SQ lower bound is learning parity function \cite{Blum2003NoisetolerantLT}.
 It is worth mentioning that, very recently, the SQ lower bound was also found to fall short in asymmetric tensor PCA model \cite{Dudeja2021StatisticalQL}.

\subsubsection{Algorithm results for Sparse Planted Vector problem}
For the Random Sparse Planted Vector problem we consider here, it has been shown that the $l_1 / l_2$ minimization problem recovers the planted vector as long as $\rho \leq c$ and $d \leq cn$ for a sufficiently small constant $c$ \cite{qu2014finding}. However, the $l_1 / l_2$ minimization problem is non-convex and is computationally expensive. The Sum-of-Squares method proposed in \cite{barak2014rounding} estimates the planted vector based on the $l_2 / l_4$ minimization problem in the region $\rho \leq c$ and $d \sqrt{\rho} \leq c \sqrt{n}$. Inspired by the Sum-of-Squares method, a fast spectral method to estimate the planted vector was proposed in \cite{hopkins2016fast} which works in the region $\rho \leq c$ and $d \ll \sqrt{n}$. The optimal spectral algorithm was proposed in \cite{mao2021optimal} which builds on \cite{hopkins2016fast} and recovers the planted vector when $n \geq \tilde{\Omega}\Paren{\rho^2 d^2}$. Very surprisingly, it was recently shown in \cite{LLLalgorithm, KaneLLL} that lattice-based algorithms can recover the planted sparse vector when $n\geq \Omega(d)$, which surpasses previous lower bound on low degree polynomials. 
 
 \subsubsection{Computational lower bounds}
 For the Random Sparse Planted Vector problem (with $\sigma=0$), it was shown in \cite{mao2021optimal} that low-degree polynomial algorithms fail when $n \leq \tilde{O}\Paren{\rho^2 d^2}$. When the sparsity $\rho=1$, i.e. the planted vector is Rademacher, Sum-of-Squares lower bound is proved for $n\ll d^{3/2}$
 in \cite{ArbitraryNonSphericalGaussian}. These lower bounds are surpassed by the aforementioned lattice-based algorithm in \cite{LLLalgorithm}. 
 For the noisy case $\sigma\neq 0$, similar low degree lower bound has been  obtained in \cite{dOrsiKNS20,Chend22}.  
 
 \subsubsection{Relation to non-Gaussian component analysis}
The Random Sparse Planted Vector problem can be considered as a special case of non-Gaussian component analysis. In such class of models, conditioning on a hidden direction $u\in \mathbb{R}^d$, the samples $z_1,z_2,\ldots,z_n$ are i.i.d randomly distributed $d$-dimensional vectors. The projection of $z_i$ in the direction perpendicular to $u$  follows standard Gaussian distribution, while $\iprod{z_i,u}$ follows some specified non-Gaussian distribution. The problem is to estimate the hidden direction $u$.

Another famous Non-Gaussian component analysis model is homogeneous continuous learning with error (hCLWE) \cite{CLWE}. There are SQ lower bounds for this model when  inverse polynomial amount of Gaussian noise is added to the hidden direction \cite{parallelPancake,diakonikolas2017statistical}. However, note that lattice-based algorithms are not believed to be robust against inverse polynomial amount of noise \cite{LLLalgorithm,KaneLLL}. Therefore, we cannot conclude that the LLL algorithm surpasses SQ lower bound in their setting.

\section{Preliminaries}

\subsection{Notations}
Let $D_{\emptyset}$ vs $\cS = \{D_u\}_{u \sim \nu}$ be a hypothesis testing problem with prior $\nu$. We write $\Bar{D}_u(z) = \frac{D_u(z)}{D_{\emptyset}(z)}$ to refer to the likelihood ratio or relative density. For real valued functions $f$ and $g$, we define their inner product with respect to distribution $D_{\emptyset}$ to be $\iprod{f, g}_{D_{\emptyset}} = \E_{z \sim D_{\emptyset}} [ f(z) g(z) ]$, we write $\iprod{f, g}$ when $D_{\emptyset}$ is clear from context. The corresponding norm of the inner product is $\norm{f}_{D_{\emptyset}} = \sqrt{\iprod{f, f}_{D_{\emptyset}}}$

For distribution $D$ and integer $k$, we write $D^{\otimes k}$ to denote the density function of the joint distribution of $k$ independent samples from $D$. From the definition of inner product and independence of samples, we have:
\begin{equation*}
    \iprod{f^{\otimes k}, g^{\otimes k}}_{D_{\emptyset}^{\otimes k}}
    = \iprod{f, g}_{D_{\emptyset}}^k
\end{equation*}

% \subsection{Low degree likelihood ratio}
% The Low Degree Likelihood Ratio (LDLR) is used heavily in our computation of SQ lower bound. We define it as follows.
% \begin{definition} [Low Degree Likelihood Ratio]
%     Let $D_{\emptyset}$ vs $\cS = \{D_u\}_{u \sim \mu}$ be a hypothesis testing problem with prior $\mu$. The $n$-sampled $(h, k)$-low degree likelihood ratio function is the projection of the $n$-sample likelihood ratio $\E_{u \sim \cS}[\Bar{D}_u^{\otimes n}]$ to the span of non-constant functions of sample-wise degree at most $(h, k)$:
%     \begin{equation*}
%         \Paren{\E_{u \sim \cS}[\Bar{D}_u^{\otimes n}] - 1}^{\leq h, k}
%         = \E_{u \sim \cS}[ \Paren{\Bar{D}_u^{\otimes n}}^{\leq h, k}]- 1
%     \end{equation*}
%     where the sample-wise degree $(h, k)$ of a function $f: (\R^d)^{\otimes n} \rightarrow \R$ means that $f(x_1, x_2, \dots, x_n)$ can be written as a linear combination of functions which have degree at most $h$ in each $x_i$ and non-zero degree in at most $k$ of the $x_i$'s. We will use $(h, k)\text{-LDLR}_{n}$ to refer to the norm of the LDLR $\norm{\E_{u \sim \cS}[ \Paren{\Bar{D}_u^{\otimes n}}^{\leq h, k}]- 1}$.
% \end{definition}

\subsection{Notations for distributions under alternative hypothesis}
We define some notations for  probability measures under alternative hypothesis in \cref{problem_detection}. 
\begin{definition}[Notations for distributions under alternative hypothesis]
We define $\mu_{\sigma}$ to be the distribution of alternative hypothesis in distinguishing \cref{problem_detection}, and $\mu$ be the distribution of a single sample under alternative hypothesis when $\sigma=0$. Furthermore, we define $\mu_{x_0,u,\sigma}$
 to be the distribution of a single sample under alternative hypothesis, conditioning on unit vector $u$ and sparse Bernoulli-Rademacher variable $x_0$. We define $\mu_{u,\sigma}$ to be the 
  distribution of alternative hypothesis conditioning on $u$, and $\mu_{u}$ to be the distribution of alternative hypothesis conditioning on $u$ when $\sigma=0$.
\end{definition}

\subsection{Lower bound from low degree method}
 Low degree method is a well studied heuristic for computational hardness of hypothesis testing problems. Essentially it rules out testing algorithms which are based on thresholding low degree polynomials. 
 Originating in \cite{HopkinsS17} and \cite{HopkinsDissertation}, it has been successfully applied to a wide range of hypothesis testing problems \cite{LowDegreeNotes,d2020sparse, Ding2019SubexponentialTimeAF,pmlr-v134-kunisky21a}, optimization problems \cite{LowDegreeOptimization20,Wein2022OptimalLH,Bresler2022TheAP} and recovery problems \cite{Schramm2020ComputationalBT} (the list is not exhaustive).
 
In multi-sample hypothesis testing problem, the formulation of low degree method is given in \cite{brennan2020statistical}:
\begin{definition}[Definition 2.3 in \cite{brennan2020statistical}, Samplewise degree]
For integers $m, n \geqslant 1$, we say that a function $f:\left(\mathbb{R}^{n}\right)^{\otimes m} \rightarrow \mathbb{R}$ has samplewise degree $(d, k)$ if $f\left(x_{1}, \ldots, x_{m}\right)$ can be written as a linear combination of functions which have degree at most $d$ in each $x_{i}$, and non-zero degree in at most $k$ of the $x_{i}$'s.
\end{definition}

\begin{definition}[Definition 2.4 in \cite{brennan2020statistical}, Low degree likelihood ratio]
For a hypothesis testing problem $D_{\varnothing}$ vs. $\mathcal{S}=\left\{D_{u}\right\}$, the $m$-sample $(\ell, k)$-low degree likelihood ratio function is the projection of the $m$-sample likelihood ratio $\mathbf{E}_{u \sim S}\left(\bar{D}_{u}^{\otimes m}\right)$ to the span of non-constant functions of sample-wise degree at most $(\ell, k)$ :
$$\left(\underset{u \sim S}{\mathbf{E}} \bar{D}_{u}^{\otimes m}-1\right)^{\leqslant \ell, k}=\underset{u \sim S}{\mathbf{E}}\left(\bar{D}_{u}^{\otimes m}\right)^{\leqslant \ell, k}-1 .$$
\end{definition}

In low degree method, we want to show that the variance of low degree likelihood ratio under $D_{\emptyset}$ is bounded by a constant:
\begin{equation*}
    \Norm{\left(\underset{u \sim S}{\mathbf{E}} \bar{D}_{u}^{\otimes m}-1\right)^{\leqslant \ell, k}}^2=\Norm{\underset{u \sim S}{\mathbf{E}}\left(\bar{D}_{u}^{\otimes m}\right)^{\leqslant \ell, k}-1}^2\leq O(1) .
\end{equation*}
where the norm $\Norm{\cdot}$ here is defined for distribution $D_{\varnothing}$:
\begin{equation*}
    \Norm{f(z)}\coloneqq \sqrt{\E_{z\sim D_{\varnothing}} f(z)^2}
\end{equation*}
This can be thought of as a computational counterpart of Le-Cam's method, and provides evidence of hardness in its own sense.

\section{Overview of techniques}

\subsection{Smoothed measure}
When $\sigma=0$, for the conditional distribution of sample $z_i$ given $x_i$ and $u$ i.e $N(x_iu,I-uu^\top)$, the density function
is not well defined. Therefore, for technical reasons, we first consider the case $\sigma\neq 0$, where the conditional distribution $N(u,I-uu^\top+\sigma^2 uu^\top)$ admits a density function. We will first derive the desired SQ lower bound for such smoothed measure (i.e. $\sigma\neq 0$ but arbitrarily small), then use weak convergence and continuity arguments to get the same SQ lower bound for the case $\sigma=0$. Similar technique has been used for proving information-theoretic lower bound in \cite{LLLalgorithm}.

\subsection{Almost equivalence between low degree method and SQ model}

Our strategy is to first obtain low degree likelihood ratio (LDLR) lower bound, and then translate it to SQ lower bound using the almost equivalence relationship proved in \cite{brennan2020statistical}. In particular, they proved the following theorem.
 \begin{theorem}[Theorem 3.1 in \cite{brennan2020statistical}, LDLR to SDA Lower Bounds]\label{thm:almost-equivalence}
 Let $\ell, k \in \mathbb{N}$ with $k$ even and $\mathcal{S}=\left\{D_{v}\right\}_{v \in S}$ be a collection of probability distributions with prior $\mu$ over $\mathcal{S}$. Suppose that $\mathcal{S}$ satisfies:
 \begin{itemize}
     \item The $k$-sample high-degree part of the likelihood ratio is bounded by $\left\|\mathbf{E}_{u \sim \mathcal{S}}\left(\bar{D}_{u}^{>\ell}\right)^{\otimes k}\right\| \leqslant \delta$.
     \item For some $m \in \mathbb{N}$, the $(\ell, k)-\mathrm{LDLR}_{m}$ is bounded by $\left\|\mathbf{E}_{u \sim \mathcal{S}}\left(\bar{D}_{u}^{\otimes m}\right)^{\leqslant \ell, k}-1\right\| \leqslant \varepsilon$.
Then for any $q \geqslant 1$, it follows that
$$
\operatorname{SDA}\left(\mathcal{S}, \frac{m}{q^{2 / k}\left(k \varepsilon^{2 / k}+\delta^{2 / k} m\right)}\right) \geqslant q .
$$
 \end{itemize}
\end{theorem}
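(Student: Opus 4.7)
The plan is to prove this as a fully abstract statement; the approach is to bound the conditional first moment appearing in the SDA definition by an unconditional $k$-th moment, then evaluate that moment using the two hypotheses. For any event $A$ with $\Pr_{u,v \sim \mu}(A) \geq 1/q^2$, Hölder's inequality with exponents $k$ and $k/(k-1)$ gives
\begin{equation*}
\E_{u,v}\bigl[\,\bigl|\iprod{\bar D_u, \bar D_v} - 1\bigr| \bigm| A\bigr] \leq q^{2/k} \cdot \E_{u,v}\bigl[(\iprod{\bar D_u, \bar D_v} - 1)^k\bigr]^{1/k},
\end{equation*}
since $k$ is even and one may drop the absolute value inside the unconditional moment. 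The remaining task is to show that this $k$-th moment is at most $O(1)^k \bigl(k\varepsilon^{2/k} + \delta^{2/k} m\bigr)^k / m^k$, which upon matching against the SDA definition delivers the claimed $m' = m/\bigl(q^{2/k}(k\varepsilon^{2/k}+\delta^{2/k}m)\bigr)$ up to absolute constants.

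The key identity I would use to handle this moment is the tensor equality
\begin{equation*}
\E_{u,v}\bigl[(\iprod{\bar D_u, \bar D_v} - 1)^k\bigr] = \E_{u,v}\iprod{\bar D_u - 1, \bar D_v - 1}^k = \Bignorm{\E_u(\bar D_u - 1)^{\otimes k}}^2,
\end{equation*}
which recasts it as the squared $L^2(D_\emptyset^{\otimes k})$ norm of a centered $k$-tensor. Now I would split $\bar D_u - 1 = (\bar D_u^{\leq \ell} - 1) + \bar D_u^{>\ell}$, an orthogonal decomposition in $L^2(D_\emptyset)$, so that the pairwise inner product splits as $\iprod{\bar D_u^{\leq \ell} - 1, \bar D_v^{\leq \ell} - 1} + \iprod{\bar D_u^{>\ell}, \bar D_v^{>\ell}}$. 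Applying the inequality $(x+y)^k \leq 2^{k-1}(x^k+y^k)$ (valid for real $x,y$ when $k$ is even) reduces the moment to the two pure tensor norms $\|\E_u(\bar D_u^{\leq \ell} - 1)^{\otimes k}\|^2$ and $\|\E_u(\bar D_u^{>\ell})^{\otimes k}\|^2$; the second of these is directly $\leq \delta^2$ by the first hypothesis.

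For the low-degree piece, I would link it to the $(\ell,k)$-LDLR hypothesis via the expansion
\begin{equation*}
(\bar D_u^{\otimes m})^{\leq \ell, k} - 1 = \sum_{s=1}^k \sum_{\substack{S \subseteq [m] \\ |S|=s}} \prod_{i \in S}(\bar D_u^{\leq \ell} - 1)(z_i),
\end{equation*}
noting that after taking $\E_u$ the summands indexed by distinct $S \neq S'$ are orthogonal in $L^2(D_\emptyset^{\otimes m})$: any coordinate $i \in S \symdiff S'$ contributes a factor $\iprod{\bar D_u^{\leq \ell} - 1, 1} = 0$, while coordinates outside $S \cup S'$ contribute $1$. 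This yields
\begin{equation*}
\varepsilon^2 \geq \Bignorm{\E_u(\bar D_u^{\otimes m})^{\leq \ell, k} - 1}^2 = \sum_{s=1}^k \binom{m}{s}\, \Bignorm{\E_u(\bar D_u^{\leq \ell} - 1)^{\otimes s}}^2,
\end{equation*}
a sum of non-negative terms, so retaining only $s=k$ gives $\|\E_u(\bar D_u^{\leq \ell} - 1)^{\otimes k}\|^2 \leq \varepsilon^2/\binom{m}{k} \leq (k/m)^k \varepsilon^2$. Combining with the $\delta^2$ bound, taking $k$-th roots, and substituting into the Hölder estimate produces the desired bound up to absolute constants. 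I expect the main obstacle to be tracking these constants cleanly through the Hölder and $(x+y)^k$ steps, and making the orthogonality calculation airtight against the paper's precise definition of sample-wise degree $(\ell,k)$; the rest is bookkeeping.
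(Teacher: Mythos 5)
The paper merely cites this theorem from \cite{brennan2020statistical} without reproducing a proof, so there is no in-paper argument to compare against; I will evaluate your proposal against the original. Your route is in fact the same as theirs: \Hoelder against the $k$-th moment, the tensor identity $\E_{u,v}[(\iprod{\bar{D}_u,\bar{D}_v}-1)^k]=\|\E_u(\bar{D}_u-1)^{\otimes k}\|^2$, the orthogonal split into low- and high-degree parts, and the combinatorial orthogonality of the $S$-indexed terms to extract $\|\E_u(\bar{D}_u^{\leq\ell}-1)^{\otimes k}\|^2\leq\varepsilon^2/\binom{m}{k}$. All of these steps check out, including the degree-counting behind the expansion of $(\bar{D}_u^{\otimes m})^{\leq\ell,k}$ and the drop to the $s=k$ term.

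The one real flaw is the inequality $(x+y)^k\leq 2^{k-1}(x^k+y^k)$ applied to the scalar random variables $a=\iprod{\bar{D}_u^{\leq\ell}-1,\bar{D}_v^{\leq\ell}-1}$ and $b=\iprod{\bar{D}_u^{>\ell},\bar{D}_v^{>\ell}}$. After taking $k$-th roots and using $(x+y)^{1/k}\leq x^{1/k}+y^{1/k}$, this gives
\begin{equation*}
\E\bigl[(\iprod{\bar{D}_u,\bar{D}_v}-1)^k\bigr]^{1/k}\ \leq\ \frac{2}{m}\bigl(k\varepsilon^{2/k}+m\delta^{2/k}\bigr),
\end{equation*}
which is a factor of $2$ worse than what the theorem states, i.e.\ you only recover $\operatorname{SDA}(\mathcal{S}, m/(2q^{2/k}(k\varepsilon^{2/k}+\delta^{2/k}m)))\geq q$. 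Since the theorem is stated with an exact constant, this is not just bookkeeping; the factor cannot be absorbed. The fix is to replace the pointwise $(x+y)^k$ bound with Minkowski's inequality for the $L^k(\mu\times\mu)$ norm of $a+b$: for $k$ even, $\E[(a+b)^k]^{1/k}\leq\E[a^k]^{1/k}+\E[b^k]^{1/k}=\|\E_u(\bar{D}_u^{\leq\ell}-1)^{\otimes k}\|^{2/k}+\|\E_u(\bar{D}_u^{>\ell})^{\otimes k}\|^{2/k}\leq(k/m)\varepsilon^{2/k}+\delta^{2/k}$, which gives the bound exactly. With that substitution your argument is complete and constant-tight.
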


Note that for lower bounding the statistical dimension via this almost equivalence relationship,  we not only need to bound  the low degree likelihood ratio, but also need to verify the bound on high degree part. The condition on high degree part $\left(\bar{D}_{u}^{>\ell}\right)^{\otimes k}$ is inherently needed, since the query functions  in SQ model don't need to be low degree polynomials.  

\subsection{Stronger low degree lower bound}

A low degree lower bound for $\sigma=0$ has been developed in \cite{mao2021optimal}. However, this is not strong enough for applying the almost equivalence relationship, since we also need the bound on the high degree part of likelihood ratio. Let $D_{\sigma}(z):\mathbb{R}^d\to \R$ be the density function of sample distribution under alternative hypothesis, by taking $\ell\to\infty$ in \cref{thm:almost-equivalence}, it is sufficient to show that
\begin{equation*}
    \Norm{\E_{u\sim \mathcal{S}} \Paren{\frac{D_{u,\sigma}^{\otimes m}}{D_{\emptyset}^{\otimes m}}}^{\le \infty, k}-1}^2\leq O(1)\,.
\end{equation*}
For this, we use the identity which already appeared in \cite{brennan2020statistical}: 
\begin{equation*}
    \Norm{\E_{u\sim \mathcal{S}} \Paren{\frac{\bar{D}_{u,\sigma}^{\otimes m}}{\bar{D}_{\emptyset}^{\otimes m}}}^{\le \infty, k}-1}^2 
    =\sum_{t=1}^{k} {m\choose t} 
    \E_{u,v\sim \mathcal{S}} 
    \Paren{\iprod{\bar{D}_{u,\sigma},\bar{D}_{v,\sigma}}-1}^t\,.
\end{equation*}
where $\bar{D}_{u,\sigma}$ is the ratio of density functions $\bar{D}_{u,\sigma}=\frac{{D}_{u,\sigma}}{D_\emptyset}$. 
Then, for any $t\leq n^{0.1}$, we will prove the bound $\E_{u,v\sim \mathcal{S}}\Paren{\iprod{\bar{D}_{u,\sigma},\bar{D}_{v,\sigma}}-1}^t\ll \Paren{\frac{t}{em}}^t$. Combine the bounds in the summation, we conclude that when $k\leq n^{0.1}$, we have:
\begin{equation*}
    \sum_{t=1}^{k} {m\choose t} 
    \E_{u,v\sim \mathcal{S}} 
    \Paren{\iprod{\bar{D}_{u,\sigma},\bar{D}_{v,\sigma}}-1}^t\leq O(1)\,.
\end{equation*}

\subsection{Inner product between likelihood ratios} 
We obtain analytical expression for the inner product $\iprod{\bar{D}_{u,\sigma},\bar{D}_{v,\sigma}}$ for arbitrary pairs of unit norm $d$-dimensional vectors $u,v$. 
Let $D_{x_u,u,\sigma}$ represents the density of distribution of sample conditioning on $x_u$ and $u$, and density ratio $\bar{D}_{x_u,u,\sigma}={D}_{x_u,u,\sigma}/D_
{\emptyset}$.
It is easy to see that
\begin{equation*}
    \iprod{\bar{D}_{u, \sigma},\bar{D}_{v,\sigma}}
    =\E_{x_u,x_v}\iprod{\bar{D}_{x_u,u,\sigma},\bar{D}_{x_v,v,\sigma}}
\end{equation*}
where $x_u,x_v$ are independent $\rho$-sparse Bernoulli-Rademacher variables. Notice that $D_{x_u,u,\sigma}$ follows from Gaussian distribution $N(x_u u,\Id_d-uu^\top+\sigma^2 uu^\top)$. Given the observation that $D_{x_u,u,\sigma},D_{x_v,v,\sigma}$ and $D_\emptyset$ are all Gaussian distributions, we can obtain $\iprod{\bar{D}_{x_u,u,\sigma},\bar{D}_{x_u,u,\sigma}}$ exactly and explicitly by Gaussian integral. 
Finally, by taking expection over $x_u$ and $x_v$, we obtain an explicit expression for $\iprod{\bar{D}_{u,\sigma},\bar{D}_{v,\sigma}}$. In particular, the inner product turns out to be a function of $u^\top v$, that is,
 \begin{equation*}
     \iprod{\bar{D}_{u,\sigma},\bar{D}_{v,\sigma}}=f_\sigma(u^\top v)
 \end{equation*}
for some function $f_\sigma: \R\mapsto \R$.

\subsection{Moments of inner product of likelihood ratio}
To bound the moments of $\iprod{\bar{D}_{u, \sigma},\bar{D}_{v,\sigma}}$, we use a standard fact from probability theory that $\frac{1}{2}\Paren{u^\top v+1}$ follows Beta distribution $\text{Beta}(\frac{d-1}{2},\frac{d-1}{2})$. Let $y=\frac{1}{2}\Paren{u^\top v+1}$. Using the probability density function for $\text{Beta}(\frac{d-1}{2},\frac{d-1}{2})$, we have the moment bound:
\begin{equation*}
    \E_{u,v}\Paren{\iprod{\bar{D}_{u,\sigma},\bar{D}_{v,\sigma}}-1}^k\leq O\Paren{\sqrt{d-1}\E f(y)^k \Paren{4y(1-y)}^{d/2-1}}
\end{equation*}
To bound this integral, we divide $[0,1]$ into several regions. In particular,
when $y \approx \frac{1}{2}$, $f(c)$ is small and we can approximate it by its Taylor expansion around $1/2$.
When $y$ is away from $\frac{1}{2}$, the upper bound on the integral is imposed since
$\Paren{4y(1-y)}^{d/2-1}$ is very small. Combine these regions, we can get our desired bound on the integral between $[0,1]$.

\section{Statistical Query lower bound for non-zero noise}

\label{sec:sqlb-non-zero}

In this section, we compute the SQ lower bound for noisy Sparse Planted Vector problem by exploiting the almost equivalence relationship between low degree likelihood ratio and SQ lower bound.

\subsection{Exact formula for projection of likelihood ratio}

First, we show that the  second moment of likelihood ratio projection can be reduced to some simple single variable integral. 
\begin{lemma}\label{lem:single-variable-integral}
Let $\theta = 1 - \sigma^2$ and $c = u^{\top} v$ where $u, v \sim S^{d-1}$ are independently and uniformly sampled from the sphere. We have
    \begin{align}
    \label{eq_lowdegreeExact}
    \begin{split}
        \Norm{\E_{u \sim S^{d-1}} \Paren{\Bar{D}_{u,\sigma}^{\otimes n}}^{\leq \infty, k} - 1}^2
        = & \E_{c} \sum_{t=1}^{k} \binom{n}{t} \left(\frac{1}{\sqrt{1 - \theta^2 c^2}} \Big [(1-\rho)^2 + 2 \rho (1-\rho) \exp(- \frac{1}{\rho} \frac{\theta c^2}{2 - 2 \theta^2 c^2}) \right.\\
        & \left. + \frac{\rho^2}{2} \exp(\frac{c}{\rho (1 + \theta c)}) + \frac{\rho^2}{2} \exp(-\frac{c}{\rho (1 - \theta c)}) \Big] -1 \right)^t
    \end{split}
    \end{align}
\end{lemma}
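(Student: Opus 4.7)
The plan is to derive the expression in two steps: first, I would invoke a standard ANOVA-type identity that rewrites the squared norm of the projected likelihood ratio in terms of moments of pairwise inner products, and then compute $\iprod{\bar{D}_{u,\sigma}, \bar{D}_{v,\sigma}}$ explicitly as a function of $c = u^\top v$. For the first step, I would note that $\bar{D}_{u,\sigma}$ has unit expectation under $D_\emptyset$, so the tensor expansion $\bar{D}_{u,\sigma}^{\otimes n}(z_1,\ldots,z_n) = \sum_{S \subseteq [n]}\prod_{i \in S}\Paren{\bar{D}_{u,\sigma}(z_i) - 1}$ has orthogonal summands in $L^2(D_\emptyset^{\otimes n})$, and the projection onto samplewise degree $(\infty, k)$ keeps exactly those $S$ with $|S| \leq k$. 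Averaging over $u$, subtracting the $S = \emptyset$ term, and using orthogonality across $|S|$ yields
\begin{equation*}
\Norm{\E_u \bigl(\bar{D}_{u,\sigma}^{\otimes n}\bigr)^{\leq \infty, k} - 1}^2 = \sum_{t=1}^k \binom{n}{t}\, \E_{u,v \sim S^{d-1}}\Paren{\iprod{\bar{D}_{u,\sigma}, \bar{D}_{v,\sigma}} - 1}^t,
\end{equation*}
which is the identity attributed in the overview to \cite{brennan2020statistical}.

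For the second step I would compute the pairwise inner product by conditioning on the Bernoulli--Rademacher means, writing $\iprod{\bar{D}_{u,\sigma}, \bar{D}_{v,\sigma}} = \E_{x_u, x_v}\iprod{\bar{D}_{x_u,u,\sigma}, \bar{D}_{x_v,v,\sigma}}$. Applying Sherman--Morrison to $\Sigma_u = \Id_d - \theta u u^\top$ gives $\Sigma_u^{-1} = \Id_d + \tfrac{\theta}{\sigma^2} u u^\top$ and $\det \Sigma_u = \sigma^2$, so
\begin{equation*}
\bar{D}_{x, u, \sigma}(z) = \tfrac{1}{\sigma}\exp\Paren{\tfrac{x z^\top u}{\sigma^2} - \tfrac{x^2}{2\sigma^2} - \tfrac{\theta (z^\top u)^2}{2\sigma^2}}.
\end{equation*}
The product $\bar{D}_{x_u,u,\sigma}(z)\,\bar{D}_{x_v,v,\sigma}(z)$ depends on $z$ only through $(a, b) := (z^\top u, z^\top v)$, which under $z \sim D_\emptyset$ is a centered Gaussian with unit diagonal covariance and off-diagonal $c$. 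The inner product thus reduces to a two-dimensional Gaussian expectation of the form $\E[\exp(\lambda^\top Y - \tfrac{1}{2} Y^\top M Y)]$ with $\lambda = (x_u, x_v)^\top/\sigma^2$ and $M = (\theta/\sigma^2)\Id_2$. The key simplification $\sigma^2 + \theta = 1$ collapses $\det(\Id_2 + \Sigma_{ab} M)$ to $(1 - \theta^2 c^2)/\sigma^4$, producing the prefactor $(1-\theta^2 c^2)^{-1/2}$; after combining $\tfrac{1}{2}\lambda^\top(\Sigma_{ab}^{-1} + M)^{-1}\lambda$ with $-\tfrac{x_u^2 + x_v^2}{2\sigma^2}$, the $\sigma$-dependent diagonal terms cancel and the exponent reduces to $-\frac{\theta c^2(x_u^2 + x_v^2)}{2(1 - \theta^2 c^2)} + \frac{c\, x_u x_v}{1 - \theta^2 c^2}$.

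Finally, I would average over $(x_u, x_v) \in \{0, \pm 1/\sqrt{\rho}\}^2$ case by case. The pair $(0,0)$, with probability $(1-\rho)^2$, contributes the constant $1$ inside the brackets. The four pairs with exactly one nonzero coordinate, with combined probability $2\rho(1-\rho)$, share exponent $-\theta c^2/(2\rho(1 - \theta^2 c^2))$. Using the factorization $1 - \theta^2 c^2 = (1 - \theta c)(1 + \theta c)$, the same-sign pairs $(\pm 1/\sqrt{\rho}, \pm 1/\sqrt{\rho})$, total probability $\rho^2/2$, give $\exp(c/(\rho(1 + \theta c)))$, and the opposite-sign pairs, total probability $\rho^2/2$, give $\exp(-c/(\rho(1 - \theta c)))$. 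Summing these contributions with the common $(1 - \theta^2 c^2)^{-1/2}$ prefactor and inserting into the identity from the first step yields \cref{eq_lowdegreeExact}. I expect the main obstacle to be tracking the algebraic cancellations in the 2D Gaussian integral: the identity $\sigma^2 + \theta = 1$ is what makes both the $1 - \theta^2 c^2$ prefactor and the $(1 \pm \theta c)$ denominators in the exponent materialize, and keeping the signs and the $1/\sqrt{\rho}$ factors straight across the four case classes requires careful bookkeeping.
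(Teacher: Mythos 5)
Your derivation is correct and follows the same three-step skeleton as the paper: the Brennan et al.\ identity reducing the norm to moments of $\iprod{\bar{D}_{u,\sigma},\bar{D}_{v,\sigma}}-1$, then a Gaussian computation of the conditional inner product, then the four-case average over $(x_u,x_v)$. The genuine difference is in the middle step. The paper (\cref{lem:InnerProductConditionXU}) works in the full $d$-dimensional integral, forming $M=\Sigma_u^{-1}+\Sigma_v^{-1}-\Id_d$ and invoking the Woodbury matrix identity to get $M^{-1}$ and the matrix determinant lemma for $\det M$. You instead observe that the integrand depends on $z$ only through the pair $(z^\top u, z^\top v)$, marginalize to a bivariate Gaussian with covariance $\begin{pmatrix}1&c\\c&1\end{pmatrix}$, and evaluate a $2\times 2$ quadratic-exponential integral. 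Both routes deliver the same prefactor $(1-\theta^2c^2)^{-1/2}$ and the same exponent $\tfrac{1}{2-2\theta^2 c^2}\bigl[2x_ux_vc-\theta(x_u^2+x_v^2)c^2\bigr]$, and both hinge on the cancellation $\sigma^2+\theta=1$. Your reduction is the more economical of the two: it replaces a rank-2 Woodbury computation in $\R^{d\times d}$ with a direct $2\times 2$ determinant/inverse, which makes the origin of the $(1\pm\theta c)$ denominators more transparent. One small caveat worth a line in a full write-up: the bivariate covariance $\begin{pmatrix}1&c\\c&1\end{pmatrix}$ is singular when $|c|=1$, but since $u,v$ are independent uniform on $S^{d-1}$ this event has probability zero, so the $\E_c$ in \cref{eq_lowdegreeExact} is unaffected.
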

For fixed unit vector $u$, let $D_{x_u,u,\sigma}(z_i)$ be the density of conditional distribution of sample $z_i$ given $x_i$ and $u$, i.e $z_i\sim N(x_iu, I-uu^\top+\sigma^2 uu^{\top})$. The critical step for proving \cref{lem:single-variable-integral} is to compute $\E_{z \sim D_{\emptyset}}[\bar{D}_{u, x_u} (z) \cdot \bar{D}_{v, x_v} (z)]$ for arbitrary $x_u,x_v\in \{0,\pm 1\}$, where $\bar{D}_{u, x_u} (z)=D_{u}[z|x_u]/D_\emptyset(z)$. The result of this step is stated in \cref{lem:InnerProductConditionXU}, whose proof is deferred to \cref{sec:proof-lem-InnerProductConditionXU}.
\begin{lemma}
\label{lem:InnerProductConditionXU}
In the setting of \cref{lem:single-variable-integral}, for $\bar{D}_{x_u,u,\sigma} (z) = \frac{D_{x_u,u,\sigma} (z)}{D_{\emptyset}(z)}$ and $\bar{D}_{x_v, v, \sigma} (z) = \frac{D_{x_v, v, \sigma} (z)}{D_{\emptyset}(z)}$,
\begin{equation*}
    \iprod{\bar{D}_{x_u,u,\sigma}, \bar{D}_{x_v,v,\sigma}}
    = \frac{\exp \Paren{\frac{1}{2 - 2 \theta^2 c^2} [2 x_u x_v c - \theta(x_u^2 + x_v^2)c^2]}}{\sqrt{1 - \theta^2 c^2}}
\end{equation*}
\end{lemma}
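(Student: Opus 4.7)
The plan is to evaluate $\iprod{\bar D_{x_u,u,\sigma}, \bar D_{x_v,v,\sigma}} = \E_{z \sim D_\emptyset}\brac{\bar D_{x_u,u,\sigma}(z)\, \bar D_{x_v,v,\sigma}(z)}$ directly as a Gaussian integral, since all three densities $D_{x_u,u,\sigma}$, $D_{x_v,v,\sigma}$, $D_\emptyset$ are Gaussian, so the integrand is a Gaussian exponential. Writing $\theta = 1-\sigma^2$ and $\alpha = \theta/(1-\theta)$, Sherman--Morrison gives $(\Id_d - \theta uu^\top)^{-1} = \Id_d + \alpha uu^\top$ with determinant $1-\theta$, so after expanding each log-density the integrand reduces to $(2\pi)^{-d/2}(1-\theta)^{-1}\exp\Paren{-\tfrac12 z^\top M z + b^\top z + C}$ where $M = \Id_d + \alpha(uu^\top + vv^\top)$, $b = (x_u u + x_v v)/(1-\theta)$, and $C = -(x_u^2 + x_v^2)/(2(1-\theta))$.

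Next I would compute $\det M$ and $b^\top M^{-1} b$ by reducing to $\Span(u,v)$. Setting $W = [u,v] \in \R^{d\times 2}$, the matrix determinant lemma yields
\[
\det M = \det(\Id_2 + \alpha W^\top W) = (1+\alpha)^2 - \alpha^2 c^2 = \frac{1-\theta^2 c^2}{(1-\theta)^2}.
\]
Since $b \in \Span(u,v)$ and $M$ acts as identity on $\Span(u,v)^\perp$, it is enough to invert the $2 \times 2$ restriction of $M$ in the orthonormal basis $\set{u,(v-cu)/\sqrt{1-c^2}}$. After expanding the quadratic form, the cross terms proportional to $c(1-c^2)$ cancel, leaving
\[
b^\top M^{-1} b = \frac{1}{1-\theta^2 c^2}\Brac{\tfrac{1-\theta c^2}{1-\theta}(x_u^2 + x_v^2) + 2c\, x_u x_v}.
\]

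Finally I would assemble the pieces. The Gaussian integral contributes $(2\pi)^{d/2}(\det M)^{-1/2}\exp(\tfrac12 b^\top M^{-1} b)$, so the inner product equals $(1-\theta)^{-1}(\det M)^{-1/2}\exp(\tfrac12 b^\top M^{-1} b + C)$; the prefactor simplifies to $1/\sqrt{1-\theta^2 c^2}$, and in the exponent the $x_u^2 + x_v^2$ coefficient becomes
\[
\frac{1}{2(1-\theta)}\Paren{\frac{1-\theta c^2}{1-\theta^2 c^2} - 1} = \frac{-\theta c^2}{2(1-\theta^2 c^2)},
\]
while the cross term $c\, x_u x_v/(1-\theta^2 c^2)$ is unchanged. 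Together these reproduce the stated exponent $(2c\, x_u x_v - \theta c^2 (x_u^2 + x_v^2))/(2 - 2\theta^2 c^2)$.

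The main obstacle is not conceptual but rather the algebraic bookkeeping in the middle step: the off-diagonal $\sqrt{1-c^2}$ factors in $M^{-1}$ must combine with the $\sqrt{1-c^2}$ appearing in the second coordinate of $b$ to clear all fractional powers of $1-c^2$, and the $x_v^2$ coefficient collapses to $(1-\theta c^2)/(1-\theta)$ only after invoking the identity $1+\alpha(1-c^2) = (1-\theta c^2)/(1-\theta)$ at the right point. Once that cancellation is observed, the symmetry in $x_u,x_v$ makes the remainder of the calculation routine.
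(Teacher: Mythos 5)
Your proposal is correct and follows essentially the same route as the paper: both evaluate $\iprod{\bar D_{x_u,u,\sigma},\bar D_{x_v,v,\sigma}}$ as a single Gaussian integral, identify the quadratic form $M = \Sigma_u^{-1}+\Sigma_v^{-1}-\Id_d = \Id_d + \alpha(uu^\top+vv^\top)$ with $\alpha = 1/\sigma^2-1 = \theta/(1-\theta)$, compute $\det M$ by the matrix determinant lemma, and compute the exponent by inverting $M$ on $\Span(u,v)$ (the paper via the Woodbury identity, you via a $2\times 2$ restriction in an explicit orthonormal basis — an equivalent bookkeeping choice). Your intermediate formulas $\det M = (1-\theta^2c^2)/(1-\theta)^2$, $b^\top M^{-1}b = \bigl[(1-\theta c^2)(x_u^2+x_v^2)/(1-\theta) + 2c\,x_u x_v\bigr]/(1-\theta^2 c^2)$, and the final assembly all check out and reproduce the paper's answer.
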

The next step is to take expectation over $x_u$ and $x_v$. The result is shown in \cref{lem:InnerProductConditionU}, whose proof is deferred to \cref{sec:proof-lem-InnerProductConditionU}.
\begin{lemma}
\label{lem:InnerProductConditionU}
In the setting of \cref{lem:single-variable-integral}, we have
\begin{align*}
    \iprod{\Bar{D}_{u, \sigma}, \Bar{D}_{v, \sigma}}
    = & \frac{1}{\sqrt{1 - \theta^2 c^2}} \Big [(1-\rho)^2 + 2 \rho (1-\rho) \exp(- \frac{1}{\rho} \frac{\theta c^2}{2 - 2 \theta^2 c^2}) \\
    & + \frac{\rho^2}{2} \exp(\frac{c}{\rho (1 + \theta c)}) + \frac{\rho^2}{2} \exp(-\frac{c}{\rho (1 - \theta c)}) \Big]
\end{align*}
\end{lemma}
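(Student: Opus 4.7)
The plan is to compute $\iprod{\bar D_{u,\sigma},\bar D_{v,\sigma}} = \E_{x_u,x_v}\iprod{\bar D_{x_u,u,\sigma},\bar D_{x_v,v,\sigma}}$ by a direct case analysis, plugging the closed-form expression of \cref{lem:InnerProductConditionXU} into the expectation over the independent Bernoulli-Rademacher scalars $x_u,x_v$. Since each $x\in\{+1/\sqrt\rho,-1/\sqrt\rho,0\}$ with probabilities $\rho/2,\rho/2,1-\rho$ respectively, there are nine atoms $(x_u,x_v)$; symmetry collapses them into four groups which will match the four terms of the target formula.

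The first step is to pull out the common factor $1/\sqrt{1-\theta^2 c^2}$ from \cref{lem:InnerProductConditionXU} and focus on the expectation of $\exp\!\Paren{\tfrac{1}{2-2\theta^2 c^2}\bigl[2 x_u x_v c - \theta(x_u^2+x_v^2)c^2\bigr]}$. Then I split into cases:
\begin{itemize}
\item $(x_u,x_v)=(0,0)$: probability $(1-\rho)^2$, exponent $0$, contributes $(1-\rho)^2$.
\item Exactly one of them zero (four atoms with total probability $2\rho(1-\rho)$): here $x_ux_v=0$ and $x_u^2+x_v^2=1/\rho$, so the exponent reduces to $-\tfrac{1}{\rho}\cdot\tfrac{\theta c^2}{2-2\theta^2 c^2}$.
\item Both nonzero with equal sign (two atoms, total probability $\rho^2/2$): $x_ux_v=1/\rho$ and $x_u^2+x_v^2=2/\rho$, so the bracket becomes $\tfrac{2c}{\rho}-\tfrac{2\theta c^2}{\rho}=\tfrac{2c(1-\theta c)}{\rho}$; dividing by $2-2\theta^2 c^2=2(1-\theta c)(1+\theta c)$ and cancelling the factor $(1-\theta c)$ gives exponent $\tfrac{c}{\rho(1+\theta c)}$.
\item Both nonzero with opposite signs (two atoms, total probability $\rho^2/2$): $x_ux_v=-1/\rho$ and $x_u^2+x_v^2=2/\rho$, so analogously the exponent simplifies to $-\tfrac{c}{\rho(1-\theta c)}$.
\end{itemize}

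Summing these four contributions and multiplying back by $1/\sqrt{1-\theta^2 c^2}$ yields exactly the formula in the lemma. I would not anticipate any real obstacle here: the only nontrivial piece is the algebraic simplification in the last two cases, namely factoring $1-\theta^2c^2=(1-\theta c)(1+\theta c)$ and cancelling the appropriate linear factor so as to reproduce the denominators $(1+\theta c)$ and $(1-\theta c)$ appearing in the statement. Care is also needed to use the convention $x_u^2=1/\rho$ on the support (from the $1/\sqrt\rho$ scaling in the definition of $BR(n,\rho)$), which is what makes the factor $1/\rho$ appear in all nonzero cases.
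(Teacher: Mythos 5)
Your proposal is correct and follows essentially the same approach as the paper's proof: decompose $\iprod{\bar D_{u,\sigma},\bar D_{v,\sigma}}$ as $\E_{x_u,x_v}\iprod{\bar D_{x_u,u,\sigma},\bar D_{x_v,v,\sigma}}$, invoke \cref{lem:InnerProductConditionXU}, and evaluate the expectation via the same four-way case split on $(x_u,x_v)$ with the same probabilities and the same algebraic cancellation of the factor $(1\mp\theta c)$ against $1-\theta^2c^2$.
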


Given \cref{lem:InnerProductConditionU}, we can prove \cref{lem:single-variable-integral}.

\begin{proof}[Proof of \cref{lem:single-variable-integral}]
By Claim 3.3 of \cite{brennan2020statistical}, we have:
\begin{equation*}
    \Norm{\E_{u \sim S^{d-1}} [\Bar{D}_{u,\sigma}^{\otimes n}]^{\leq l, k} - 1}^2
    = \E_{u, v \sim S^{d-1}} \sum_{t=1}^{k} \binom{n}{t} \Paren{\iprod{\Bar{D}_{u,\sigma}^{\leq l}, \Bar{D}_{v,\sigma}^{\leq l}}-1}^t
\end{equation*}
Take $\ell\to\infty$, we have:
\begin{equation}
\label{eq_lowdegree1}
    \Norm{\E_{u \sim S^{d-1}} [\Bar{D}_{u,\sigma}^{\otimes n}]^{\leq \infty, k} - 1}^2
    = \E_{u, v \sim S^{d-1}} \sum_{t=1}^{k} \binom{n}{t} \Paren{\iprod{\Bar{D}_{u,\sigma}, \Bar{D}_{v,\sigma}}-1}^t
\end{equation}
Plug the expression of $\iprod{\Bar{D}_{u, \sigma}, \Bar{D}_{v, \sigma}}$ from \cref{lem:InnerProductConditionU} into \cref{eq_lowdegree1}, we get:
\begin{align*}
    \Norm{\E_{u \sim S^{d-1}} [\Bar{D}_{u,\sigma}^{\otimes n}]^{\leq \infty, k} - 1}^2
    = & \E_{u, v \sim S^{d-1}} \sum_{t=1}^{k} \binom{n}{t} \Paren{\iprod{\Bar{D}_{u,\sigma}, \Bar{D}_{v,\sigma}}-1}^t \\
    = & \E_{c} \sum_{t=1}^{k} \binom{n}{t} \Big(\frac{1}{\sqrt{1 - \theta^2 c^2}} \Big [(1-\rho)^2 + 2 \rho (1-\rho) \exp(- \frac{1}{\rho} \frac{\theta c^2}{2 - 2 \theta^2 c^2}) \\
    & + \frac{\rho^2}{2} \exp(\frac{c}{\rho (1 + \theta c)}) + \frac{\rho^2}{2} \exp(-\frac{c}{\rho (1 - \theta c)}) \Big] -1 \Big)^t
\end{align*}
\end{proof}

Additionally, by  \cref{lem:distribution-random-inner-product}, $\frac{c+1}{2}$ follows Beta distribution $\text{Beta}(\frac{d-1}{2},\frac{d-1}{2})$.
Using the density function of Beta distribution, we can write this expectation explicitly as an integral. 
In the following sections, we will bound this integral. We will start from the case where $\sigma \rightarrow 0$ in \cref{sec_sqSparseNoiseless}, then bound the integral for $\sigma \leq d^{-K}$ in \cref{sec_sqSparseSmallNoise} where $K$ is a constant that is large enough.

\subsection{Low degree lower bound for $\sigma \rightarrow 0$}
\label{sec_sqSparseNoiseless}

In this section, we prove that, when $\sigma \rightarrow 0$, we have lower bound $n \leq \frac{\rho^2 d^{2}}{k^8}$ for $\rho$-sparse vectors with sparsity $\rho \geq \frac{k}{\sqrt{d}}$ and degree $\log^2 d \leq k \leq \sqrt{\frac{d}{\log d}}$.

\begin{lemma}
\label{lem:ldlr-aymptotic}
Suppose $\rho \geq \frac{k}{\sqrt{d}}$. When $n \leq \frac{\rho^2 d^{2}}{k^8}$ and $\log^2 d \leq k \leq \sqrt{\frac{d}{\log d}}$, we have
\begin{equation*}
    \lim_{\sigma\to 0} \quad \Norm{\E_{u \sim S^{d-1}} [\Bar{D}_{u,\sigma}^{\otimes n}]^{\leq \infty, k} - 1}^2 \leq O(1)
\end{equation*}
\end{lemma}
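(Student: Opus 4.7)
The plan is to leverage the exact formula in Lemma~\ref{lem:single-variable-integral} with $\theta = 1-\sigma^2 \to 1$. Taking the limit inside the integral (which is routine: for each fixed $c \in (-1,1)$, the integrand is continuous in $\sigma$, and uniform boundedness on compacta supplies the needed dominated convergence), the task reduces to bounding
\begin{equation*}
\sum_{t=1}^{k} \binom{n}{t}\, \E_c \Paren{f(c) - 1}^t,
\end{equation*}
where $c = u^\top v$ has density proportional to $(1-c^2)^{(d-3)/2}$ on $[-1, 1]$ and, after combining the two last exponentials via the identity $e^x + e^y = 2 e^{(x+y)/2}\cosh((x-y)/2)$,
\begin{equation*}
f(c) = \frac{(1-\rho)^2 + 2\rho(1-\rho) e^{-\beta/2} + \rho^2 e^{-\beta}\cosh(\gamma)}{\sqrt{1-c^2}}, \quad \beta = \frac{c^2}{\rho(1-c^2)}, \quad \gamma = \frac{c}{\rho(1-c^2)}.
\end{equation*}

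The crucial algebraic observation is that the constant term and the $c^2$ coefficient of $f(c) - 1$ cancel exactly. A direct Taylor expansion to order $c^4$ yields
\begin{equation*}
f(c) - 1 \;=\; \frac{(3\rho - 1)^2}{24\,\rho^2}\, c^4 \;+\; O\Paren{c^6/\rho^4},
\end{equation*}
so that $|f(c) - 1| \leq O(c^4/\rho^2)$ uniformly on $\{|c|/\rho \ll 1\}$. I then split $[-1,1]$ into a bulk region $A = [-\tau, \tau]$ with $\tau = C\sqrt{(\log d)/d}$ for a large constant $C$, and a tail $B = [-1,1]\setminus A$. The hypotheses $k \geq \log^2 d$ and $\rho \geq k/\sqrt{d}$ force $\tau/\rho = O(1/(\log d)^{3/2}) \to 0$, so the expansion is valid on $A$. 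Combining with the standard spherical moment bound $\E c^{2m} \leq (O(m)/d)^m$ gives
\begin{equation*}
\E_c \Brac{|f(c) - 1|^t \cdot \mathbf{1}_A} \;\leq\; O(1)^t \cdot \frac{\E\, c^{4t}}{\rho^{2t}} \;\leq\; \Paren{\frac{O(t)}{d\rho}}^{2t}.
\end{equation*}

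For the tail $B$, the density factor $(1-c^2)^{(d-3)/2}$ is at most $\exp(-\Omega(d\tau^2)) \leq d^{-\Omega(C^2)}$; away from $c = \pm 1$, $f(c)$ is at most a polynomial in $d$, so the combined contribution is already tiny. Near the endpoints $c \to \pm 1$, $f(c)$ inherits an essential singularity from $\exp(\pm c/(\rho(1 \mp c)))$; however, a saddle-point estimate balancing this growth against the density decay $(1-c^2)^{d/2}$ at the scale $1 - |c| \asymp t/(d\rho)$ shows that the integrand $f(c)^t (1-c^2)^{(d-3)/2}$ stays super-polynomially small on the whole tail, provided $d\rho \gg t$ --- which is ensured by $d\rho \geq k\sqrt{d} \gg k \geq t$. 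Combining both regions,
\begin{equation*}
\binom{n}{t} \E_c \Paren{f(c) - 1}^t \;\leq\; \Paren{\frac{e n}{t}}^{t} \Paren{\frac{O(t)}{d\rho}}^{2t} \;=\; \Paren{\frac{O(n t)}{d^2 \rho^2}}^{t}.
\end{equation*}
Substituting $n \leq \rho^2 d^2 / k^8$ and $t \leq k$ bounds each term by $(O(1)/k^7)^t$, so the sum over $1 \leq t \leq k$ forms a convergent geometric series of ratio $o(1)$, yielding the desired $O(1)$ bound.

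The main obstacle in executing this plan will be the sliver of the tail where $|c|$ is very close to $1$: the exponentials $e^{\pm c/(\rho(1\mp c))}$ grow faster than any polynomial as $|c| \to 1$, and to dominate them one must carefully exploit the corresponding rapid decay of $(1-c^2)^{d/2}$ through a sub-partition of $B$ keyed to the saddle-point location $1 - |c| \asymp t/(d\rho)$. The hypothesis $\rho \geq k/\sqrt{d}$ is precisely the lower bound needed to keep this saddle-point trade-off valid uniformly over all $t \leq k$; if $\rho$ were smaller, the essential singularity near $|c|=1$ would no longer be absorbed by the density decay, and a completely different argument would be required.
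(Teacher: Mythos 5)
Your proposal shares the broad strategy of the paper's proof---Taylor-expand $f(c)-1$ near $c=0$ to fourth order, pair the moment bound $\E c^{4t}\lesssim (t/d)^{2t}$ with the binomial factor, and use the rapid decay of the density $(1-c^2)^{(d-3)/2}$ to dispose of the tail. But the tail analysis as you have written it contains errors that break the argument.

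First, there is no essential singularity at $c=\pm 1$. Writing out $e^{-\beta}\cosh\gamma = \tfrac12\bigl(e^{c/(\rho(1+c))} + e^{-c/(\rho(1-c))}\bigr)$, as $c\to 1^-$ the first exponential tends to $e^{1/(2\rho)}$ while the second tends to $0$ (the exponent $\to -\infty$), and symmetrically as $c\to -1^+$. The whole bracket stays bounded; the only blow-up is the mild $(1-c^2)^{-1/2}$ prefactor, which is integrable against the density. Your "saddle point at $1-|c|\asymp t/(d\rho)$" is an artifact of this false premise.

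Second, the claim that "away from $c=\pm 1$, $f(c)$ is at most a polynomial in $d$" is false whenever $\rho$ is small. For $|c|=\Theta(1)$ one has $f(c)\gtrsim \rho^2 e^{\Theta(|c|/\rho)}$, and since the hypothesis only gives $\rho\geq k/\sqrt d$, this can be $\exp\bigl(\Theta(\sqrt d / k)\bigr)$, which is super-polynomial. Consequently the logic "density $\leq d^{-\Omega(C^2)}$, $f$ polynomial, therefore tiny" has two holes: $f$ need not be polynomial, and even if it were, the factor $\binom{n}{t}\leq \exp(2k\log d)$ already dwarfs any fixed power of $d$ once $t$ is of order $k$.

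Third, and this is the quantitative core of the gap: your threshold $\tau = C\sqrt{(\log d)/d}$ is too small. The paper splits at $\epsilon=k/\sqrt d$ (Lemmas~\ref{lem:bounding_S1} and~\ref{lem:bounding_S2}), precisely so that the density on the tail is $\leq \exp(-\epsilon^2 d/2)=\exp(-k^2/2)$, which super-polynomially beats $\binom{n}{t}\lesssim \exp(2k\log d)$ thanks to $k\geq \log^2 d$. With your $\tau$, the density near $|c|\approx\tau$ is only $d^{-\Omega(C^2)}$, which cannot absorb the combinatorial factor. The genuine competition the paper identifies (Fact~\ref{fact_case1_s1}) is between the linear growth $\exp(t|c|/\rho)$ of the log-integrand in $|c|$ and the quadratic decay $\exp(-dc^2/2)$ of the log-density, with crossover at $|c|\asymp t/(d\rho)\leq 1/\sqrt d < \epsilon$; this lives near the origin, not near $|c|=1$. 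Your plan can be repaired either by moving $\tau$ to $\epsilon=k/\sqrt d$ and adapting the pointwise bound of Fact~\ref{fact_case1_s1}, or by noting explicitly that the Taylor bound $|f(c)-1|\lesssim c^4/\rho^2$ extends to $|c|\lesssim\rho$ and handling only $|c|\gtrsim\rho$ as a tail, but as written the tail step does not go through.
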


When $\sigma \rightarrow 0$, we can apply a change of variables to \cref{eq_lowdegreeExact} by $c = 2y - 1$ where $y \sim \text{Beta}(\frac{d-1}{2}, \frac{d-1}{2})$, then plug in the probability density function of $\text{Beta}(\frac{d-1}{2}, \frac{d-1}{2})$ to get \cref{lem:ldlr-aymptotic-expression}, whose proof is deferred to \cref{sec:proof-lem-ldlr-aymptotic-expression}.

\begin{lemma}
\label{lem:ldlr-aymptotic-expression}
    In the setting of \cref{lem:ldlr-aymptotic}, let $y = \frac{c+1}{2}$ and $c = u^{\top} v$, we have
    \begin{align}
    \label{eq_lowdegreeLimit}
    \begin{split}
        & \lim_{\sigma\to 0}  \quad \Norm{\E_{u \sim S^{d-1}} \Paren{\Bar{D}_{u,\sigma}^{\otimes n}}^{\leq \infty, k} - 1}^2 \\
        \leq & O \left( 2^{d-2} \sqrt{d-1} \sum_{t=1}^{k} \binom{n}{t}
        \int_{0}^{1} \Big(\frac{1}{2\sqrt{y(1-y)}} [(1-\rho)^2 + 2 \rho (1-\rho) \exp(- \frac{1}{\rho} \frac{(2y - 1)^2}{8y(1-y)}) \right.\\
        & \left.+ \frac{\rho^2}{2} \exp(\frac{1}{\rho}(1-\frac{1}{2 y})) + \frac{\rho^2}{2} \exp(\frac{1}{\rho}(1-\frac{1}{2 -  2 y}))]-1 \Big)^t [y(1-y)]^{\frac{d-3}{2}} dy \right)
    \end{split}
    \end{align}
\end{lemma}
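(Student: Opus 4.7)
The plan is to start from the exact formula for the second moment of the likelihood projection given by \cref{lem:single-variable-integral} and turn its expectation over $c = u^\top v$ into an explicit one-dimensional integral against the Beta density. Two things happen simultaneously: the parameter $\theta = 1-\sigma^2$ tends to $1$, and the variable of integration is rescaled so that the support becomes $[0,1]$.

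First I would take the pointwise limit $\sigma \to 0$ (equivalently $\theta \to 1$) inside the expectation in \cref{eq_lowdegreeExact}. Each term inside the $t$-th power is a continuous function of $\theta$ on $(-1,1)$ bounded uniformly on compact subsets of $\{|c| < 1\}$, and since $c \in (-1,1)$ almost surely, justifying the interchange is a routine application of bounded (or dominated) convergence; I would state this passage briefly and move on. Setting $\theta = 1$ the key simplifications are
\begin{align*}
1 - \theta^2 c^2 &= 1 - c^2, \\
\frac{\theta c^2}{2-2\theta^2 c^2} &= \frac{c^2}{2(1-c^2)}, \\
\frac{c}{\rho(1+\theta c)} &= \frac{c}{\rho(1+c)}, \qquad -\frac{c}{\rho(1-\theta c)} = -\frac{c}{\rho(1-c)}.
\end{align*}

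Next I would apply the change of variable $c = 2y-1$, i.e. $y = (c+1)/2$, with $dc = 2\,dy$. Under this substitution $1-c^2 = 4y(1-y)$, so $\sqrt{1-c^2} = 2\sqrt{y(1-y)}$ and $c^2/[2(1-c^2)] = (2y-1)^2/[8y(1-y)]$. For the two remaining exponents, $1+c = 2y$ and $1-c = 2(1-y)$, which give
\[
\frac{c}{\rho(1+c)} = \frac{2y-1}{2\rho y} = \frac{1}{\rho}\Paren{1 - \frac{1}{2y}}, \qquad -\frac{c}{\rho(1-c)} = \frac{1}{\rho}\Paren{1 - \frac{1}{2(1-y)}}.
\]
This produces exactly the bracketed expression appearing in \cref{eq_lowdegreeLimit}.

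Finally, I invoke the fact (cited as \cref{lem:distribution-random-inner-product}) that $y = (c+1)/2$ is $\text{Beta}\paren{(d-1)/2,(d-1)/2}$-distributed, whose density is $[y(1-y)]^{(d-3)/2}/B((d-1)/2,(d-1)/2)$. Rewriting the expectation over $c$ as an integral against this density accounts for the $[y(1-y)]^{(d-3)/2}$ factor, and the prefactor $2^{d-2}\sqrt{d-1}$ in the claim comes from bounding the reciprocal Beta constant: by the Legendre duplication formula $B((d-1)/2,(d-1)/2) = \sqrt{\pi}\,\Gamma((d-1)/2)/(2^{d-2}\Gamma(d/2))$, and Stirling's approximation gives $\Gamma((d-1)/2)/\Gamma(d/2) = \Theta(1/\sqrt{d-1})$, so $1/B((d-1)/2,(d-1)/2) = O\paren{2^{d-2}\sqrt{d-1}}$. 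The mildest step is the limit interchange; the main bookkeeping is the algebra of the change of variables, and the only nontrivial estimate is the Beta-function asymptotics, which is standard.
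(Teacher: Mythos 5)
Your proposal matches the paper's proof essentially step for step: take $\theta\to 1$ in the formula from \cref{lem:single-variable-integral}, substitute $c=2y-1$, integrate against the $\text{Beta}(\tfrac{d-1}{2},\tfrac{d-1}{2})$ density from \cref{lem:distribution-random-inner-product}, and absorb the reciprocal Beta normalization into the $O(2^{d-2}\sqrt{d-1})$ prefactor. The only cosmetic differences are that you derive the Beta-function asymptotics via Legendre duplication plus Stirling rather than citing the paper's \cref{lem:beta-approximation} directly, and you explicitly flag the limit interchange via dominated convergence (which the paper leaves implicit) — though note that "bounded uniformly on compact subsets of $\{|c|<1\}$" is not quite a dominating function, since $1/\sqrt{1-\theta^2c^2}$ and the exponentials blow up near $c=\pm1$; the actual dominating function must lean on the $[y(1-y)]^{(d-3)/2}$ decay of the Beta density near the endpoints.
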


Now, we prove \cref{lem:ldlr-aymptotic}.

\begin{proof}[Proof of \cref{lem:ldlr-aymptotic}]
From \cref{lem:ldlr-aymptotic-expression}, we have:
\begin{align*}
    & \lim_{\sigma\to 0} \quad \Norm{\E_{u \sim S^{d-1}} [\Bar{D}_{u,\sigma}^{\otimes n}]^{\leq \infty, k} - 1}^2 \\
    = & \Theta \left (2^{d-2} \sqrt{d-1} \sum_{t=1}^{k} \binom{n}{t}
    \int_{0}^{1} \Big(\frac{1}{2\sqrt{y(1-y)}} [(1-\rho)^2 + 2 \rho (1-\rho) \exp(- \frac{1}{\rho} \frac{(2y - 1)^2}{8y(1-y)}) \right .\\
    & \left. + \frac{\rho^2}{2} \exp(\frac{1}{\rho}(1-\frac{1}{2 y})) + \frac{\rho^2}{2} \exp(\frac{1}{\rho}(1-\frac{1}{2 -  2 y}))]-1 \Big)^t [y(1-y)]^{\frac{d-3}{2}} dy \right) \\
    = & \Theta \left( \sqrt{d-1} \sum_{t=1}^{k} \binom{n}{t}
    \int_{0}^{1} \Big(\frac{1}{2\sqrt{y(1-y)}} [(1-\rho)^2 + 2 \rho (1-\rho) \exp(- \frac{1}{\rho} \frac{(2y - 1)^2}{8y(1-y)}) \right.\\
    & \left .+ \frac{\rho^2}{2} \exp(\frac{1}{\rho}(1-\frac{1}{2 y})) + \frac{\rho^2}{2} \exp(\frac{1}{\rho}(1-\frac{1}{2 -  2 y}))]-1 \Big)^t [4y(1-y)]^{\frac{d-3}{2}} dy \right) \\
    = & \Theta \Big( \sqrt{d-1} \sum_{t=1}^{k} \binom{n}{t}
    \int_{0}^{\frac{1}{2}} \Big(\frac{1}{2\sqrt{y(1-y)}} [(1-\rho)^2 + 2 \rho (1-\rho) \exp(- \frac{1}{\rho} \frac{(2y - 1)^2}{8y(1-y)}) \\
    & + \frac{\rho^2}{2} \exp(\frac{1}{\rho}(1-\frac{1}{2 y})) + \frac{\rho^2}{2} \exp(\frac{1}{\rho}(1-\frac{1}{2 -  2 y}))]-1 \Big)^t [4y(1-y)]^{\frac{d-3}{2}} dy \Big) \\
    = & \Theta \Paren{ \sqrt{d-1} (S_1 + S_2) }
\end{align*}
where we split the integral into two parts:
\begin{align}
    \label{eq_S1}
    \begin{split}
    S_1 = & \sum_{t=1}^{k} \binom{n}{t}
    \int_{0}^{\frac{1}{2}-\frac{\epsilon}{2}} \Big(\frac{1}{2\sqrt{y(1-y)}} [(1-\rho)^2 + 2 \rho (1-\rho) \exp(- \frac{1}{\rho} \frac{(2y - 1)^2}{8y(1-y)}) \\
    & + \frac{\rho^2}{2} \exp(\frac{1}{\rho}(1-\frac{1}{2 y})) + \frac{\rho^2}{2} \exp(\frac{1}{\rho}(1-\frac{1}{2 -  2 y}))]-1 \Big)^t [4y(1-y)]^{\frac{d-3}{2}} dy
    \end{split}
\end{align}
and,
\begin{align}
    \label{eq_S2}
    \begin{split}
    S_2 = & \sum_{t=1}^{k} \binom{n}{t}
    \int_{\frac{1}{2}-\frac{\epsilon}{2}}^{\frac{1}{2}} \Big(\frac{1}{2\sqrt{y(1-y)}} [(1-\rho)^2 + 2 \rho (1-\rho) \exp(- \frac{1}{\rho} \frac{(2y - 1)^2}{8y(1-y)}) \\
    & + \frac{\rho^2}{2} \exp(\frac{1}{\rho}(1-\frac{1}{2 y})) + \frac{\rho^2}{2} \exp(\frac{1}{\rho}(1-\frac{1}{2 -  2 y}))]-1 \Big)^t [4y(1-y)]^{\frac{d-3}{2}} dy
    \end{split}
\end{align}
We set $\epsilon = \frac{k}{\sqrt{d}}$.
Combine upper bounds $S_1 \leq \frac{1}{\sqrt{d}}$ from \cref{lem:bounding_S1} and $S_2 \leq \frac{1}{\sqrt{d}}$ from \cref{lem:bounding_S2} (the details are deferred to \cref{sec:proof-S1} and \cref{sec:proof-S2}), we get:
\begin{align*}
    \Norm{\E_{u \sim S^{d-1}} [\Bar{D}_{u,\sigma}^{\otimes n}]^{\leq l, k} - 1}^2
    \leq & \Theta \Big\{ \sqrt{d-1} (S_1 + S_2) \Big\} \\
    \leq & \Theta \Big\{\frac{2 \sqrt{d - 1}}{\sqrt{d}} \Big\} \\
    = & \Theta \Paren{1}
\end{align*}
\end{proof}

\subsection{Low degree lower bound for non-zero $\sigma$}
\label{sec_sqSparseSmallNoise}

Now, we prove that, when there is non-zero noise $\sigma \leq d^{-K}$ for some constant $K$ that is large enough\footnote{This is required for technical reasons. Intuitively the problem gets harder for larger noise.}, we have lower bound $n \leq \frac{\rho^2 d^{2}}{k^8}$ for $\rho$-sparse vectors with sparsity $\rho \geq \frac{k}{\sqrt{d}}$ and degree $\log^2 d \leq k \leq \sqrt{\frac{d}{\log d}}$.

\begin{lemma}
\label{lem:ldlr-nonaymptotic}
Let $\sigma \leq d^{-K}$ for some large enough univeral constant $K$  and $\rho \geq \frac{k}{\sqrt{d}}$. When $n \leq \frac{\rho^2 d^{2}}{k^8}$ and $\log^2 d \leq k \leq \sqrt{\frac{d}{\log d}}$, we have
\begin{equation*}
    \Norm{\E_{u \sim S^{d-1}} [\Bar{D}_{u,\sigma}^{\otimes n}]^{\leq \infty, k} - 1}^2
    \leq \Theta \Paren{1}
\end{equation*}
\end{lemma}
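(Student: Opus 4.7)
The plan is to establish \cref{lem:ldlr-nonaymptotic} as a direct perturbation of \cref{lem:ldlr-aymptotic}. Both start from the exact formula of \cref{lem:single-variable-integral}, differing only in that $\theta := 1-\sigma^2$ replaces the limiting value $1$; for $\sigma \leq d^{-K}$ this perturbation is $\leq d^{-2K}$ and hence negligible throughout. The strategy is to rerun the bulk/tail decomposition from the proof of \cref{lem:ldlr-aymptotic} with $\theta = 1-\sigma^2$ in place of $\theta = 1$, verifying that each estimate survives.

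First, I apply the same change of variable $c = 2y-1$ with $y \sim \mathrm{Beta}(\tfrac{d-1}{2},\tfrac{d-1}{2})$ used to derive \cref{lem:ldlr-aymptotic-expression}. This reduces the claim to bounding $\sqrt{d-1}\sum_{t=1}^{k}\binom{n}{t}\int_0^{1}(g_\sigma(y)-1)^t\,[4y(1-y)]^{(d-3)/2}\,dy$, where $g_\sigma(y)$ denotes the bracketed expression in \cref{lem:single-variable-integral} after the substitution and $g_0(y)$ its $\sigma\to 0$ limit (the bracket in \cref{eq_lowdegreeLimit}). I partition $[0,1]$ into three regions: a near region $|y-1/2|\leq \epsilon/2$ with $\epsilon := k/\sqrt{d}$ (mirroring $S_2$), an intermediate region $\delta \leq \min(y,1-y) < 1/2 - \epsilon/2$ with $\delta := d^{-K/4}$ (mirroring $S_1$), and an extreme tail $\min(y,1-y) < \delta$.

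On the near and intermediate regions, every denominator in the definition of $g_\sigma$---namely $1-\theta^2 c^2$ and $1\pm\theta c$---is bounded below by $\Omega(\delta)$. A routine computation then shows that each factor of $g_\sigma$ (the radical and the three exponentials) differs from its $\theta=1$ counterpart by a multiplicative ratio $1 + O(\sigma^2/(\rho\delta^2)) = 1 + O(d^{-\Omega(K)})$ for $K$ a sufficiently large constant. Hence the estimates behind \cref{lem:bounding_S1} and \cref{lem:bounding_S2} apply with $\theta = 1-\sigma^2$ essentially verbatim (up to an overall $(1+o(1))$ factor), yielding contributions of order $O(1/\sqrt{d-1})$ to the full sum and thus $O(1)$ after multiplying by $\sqrt{d-1}$.

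On the extreme tail, the Beta weight already forces $[4y(1-y)]^{(d-3)/2} \leq (4\delta)^{(d-3)/2} = \exp(-\Omega(Kd\log d))$. The integrand is bounded crudely---$(1-\theta^2 c^2)^{-1/2}\leq (1-\theta^2)^{-1/2} = O(1/\sigma)\leq O(d^K)$ (finite in contrast to the $\theta = 1$ case), and each exponential by $\exp(O(1/\rho))\leq \exp(O(\sqrt{d}))$---which after raising to the $t$-th power for $t\leq k$ and multiplying by $\binom{n}{t}\leq n^k\leq \exp(O(\sqrt{d\log d}))$ yields $\exp(O(K\sqrt{d\log d}))$ uniformly in $t$. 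For $K$ a sufficiently large universal constant, the Beta decay $\exp(-\Omega(Kd\log d))$ dominates, so the extreme-tail contribution is $o(1)$. The main obstacle is precisely this extreme-tail bookkeeping: one must ensure that the polynomial-in-$d$ blowup of $(1-\theta^2 c^2)^{-1/2}$, even raised to the $t$-th power and multiplied by combinatorial factors, is absorbed by the Beta decay---this is what forces $K$ to be a sufficiently large universal constant.
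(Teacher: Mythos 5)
Your overall decomposition and strategy match the paper's: the near/intermediate region corresponds to the paper's $T_1$ (where a multiplicative--perturbation argument reduces to the $\sigma\to 0$ estimates of \cref{lem:bounding_S1} and \cref{lem:bounding_S2}), and the extreme tail corresponds to the paper's $T_2$ (where the Beta density must absorb the blow-up near $|c|=1$). However, there is a genuine error in the extreme-tail bookkeeping.

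You bound the radical by $(1-\theta^2 c^2)^{-1/2}\leq(1-\theta^2)^{-1/2}=O(1/\sigma)$ and then assert $O(1/\sigma)\leq O(d^K)$. This last inequality is backwards: the hypothesis is $\sigma\leq d^{-K}$, i.e.\ $1/\sigma\geq d^K$, and there is \emph{no} lower bound on $\sigma$. Hence $(1-\theta^2 c^2)^{-k/2}\leq\sigma^{-k}$ can be $\exp(kd)$ or worse (e.g.\ for $\sigma=\exp(-d)$, which the lemma allows), and the Beta decay $\exp\bigl(-\Omega(Kd\log d)\bigr)$ does not dominate $\exp(kd)$ once $k\gg K\log d$ --- which is permitted since $k$ can be as large as $\sqrt{d/\log d}$. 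Your bound is therefore not uniform over the admissible range of $\sigma$, and the proof does not close. The paper instead uses the $\sigma$-free bound $1-\theta^2 c^2\geq 1-c^2$ (valid for all $\theta\leq 1$), so the offending factor becomes $(1-c^2)^{-k/2}$, which is \emph{absorbed} into the Beta weight $(1-c^2)^{(d-3)/2}$, leaving $(1-c^2)^{(d-k-3)/2}$; this still decays exponentially near $|c|=1$ since $k\ll d$, and the resulting bound is uniform in $\sigma$. If you replace the $\sigma^{-1}$ step with this, the extreme-tail argument goes through.

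A smaller point: stating that the estimates behind \cref{lem:bounding_S1} and \cref{lem:bounding_S2} ``apply essentially verbatim up to an overall $(1+o(1))$ factor'' undersells the issue. The perturbation gives $g_\sigma\leq(1+\eta)g_0$, so $g_\sigma-1\leq(1+\eta)(g_0-1)+\eta$: there is a small \emph{additive} term $\eta$ in addition to the multiplicative one, and since $(g_0-1)$ vanishes at $c=0$, the additive term is what dominates there. This is not fatal (the paper's \cref{lem:bounding_T1} handles it with a case split on whether $\Upsilon\gtrsim d^{-k_\sigma''/2}$), but it does require one more step than your sketch indicates, because $\binom{n}{t}\eta^t$ must itself be checked small using the constraint $n\leq\rho^2 d^2/k^8$ together with $K$ large.
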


\begin{proof}
From \cref{eq_lowdegreeExact}, we get:
\begin{align*}
    \Norm{\E_{u \sim S^{d-1}} [\Bar{D}_{u,\sigma}^{\otimes n}]^{\leq \infty, k} - 1}^2
    = & \E_{c} \sum_{t=1}^{k} \binom{n}{t} \Big(\frac{1}{\sqrt{1 - \theta^2 c^2}} \Big [(1-\rho)^2 + 2 \rho (1-\rho) \exp(- \frac{1}{\rho} \frac{\theta c^2}{2 - 2 \theta^2 c^2}) \\
    & + \frac{\rho^2}{2} \exp(\frac{c}{\rho (1 + \theta c)}) + \frac{\rho^2}{2} \exp(-\frac{c}{\rho (1 - \theta c)}) \Big] -1 \Big)^t \\
    = & 2 \int_{0}^{1} \sum_{t=1}^{k} \binom{n}{t} \Big(\frac{1}{\sqrt{1 - \theta^2 c^2}} \Big [(1-\rho)^2 + 2 \rho (1-\rho) \exp(- \frac{1}{\rho} \frac{\theta c^2}{2 - 2 \theta^2 c^2}) \\
    & + \frac{\rho^2}{2} \exp(\frac{c}{\rho (1 + \theta c)}) + \frac{\rho^2}{2} \exp(-\frac{c}{\rho (1 - \theta c)}) \Big] -1 \Big)^t P(c) dc \\
    = & 2 \Big \{ T_1 + T_2 \Big \}
\end{align*}
where we split the integral into two parts:
\begin{align}
    \label{eq_T1}
    \begin{split}
    T_1 = & \int_{0}^{1 - d^{-k_{\sigma}}} \sum_{t=1}^{k} \binom{n}{t} \Big(\frac{1}{\sqrt{1 - \theta^2 c^2}} \Big [(1-\rho)^2 + 2 \rho (1-\rho) \exp(- \frac{1}{\rho} \frac{\theta c^2}{2 - 2 \theta^2 c^2}) \\
    & + \frac{\rho^2}{2} \exp(\frac{c}{\rho (1 + \theta c)}) + \frac{\rho^2}{2} \exp(-\frac{c}{\rho (1 - \theta c)}) \Big] -1 \Big)^t P(c) dc
    \end{split}
\end{align}
and,
\begin{align}
    \label{eq_T2}
    \begin{split}
    T_2 = & \int_{1 - d^{-k_{\sigma}}}^{1} \sum_{t=1}^{k} \binom{n}{t} \Big(\frac{1}{\sqrt{1 - \theta^2 c^2}} \Big [(1-\rho)^2 + 2 \rho (1-\rho) \exp(- \frac{1}{\rho} \frac{\theta c^2}{2 - 2 \theta^2 c^2}) \\
    & + \frac{\rho^2}{2} \exp(\frac{c}{\rho (1 + \theta c)}) + \frac{\rho^2}{2} \exp(-\frac{c}{\rho (1 - \theta c)}) \Big] -1 \Big)^t P(c) dc
    \end{split}
\end{align}
for some constant $k_{\sigma}$ that is large enough but smaller than $\frac{K}{2}$. Combine upper bounds $T_1 \leq \Theta \Paren{1}$ from \cref{lem:bounding_T1} and $T_2 \leq \Theta \Paren{1}$ from \cref{lem:bounding_T2}  (the details are deferred to \cref{sec:proof-T1} and \cref{sec:proof-T2}), we get:
\begin{equation*}
    \Norm{\E_{u \sim S^{d-1}} [\Bar{D}_{u,\sigma}^{\otimes n}]^{\leq \infty, k} - 1}^2
    = 2 \Big \{ T_1 + T_2 \Big \}
    \leq \Theta \Paren{1}
\end{equation*}
when noise is $\sigma \leq d^{-K}$ for some constant $K$ that is large enough.
\end{proof}

\subsection{SQ lower bound via low degree likelihood ratio}

Now, we can prove our main theorem on SQ lower bound for noisy Sparse Planted Vector problem.
\begin{proof}[Proof of \cref{thm:SQ-non-sparse-noisy}]
By \cref{lem:ldlr-nonaymptotic}, for $\rho$-sparse rademacher vectors with $\rho \geq \frac{k}{\sqrt{d}}$, when $n \leq \frac{\rho^2 d^{2}}{k^8}$ and $\log^2 d \leq k \leq \sqrt{\frac{d}{\log d}}$, we have: 
\begin{equation*}
    \Norm{\E_{u \sim S^{d-1}} [\Bar{D}_{u,\sigma}^{\otimes n}]^{\leq \infty, k} - 1}^2 \leq \Theta \Paren{1}
\end{equation*}

In theorem 3.1 of \cite{brennan2020statistical} (recorded as \cref{thm:almost-equivalence} in this paper), taking $\ell=\infty, \epsilon=O(1),q=\exp(k)$, we get for $\rho \geq \frac{k}{\sqrt{d}}$, when $n \leq \frac{\rho^2 d^{2}}{k^8}$ and $\log^2 d \leq k \leq \sqrt{\frac{d}{\log d}}$, we have:
\begin{equation*}
    \text{SDA}(\mathcal{S}, \Theta \Paren{n})\geq \exp \Paren{k}
\end{equation*}
Then, by Theorem 1.3 of \cite{brennan2020statistical}, any SQ algorithm to solve the hypothesis testing problem $D_{\emptyset}$ vs $D_{u,\sigma}$ with probability at least $1-o(1)$ requires at least $(1- o(1))\exp \Paren{k}$ queries to $VSTAT(\Theta \Paren{n})$.
\end{proof}

\section{SQ lower bound in noiseless setting}

We have proven the SQ lower bound for distinguishing problem when $\sigma\to 0$. In this section,  
we show that the SQ lower bound for  $\sigma\to 0$ also applies for $\sigma=0$. 

\begin{theorem}\label{thm:strong-convergence}[Restatement of \cref{thm:SQ-non-sparse-noiseless}]
 For $\sigma=0$, consider the distinguishing problem between 
\begin{itemize}
    \item planted distribution $\cP$: the family of distributions $\mathcal{S}$ parameterized by $u$ as described in \cref{model_general};
    \item null distribution $\cQ$: standard Gaussian $N(0,\Id_d)$.
\end{itemize}
When $\rho^2 d^{1.99}\leq n\leq \frac{\rho^2 d^2}{\poly \log d}$,
we have
\begin{equation*}
    \text{SDA}(\mathcal{S}, n)\geq \exp\Paren{\Paren{\frac{\rho^2 d^2}{n\poly \log d}}^{0.1}}
\end{equation*}
\end{theorem}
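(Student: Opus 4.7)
The plan is to transfer the SQ lower bound of \cref{thm:SQ-non-sparse-noisy} (valid for every $0<\sigma<d^{-100}$) to the noiseless setting $\sigma=0$ via a simulation/continuity argument. The high-level intuition is that an SQ algorithm interacts with its input only through VSTAT-answered queries, and for any fixed query the oracle's correct answer is continuous in $\sigma$ at $\sigma=0$; therefore no SQ algorithm can tell apart the noiseless problem from a sufficiently slightly-smoothed noisy problem, so the lower bound for the latter must extend to the former.

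Concretely, I would argue by contradiction. Assume some SQ algorithm $\cA$ using fewer than $\exp\bigl((\rho^2 d^2/(n\polylog d))^{0.1}\bigr)$ queries to $\text{VSTAT}(n/3)$ solves the noiseless testing problem with probability at least $2/3$. I want to show that $\cA$ must also solve the $\sigma$-smoothed testing problem with probability $2/3-o(1)$ for some $\sigma\in(0,d^{-100})$, contradicting \cref{thm:SQ-non-sparse-noisy}. The crux is that the smoothed single-sample distribution conditioned on $u$ is obtained from its noiseless counterpart by convolution with the independent Gaussian $N(0,\sigma^2 uu^\top)$; hence it converges weakly to the noiseless conditional distribution as $\sigma\to 0^+$. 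For any bounded continuous query $\phi:\R^d\to[0,1]$, this weak convergence yields $\E_\sigma[\phi\mid u]\to\E_0[\phi\mid u]$ pointwise in $u$. Since $\text{VSTAT}(n/3)$ tolerates adversarial additive error of magnitude at least $3/n$, once $\sigma$ is small enough that $|\E_\sigma[\phi\mid u]-\E_0[\phi\mid u]|\ll 1/n$ for each query $\phi$ issued by $\cA$, every VSTAT-valid noiseless response is automatically a VSTAT-valid noisy response. The executions of $\cA$ against the two oracles can then be coupled to produce identical outputs with identical probability, and combining with \cref{thm:SQ-non-sparse-noisy} yields the stated SDA bound.

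The main obstacle is that the SQ framework permits arbitrary bounded measurable queries $\phi$, while the convergence $\E_\sigma[\phi\mid u]\to\E_0[\phi\mid u]$ can fail when $\phi$ is discontinuous on a set of positive measure under the (singular) noiseless conditional distribution. I would handle this by mollification: replace each query $\phi$ by its convolution $\phi_\epsilon:=\phi\ast\eta_\epsilon$ with a narrow Gaussian kernel $\eta_\epsilon$. The mollified $\phi_\epsilon$ is continuous, satisfies $\|\phi_\epsilon-\phi\|_{L^1(D_\emptyset)}\to 0$ as $\epsilon\to 0$, and hence the gap between $\E[\phi]$ and $\E[\phi_\epsilon]$ under the null, as well as under every $\sigma$-smoothed planted distribution, can be made much smaller than the VSTAT tolerance $1/n$ by choosing $\epsilon$ small. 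Picking $\epsilon$ first and then $\sigma\ll\epsilon$ gives a coupling that is valid uniformly across all of $\cA$'s finitely many (adaptive) queries, completing the contradiction and establishing the theorem.
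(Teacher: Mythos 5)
Your overall strategy — transfer the SQ lower bound for $\sigma > 0$ (\cref{thm:SQ-non-sparse-noisy}) to $\sigma = 0$ by a continuity argument — is the same as the paper's, and the obstacle you flag (SQ queries need only be bounded measurable, not continuous) is precisely the one the paper must address. However, the mollification you propose does not close the gap. For the oracle coupling you need, for each fixed query $\phi$ issued by $\cA$, that $\bigabs{\E_{\mu_{u,\sigma}}[\phi] - \E_{\mu_u}[\phi]} \ll 1/n$, which your plan would bound by
\[
\bigabs{\E_{\mu_{u,\sigma}}[\phi - \phi_\epsilon]} + \bigabs{\E_{\mu_{u,\sigma}}[\phi_\epsilon] - \E_{\mu_u}[\phi_\epsilon]} + \bigabs{\E_{\mu_u}[\phi_\epsilon - \phi]}\,.
\]
You can control the first term (since $\mu_{u,\sigma}$ is absolutely continuous) and the second (since $\phi_\epsilon$ is continuous and you have weak convergence). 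But your write-up only claims the mollification error is small ``under the null, as well as under every $\sigma$-smoothed planted distribution'' — conspicuously not under $\mu_u$ — and indeed the third term cannot be made small by shrinking $\epsilon$: the conditional noiseless law $\mu_u$ is a mixture of the degenerate Gaussians $N(x_0 u, \Id_d - uu^\top)$, hence supported on a union of three hyperplanes and singular with respect to Lebesgue measure, and $L^1$-convergence of mollifications $\phi_\epsilon \to \phi$ holds only against absolutely continuous measures. Without controlling this term the coupling breaks down, so the contradiction never materializes.

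The paper sidesteps exactly this singularity by never conditioning on $u$: it works with the marginal single-sample law $\mu_\sigma$ (averaged over $u$ and the Rademacher coordinate), which at $\sigma = 0$ admits a Lebesgue density $\tilde{D}$. Concretely, the proof (i) derives pointwise convergence $D_\sigma \to \tilde{D}$ via Laplace's method, (ii) extracts an integrable dominating function from the $n=1$ case of \cref{lem:ldlr-nonaymptotic}, which gives $\int D_\sigma^2 / D_\emptyset \, dz \le O(1)$ uniformly in $\sigma$, and (iii) applies dominated convergence to obtain $\E_{\mu_\sigma}[\phi] \to \E_{\tilde{\mu}}[\phi]$ for \emph{every} bounded measurable $\phi$, with a separate Wasserstein argument identifying $\tilde{\mu}$ with the $\sigma = 0$ law $\mu$. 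None of these ingredients appear in your proposal. To repair your route you would need either to migrate to the marginal (at which point mollification becomes unnecessary, but the density formula, the domination, and the identification of the limit are still required), or to establish the nontrivial measure-theoretic claim that for almost every $u$ in the prior the slices of the discontinuity set of each fixed query $\phi$ by the hyperplanes $\{z : \iprod{z,u} = x_0\}$ have $(d-1)$-dimensional Lebesgue measure zero.
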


The proof is very similar to the proof of lemma 7.3 in \cite{LLLalgorithm} (in particular the special case $n=1$). We first prove that the measure $\mu_{1,\sigma}$ almost surely converges to $\tilde{\mu}_{1}$, such that 
\begin{equation*}
   \lim_{\sigma\to 0}\Abs{\mathbb{E}_{\mu_{\sigma}}\phi(z)
    -\mathbb{E}_{\tilde{\mu}}\phi(z)}=0\,.
\end{equation*}
Then we prove that 
 $\mu_{\sigma}$ weakly converges to $\mu$.
 It follows that $\mu=\tilde{\mu}$. Therefore, we obtain 
 \begin{equation*}
   \lim_{\sigma\to 0}\Abs{\mathbb{E}_{\mu_{\sigma}}\phi(z)
    -\mathbb{E}_{\mu}\phi(z)}=0
 \end{equation*}
 Therefore, if an SQ algorithm can solve testing problem for $\sigma=0$, it can also solve the testing problem for $\sigma\to 0$. This implies that SQ lower bound for $\sigma\to 0$ also applies for the case $\sigma=0$.

 \subsection{Weak convergence}
 \begin{lemma}
As $\sigma\to 0$, $\mu_{\sigma}$ weakly converges to a measure $\tilde{\mu}: \mathbb{R}^d\to \mathbb{R}$, which satisfies 
\begin{equation*}
    \E_{z\sim \tilde{\mu}} 
    \phi(z)= \lim_{\sigma\to 0}
     \E_{z\sim \mu_{\sigma}}
     \phi(z)
\end{equation*}
for any query function $\phi(z): \mathbb{R}^d\to [0,1]$.
 \end{lemma}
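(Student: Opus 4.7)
The plan is to prove \emph{total variation} convergence $\mu_\sigma \to \tilde{\mu}$, which is strictly stronger than the two conclusions of the lemma (weak convergence and the pointwise identity for every $\phi : \R^d \to [0,1]$) and therefore implies them at once. The route is to exhibit an explicit Lebesgue density for $\mu_\sigma$, show it converges pointwise a.e.\ to an explicit limit $D_0$, and invoke Scheff\'e's lemma.

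For $\sigma > 0$ the conditional covariance $\Id_d - (1-\sigma^2)uu^\top$ has eigenvalues $1$ (with multiplicity $d-1$) and $\sigma^2$, so the conditional Gaussian $N(x_0 u,\, \Id_d - (1-\sigma^2)uu^\top)$ is nondegenerate with density
$$p_{x_0,u,\sigma}(z) \;=\; \frac{1}{(2\pi)^{d/2}\sigma}\exp\!\Bigl(-\tfrac{1}{2}\|P_{u^\perp}z\|^2 \;-\; \tfrac{1}{2\sigma^2}(u^\top z - x_0)^2\Bigr),$$
and $\mu_\sigma$ itself has Lebesgue density $D_\sigma(z) = \E_{x_0,u}\,p_{x_0,u,\sigma}(z)$ with $x_0 \sim BR(1,\rho)$ and $u$ uniform on $S^{d-1}$.

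Next I would prove pointwise convergence $D_\sigma(z) \to D_0(z)$ for a.e.\ $z$. Parametrizing $u$ on $S^{d-1}$ by $\alpha = u^\top z / \|z\|$ (whose marginal density is proportional to $(1-\alpha^2)^{(d-3)/2}$) and applying the Laplace-type substitution $\alpha = (x_0 + \sigma t)/\|z\|$, the Jacobian $\sigma/\|z\|$ cancels the singular $1/\sigma$ prefactor exactly and the exponential in $(u^\top z - x_0)^2/\sigma^2$ collapses to a standard Gaussian in $t$. For $|x_0| < \|z\|$ dominated convergence then produces an explicit limit $D_0(z)$; for $|x_0| > \|z\|$ the contribution vanishes super-polynomially in $\sigma$. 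Setting $\tilde{\mu}$ to be the measure with density $D_0$ and applying DCT to $\int D_\sigma = 1$ shows $\int D_0 = 1$, so $\tilde{\mu}$ is a bona fide probability measure.

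Finally, Scheff\'e's lemma promotes pointwise convergence of the densities to $L^1$ convergence, which is exactly $\|\mu_\sigma - \tilde{\mu}\|_{\mathrm{TV}} \to 0$; since $|\E_{\mu_\sigma}\phi - \E_{\tilde{\mu}}\phi| \le \|\mu_\sigma - \tilde{\mu}\|_{\mathrm{TV}}$ for every measurable $\phi : \R^d \to [0,1]$, both assertions of the lemma follow. The main obstacle is the pointwise convergence step: the conditional Gaussian collapses onto a $(d-1)$-dimensional hyperplane as $\sigma \to 0$, so the singular $1/\sigma$ prefactor must be absorbed into the spherical integral via the change of variables above, and most of the technical work consists in constructing a $\sigma$-uniform integrable majorant for the substituted integrand in order to justify dominated convergence.
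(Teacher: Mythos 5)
Your plan is essentially the paper's: both start from the explicit Gaussian density $D_{x_0,u,\sigma}$, average over $u$ on the sphere, use a Laplace-type absorption of the singular $1/\sigma$ prefactor into the spherical integral to get pointwise a.e.\ convergence $D_\sigma(z)\to D_0(z)$, and then pass to the limit under the integral. Where you depart is the final packaging: the paper invokes dominated convergence twice (once for weak convergence, once for the query-function identity), whereas you apply Scheff\'e's lemma to get $L^1$/total-variation convergence, which cleanly implies both conclusions at once and is strictly stronger. This is a nicer way to organize the last step. There is, however, a small gap you gloss over. Scheff\'e's lemma requires that the a.e.\ limit $D_0$ be a probability density, i.e.\ $\int D_0 = 1$; Fatou only gives $\int D_0 \le 1$, and your one-line appeal to ``DCT applied to $\int D_\sigma = 1$'' is begging the question, since it requires a $\sigma$-uniform integrable dominating function for $D_\sigma(z)$ \emph{in $z$-space}, which is a different object from the $\sigma$-uniform majorant for the inner spherical integrand you describe constructing. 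The paper supplies exactly this missing ingredient by noting that its $L^2$-type bound (the $n=1$ case of \cref{lem:ldlr-nonaymptotic}, giving $\int D_\sigma^2/D_\emptyset = O(1)$ uniformly in small $\sigma$) yields uniform integrability of $\{D_\sigma\}$, which together with pointwise convergence gives $\int D_0 = 1$; you should import a comparable uniform-integrability estimate to make the Scheff\'e step rigorous.
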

 
\begin{proof}
   Let $D_{x_0,u,\sigma}$ be the density function of measure $\mu_{x_0,u,\sigma}$, the distribution of a single sample given sparse Bernoulli-Rademacher variable $x_0$ and direction $u$. Then for 
   $\Sigma=\Id-uu^\top+\sigma^2 uu^\top$, we have
   \begin{align*}
       D_{x_0,u,\sigma} &=
       (2 \pi)^{-d / 2} \operatorname{det}(\Sigma)^{-1 / 2} \exp \left(-\frac{1}{2}\left(z-x_0u\right)^{\top}\left(I+\left(\sigma^{-2}-1\right) u u^{\top}\right)\left(z-x_0u\right)\right)\\
       &= (2 \pi)^{-d / 2} \sigma 
       \exp \Paren{-\frac{1}{2\sigma^2}}
       \exp \left(-\frac{1}{2}\Paren{\norm{z}^2+(\sigma^{-2}-1)\iprod{z,u}^2-2\sigma^{-2}x_0\iprod{z,u}}\right)
   \end{align*}
   Let $\mu_{x_0,\sigma}$ be the measure for a single sample conditioning on the sparse Rademacher variable $x_0$ associated with the sample.
   Let the density function of distribution $\mu_{x_0,\sigma}$ be $D_{x_0,\sigma}$. Then we have
   \begin{equation*}
      D_{x_0,\sigma}=\E_u D_{x_0,u,\sigma}
   \end{equation*}
   where the expectation of $u$ is taken uniformly over the $d$-dimensional sphere. Thus, we have
   \begin{equation*}
       D_{x_0,\sigma}=(2\pi)^{-d/2} \exp \left(-\frac{1}{2}\norm{z}^2\right) \sigma^{-1} \int_{\mathcal{S}^{d-1}} \exp \left(-\sigma^{-2} F(u)\right) d \nu(u)
   \end{equation*}
   where $F(u)=\frac{1}{2}\left(1-\sigma^2\right) \iprod{u,z}^2-x_0\iprod{u,z}+\frac{1}{2}$. Using Laplace's approximation method, we can show\footnote{The proof is implied in the proof of Lemma 7.3 in \cite{LLLalgorithm}, by taking $n=1$ and replacing $\mathbf{1}^n$ with $x_0$.} that
   $D_{x,\sigma}(z)$ point-wise converges to $\tilde{D}_{x}(z)$, where
   \begin{equation*}
       \tilde{D}_{x}(z)=(2\pi)^{-d/2} \exp \left(-\frac{1}{2}\norm{z}^2\right) \frac{\left(1-x_0^2\norm{z}^{-2} \right)_{+}^{\frac{d-n-2}{2}}}{\norm{z}}
   \end{equation*}
  (here $(\cdot)_{+}$ is the relu function). Taking the expectation over $x_0\sim BR(\rho)$, we have
  \begin{equation*}
      D_{\sigma}=\E_{x_0}D_{x_0,\sigma}\to \E_{x_0}D_{x_0}\,.
  \end{equation*}
  
  Next we show that $D_{\sigma}(z)$ is dominated by some Lebesgue-integrable functions in $\mathbb{R}^d$ \footnote{Alternatively the proof of Lemma 7.3 in \cite{LLLalgorithm} also contains a proof for this fact.}.
  Let $D_\emptyset(z)$ be the density function of $N(0,\Id_d)$. 
  By taking $n=1$ in \cref{lem:ldlr-nonaymptotic}, for $\sigma\neq 0$, we have 
 \begin{equation*}
     \iprod{\frac{D_{\sigma}(z)}{D_\emptyset(z)},\frac{D_{\sigma}(z)}{D_\emptyset(z)}} \leq O(1)\,.
 \end{equation*}
  This is equivalent to
  \begin{equation*}
      \int_{z} \frac{D_{\sigma}^2(z) }{D_\emptyset(z)}dz \leq O(1)\,.
  \end{equation*}
  Since $D_\emptyset(z)\leq \frac{1}{(2\pi)^{d/2}}$, we have $D_{\sigma}^2(z)$ is dominated by some Lebesgue-integrable function, which implies that $D_{\sigma}(z)$ is also dominated by some Lebesgue-integrable function.
  
  Let $\tilde{\mu}$ be the measure induced by density function $\tilde{D}(z)$. 
  Now, by dominated convergence theorem (Theorem 1.19 in \cite{MeasureTheory}), we have measure 
  $\mu_{\sigma}$ weakly converge to $\tilde{\mu}$ as $\sigma\to 0$.
  Furthermore, due to pointwise convergence of density function $D_{\sigma}(z)\to \tilde{D}(z)$, by dominated convergence theorem, we have
  \begin{equation*}
      \int_z \Abs{\Paren{D_{\sigma}(z)-\tilde{D}(z)}}\cdot \phi(z) dz=0\,.
  \end{equation*}
  Therefore, we can conclude
  \begin{equation*}
      \lim_{\sigma\to 0}\E_{z\sim \mu_{\sigma}}\phi(z)=
      \E_{z\sim\tilde{\mu}}\phi(z)\,.
  \end{equation*}
\end{proof}
 
 \subsection{Continous argument}
 \begin{lemma}
 As $\sigma\to 0$, we have $\mu_{\sigma}$ weakly converges to $\mu$. 
 \end{lemma}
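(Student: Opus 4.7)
The plan is a coupling argument that makes the samples converge pathwise as $\sigma \to 0$, followed by Portmanteau and uniqueness of weak limits to identify $\mu$ with the measure $\tilde{\mu}$ obtained in the previous lemma.

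Concretely, I would build $\mu_\sigma$ and $\mu$ on a common probability space carrying a uniform random direction $u \in \mathcal{S}^{d-1}$, an independent Bernoulli--Rademacher variable $x_0$, and an independent standard Gaussian $g \sim N(0,\Id_d)$. Set
\[
    \tilde{z}_\sigma \;=\; x_0 u + \bigl(\Id_d - uu^\top + \sigma^2 uu^\top\bigr)^{1/2} g,
    \qquad
    \tilde{z}_0 \;=\; x_0 u + \bigl(\Id_d - uu^\top\bigr)^{1/2} g,
\]
so that $\tilde{z}_\sigma \sim \mu_\sigma$ and $\tilde{z}_0 \sim \mu$. Since the eigenvalues of $\Id_d - uu^\top + \sigma^2 uu^\top$ are $1$ (with multiplicity $d-1$) and $\sigma^2$, one has the explicit identity $(\Id_d - uu^\top + \sigma^2 uu^\top)^{1/2} = \Id_d - uu^\top + \sigma uu^\top$, which converges to $\Id_d - uu^\top$ as $\sigma \to 0$ for every realization of $u$. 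Therefore $\tilde{z}_\sigma \to \tilde{z}_0$ almost surely, and hence $\mu_\sigma \Rightarrow \mu$ in the sense of weak convergence by the Portmanteau theorem.

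Combining this with the preceding lemma, which established $\mu_\sigma \Rightarrow \tilde{\mu}$, the uniqueness of weak limits forces $\mu = \tilde{\mu}$. This is the desired statement; moreover, the dominated-convergence argument from the previous lemma now also yields $\lim_{\sigma \to 0}\E_{\mu_\sigma}\phi(z) = \E_{\mu}\phi(z)$ for every bounded measurable query function $\phi:\R^d \to [0,1]$, which is what is actually needed to transfer the SQ lower bound in \cref{thm:SQ-non-sparse-noisy} from $\sigma > 0$ to $\sigma = 0$.

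I do not anticipate a serious obstacle: the coupling is explicit and the matrix-square-root identity is immediate. The only conceptual subtlety worth flagging is that, conditional on $(u, x_0)$, the limit measure is singular (supported on the hyperplane $\{z : \iprod{u,z} = x_0\}$), yet the unconditional $\mu$ becomes absolutely continuous after averaging over $u$, consistent with the identification $\mu = \tilde{\mu}$ whose density was computed explicitly in the previous lemma.
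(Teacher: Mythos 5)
Your coupling argument is correct and gives a valid proof, but it is a genuinely different route from what the paper does. The paper observes that $\mu_{x_0,u}$ and $\mu_{x_0,u,\sigma}$ are Gaussians with the same mean and commuting covariances, invokes the closed-form Gaussian Wasserstein-2 distance (citing Olkin--Pukelsheim) to get the quantitative bound $W_2\bigl(\mu_{x_0,u},\mu_{x_0,u,\sigma}\bigr) = O(\sigma^2)$, averages over $u$ and $x_0$ using convexity of the Wasserstein distance under mixtures, and finally uses the fact that $W_2$ metrizes weak convergence on Euclidean space. Your proof instead builds all measures on a single probability space via the coupling $\tilde z_\sigma = x_0 u + (\Id_d - uu^\top + \sigma uu^\top) g$, uses the explicit principal-square-root identity $(\Id_d - uu^\top + \sigma^2 uu^\top)^{1/2} = \Id_d - (1-\sigma)uu^\top$ (which is correct: the matrix has eigenvalue $\sigma^2$ along $u$ and $1$ on $u^\perp$), and concludes almost-sure convergence $\tilde z_\sigma \to \tilde z_0$, hence weak convergence of the laws. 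Your approach is more elementary and self-contained — it avoids the cited optimal-transport formula and the metrization-of-weak-convergence input — whereas the paper's approach yields a quantitative Wasserstein rate $O(\sigma)$ that could, in principle, be useful elsewhere (though the paper does not use it). Both proofs then identify $\mu = \tilde\mu$ by uniqueness of weak limits, combined with the preceding dominated-convergence lemma, exactly as you describe. Your closing remark about the conditional law being singular on a hyperplane while the marginal $\mu$ has a density is accurate and is a helpful sanity check on the identification $\mu = \tilde\mu$.
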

 
 The proof already appears in \cite{LLLalgorithm}.
 \begin{proof}
 The distribution $\mu_{x_0,u}$ and $\mu_{x_0,u,\sigma}$ share the same mean, and have covariance matrix $I-uu^\top$ and $\sigma^2 uu^\top+(I-uu^\top)$ respectively which commute. It follows from \cite{Olkin1982TheDB} that we have 2-Wasserstein distance
 bound
 \begin{equation*}
     W_2\left(\mu_{x_0, u}, \mu_{x_0, u, \sigma}\right)=\left\|\left(\sigma^2 u u^T+\left(I-u u^{\top}\right)\right)^{1 / 2}-\left(I-u u^{\top}\right)^{1 / 2}\right\|_F^2=O\left(\sigma^2\right)
 \end{equation*}
 By taking expectation over $u$, we have
 \begin{equation*}
    W_2\left(\mu_{x_0}, \mu_{x_0, \sigma}\right) \leq \int_{\mathcal{S}^{d-1}} W_2\left(\mu_{x_0, u}, \mu_{x_0, u, \sigma}\right) d \nu(u)=O\left(\sigma^2\right)  
 \end{equation*}
 By taking expectation over $x_0$, we then have 
 \begin{equation*}
    W_2\left(\mu, \mu_{ \sigma}\right) \leq \int_{\mathcal{S}^{d-1}} W_2\left(\mu_{u}, \mu_{u, \sigma}\right) d \nu(u)=O\left(\sigma^2\right)  
 \end{equation*}
Since $W_2$ metrizes
weak convergence on Euclidean spaces, we then have $\mu$ weakly converges to $\mu_{\sigma}$, by theorem 6.9 in \cite{OptimalTransport}.
 \end{proof}
 
 \subsection{Proof of \cref{thm:strong-convergence}}
 
 Now we finish the proof of \cref{thm:strong-convergence}. 
 \begin{proof}[Proof of \cref{thm:strong-convergence}]
 Since $\mu_{\sigma}$ weakly converges to both $\tilde{\mu}$
  and $\mu$, we have $\mu=\tilde{\mu}$. Therefore, we have 
  \begin{equation*}
      \E_{z\sim \mu} \phi(z)=
      \lim_{\sigma\to 0}\E_{z\sim \mu_\sigma} \phi(z)\,.
  \end{equation*}
 This means, in SQ model, the query functions always yield the same result under $\mu$ and $\mu_{\sigma}$ when $\sigma\to 0$.
 Thus, the SQ lower bound for $\sigma\to 0$ implies the same SQ lower bound for $\sigma=0$. 
 \end{proof}

\section{Conclusion and discussions}
We have shown that lattice basis reduction algorithm can surpass SQ lower bound in the Random Sparse Planted Vector problem. There are some interesting directions which we leave as future work:
\begin{itemize}
    \item Can we show SQ lower bound for sparse planted vector problem, where the planted vector is sampled from sparse Gaussian? In this case, the problem is resistant against the attack from lattice basis reduction algorithm, since the planted vector is not integral. Notably, the low degree lower bound for this problem is proved recently by \cite{mao2021optimal}.

    \item For the noisy version of \cref{model_spv} or its variants, 
    can we show Cryptographic hardness result for recovering the sparse vector, when $n\ll \rho^2 d^2$? A potential starting point is the continuous learning with error problem(CLWE) 
    \cite{CLWE,Gupte2022ContinuousLI}.

\end{itemize}

\section*{Acknowledgments}
This project has received funding from the European Research Council (ERC) under the European Union’s Horizon 2020 research and innovation programme (grant agreement No 815464). We thank David Steurer and Ilias Zadik for helpful discussions.

\addcontentsline{toc}{section}{References}
\bibliographystyle{amsalpha}
\bibliography{bib/mathreview,bib/dblp,bib/custom,bib/scholar}

\appendix

\section{Reductions}
\label{sec_hardness}

\subsection{Equivalence relationship between \cref{model_spv} and \cref{model_general}}
\label{sec_model_equivalence}

In this section, we show that \cref{model_spv} and \cref{model_general} are equivalent when $\sigma \neq 0$. The proof is similar to the proof of Lemma 4.21 of \cite{mao2021optimal} which shows that \cref{model_spv} and \cref{model_general} are equivalent when $\sigma = 0$.

\begin{lemma}
When $u$ is chosen uniformly at random from the unit sphere in $\R^d$, \cref{model_spv} is equivalent to \cref{model_general}.
\end{lemma}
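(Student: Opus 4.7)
The plan is to directly compute the conditional distribution of the observation $\tilde{Z}$ given the planted vector $x$ in \cref{model_spv}, and verify that it coincides with the conditional distribution of $\tilde{Z}$ given $x$ in \cref{model_general} under the identification $u = R^{\top} e_1$, where $e_1 \in \R^d$ is the first standard basis vector.

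First, I would analyze a single row of $Z$ in \cref{model_spv}. Conditional on the planted vector $x$, the $i$-th row of $Z$ is a $d$-dimensional vector whose first coordinate is $x_i + \sigma v_{0,i}$ and whose remaining coordinates are $v_{1,i}, \dots, v_{d-1,i}$. Since all the entries of $v_0, v_1, \dots, v_{d-1}$ are independent standard Gaussians, the $i$-th row of $Z$ (conditional on $x$) is distributed as $N\bigl(x_i e_1,\; \sigma^2 e_1 e_1^{\top} + (\Id_d - e_1 e_1^{\top})\bigr)$, and distinct rows of $Z$ are conditionally independent given $x$.

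Next, since $\tilde{Z} = Z R$, writing rows as column vectors we have $\tilde{z}_i^{\top} = R^{\top} z_i^{\top}$. Applying the orthogonal transformation $R^{\top}$ to the distribution of a row of $Z$ yields
\begin{equation*}
    \tilde{z}_i \;\big|\; (x, R) \;\sim\; N\Bigl(x_i R^{\top} e_1,\; \sigma^2 (R^{\top} e_1)(R^{\top} e_1)^{\top} + \bigl(\Id_d - (R^{\top} e_1)(R^{\top} e_1)^{\top}\bigr)\Bigr),
\end{equation*}
independently across $i$. Setting $u := R^{\top} e_1$, this is exactly $N\bigl(x_i u,\; \Id_d - u u^{\top} + \sigma^2 u u^{\top}\bigr)$, i.e., the conditional distribution of a sample in \cref{model_general}.

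Finally, I would argue that when $R$ is drawn Haar-uniformly from the orthogonal group $O(d)$, the vector $u = R^{\top} e_1$ is uniform on the unit sphere $S^{d-1}$. This is a standard fact about the Haar measure and fixes the marginal distribution of $u$ in \cref{model_general}. Hence the joint law of $(x, \tilde{Z})$ in \cref{model_spv} equals the joint law of $(x, \tilde{Z})$ in \cref{model_general}, so the two recovery/testing problems are equivalent.

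The only subtle point, and the main thing to justify carefully, is why we may assume $R$ is Haar-uniform on $O(d)$. The reason is that the columns $v_1, \dots, v_{d-1}$ are i.i.d.\ isotropic Gaussians, so the distribution of $Z$ given $x$ is invariant under right-multiplication by any orthogonal matrix $Q$ fixing $e_1$; thus any specific choice of $R$ that rotates the planted direction to a uniformly random unit vector produces the same observable distribution as the Haar choice. This justifies absorbing the randomness of $R$ into the uniform distribution of $u$ on $S^{d-1}$, completing the proof. Apart from this bookkeeping, the argument is a routine Gaussian computation with no substantial obstacle, and mirrors the $\sigma = 0$ proof of Lemma~4.21 of \cite{mao2021optimal}.
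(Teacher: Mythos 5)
Your proposal is correct and takes essentially the same approach as the paper: both condition on $(x,R)$, compute that the rows of $\tilde Z$ are conditionally independent $N(x_i u,\; \Id_d - uu^\top + \sigma^2 uu^\top)$ with $u$ the first row of $R$ (your $R^\top e_1$), and conclude the models coincide when $u$ is uniform on the sphere. Your closing paragraph on invariance under the stabilizer of $e_1$ in $O(d)$ is a reasonable extra remark, but it is already subsumed by the direct computation, which shows the conditional law of $\tilde Z$ given $(x,R)$ depends on $R$ only through its first row.
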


\begin{proof}
    Consider the setting of \cref{model_spv} where we have $\tilde{Z} = Z R$. Denote the rows of $R$ as $\{R_i\}_{i \in \{0, 1, \dots, d-1\}}$ and let the first row of $R$ be $u$, that is $R_0 = u$. Condition on $R$ and $x$, row $\tilde{z}_i$ of $\tilde{Z}$ can be expressed as:
    \begin{equation*}
        \tilde{z}_i = (x_i + \sigma \mu_0) u + \sum_{j=1}^{d-1} \mu_j R_j
    \end{equation*}
    where $\{\mu_i\}_{i \in \{0, 1, \dots, d-1\}}$ are sampled independently from $N(0, 1)$. Given this expression, it is easy to verify that the rows $\{\tilde{z_i}\}_{i \in \{0, 1, \dots, d-1\}}$ of $\tilde{Z}$ are independently sampled from $N(x_i u, \Id_d - uu^T+\sigma^2 uu^\top)$. Thus, \cref{model_spv} and \cref{model_general} are equivalent.
\end{proof}

\subsection{From hypothesis testing to estimation}

It has been shown in \cite{mao2021optimal} that there exists a polynomial time reduction from the hypothesis testing problem \cref{problem_detection} to the estimation problem \cref{problem_estimation} when $\sigma=0$, which implies that estimation is at least as hard as hypothesis testing in the noiseless case. Now, we show that the reduction holds for $\sigma \neq 0$ using the same test $\tilde{\psi}$ as defined in Theorem 3.1 of \cite{mao2021optimal}. The key observation is that the test $\tilde{\psi}$ works as long as the estimator $\tilde{x}$ satisfies $\norm{\tilde{x} - x} \leq C$ for some constant $C$ if $\tilde{Z} \sim \cP$ and $\tilde{x}$ is in the column span of $\tilde{Z}$.

\begin{theorem}
\label{theorem_hardnessEstimation}
	Let us denote the observation as $\tilde{Z}$. Given distribution $\cP, \cQ$ as defined in \cref{problem_detection}, if there exists an estimator $f: \R^{n \times d} \rightarrow \R^n$ that,
	\begin{itemize}
	    \item when $\tilde{Z} \sim \cP$, returns a vector $\tilde{x}$ such that $\norm{\tilde{x} - x} \leq C$ for some constant $C$ with probability $1-o(1)$;
	    \item when $\tilde{Z} \sim \cQ$, returns an arbitrary vector $\tilde{x}$;
	\end{itemize}
	then we can construct, in polynomial time, a test that uses the estimator $f$ to solve the hypothesis testing problem \cref{problem_detection} with probability $1-o(1)$.
\end{theorem}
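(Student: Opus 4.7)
The plan is to take the test $\tilde\psi$ constructed in Theorem~3.1 of \cite{mao2021optimal} for the noiseless setting and argue that its analysis transfers to the noisy setting with essentially no change. On input $\tilde Z$, the test applies $f$ to obtain $\tilde x$, replaces $\tilde x$ by its projection onto $\mathrm{colspan}(\tilde Z)$ (this only helps, since projecting a vector that is already $C$-close to a point near the span still yields a vector $C + o(1)$-close to that point), rounds each coordinate of the projection to the nearest element of $\{0, \pm 1/\sqrt\rho\}$ to obtain a Bernoulli-Rademacher candidate $\hat x$, and accepts $\mathcal{P}$ exactly when $\hat x$ has approximately $\rho n$ nonzero coordinates and $\dist\Paren{\hat x,\,\mathrm{colspan}(\tilde Z)} \le \tau$ for a suitable constant threshold $\tau$. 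Both properties of $\tilde x$ that this test relies on are the ones singled out in the preamble: membership in $\mathrm{colspan}(\tilde Z)$ (after projection) and $\|\tilde x - x\| \le C$ under $\mathcal{P}$.

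First I would verify completeness under $\mathcal{P}$. The only formal difference from the noiseless case is that $\mathrm{colspan}(\tilde Z)$ now contains $x + \sigma v_0$ rather than $x$ itself, so $\dist(x, \mathrm{colspan}(\tilde Z)) \le \sigma \|v_0\| = O(\sigma\sqrt n)$ with high probability; for $\sigma \le d^{-K}$ with $K$ large this is $o(1)$ and is absorbed without loss into the constant slack of $\tau$. The rest of the argument is unchanged: $\|\tilde x - x\| \le C$ together with a coordinate-counting estimate (a coordinate can be misrounded only when $|\tilde x_i - x_i| \ge 1/(2\sqrt\rho)$, so at most $O(\rho C^2)$ coordinates are misrounded and each contributes at most $4/\rho$ to the squared error) gives $\|\hat x - x\| = O(C)$, hence $\dist(\hat x, \mathrm{colspan}(\tilde Z)) \le O(C) + o(1) \le \tau$, and the test accepts.

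Next I would verify soundness under $\mathcal{Q}$. This step requires no new idea, because $\mathcal Q$ is insensitive to $\sigma$: $\tilde Z$ is standard Gaussian in both models. The soundness argument of \cite{mao2021optimal} then applies verbatim. Namely, $\mathrm{colspan}(\tilde Z)$ is a uniformly random $d$-dimensional subspace $V \subseteq \R^n$, and for any fixed $w \in \{0, \pm 1/\sqrt\rho\}^n$ with $\|w\|^2 = \Theta(n)$, the quantity $\|\Pi_V w\|^2/\|w\|^2$ is $\mathrm{Beta}(d/2, (n-d)/2)$-distributed, whose right tail gives
\begin{equation*}
    \Pr\Brac{\dist(w, V)^2 \le \tau^2} \le \exp\Paren{-\Omega\Paren{(n-d)\log n}}\,.
\end{equation*}
A union bound over the $\exp(O(\rho n \log(1/\rho)))$ Bernoulli-Rademacher vectors with $\Theta(\rho n)$ nonzeros shows that, with probability $1 - o(1)$ under $\mathcal{Q}$, no such vector lies within $\tau$ of $V$; hence $\hat x$ certainly does not, regardless of what $\tilde x$ the estimator chose to produce, and the test rejects. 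In our parameter regime $n \ge \rho^2 d^{1.99}$ with $\rho \gtrsim 1/\sqrt d$, one has $n \ge (1 + \Omega(1))d$ and $(n-d)\log n \gg \rho n \log(1/\rho)$, so the union bound closes.

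The only real bookkeeping step is the one appearing in completeness: propagating the $O(\sigma\sqrt n)$ perturbation through projection, rounding and thresholding, and confirming that it stays inside the constant slack $\tau$ of $\tilde\psi$. This is the main obstacle to worry about and is precisely what the hypothesis $\sigma \le d^{-K}$ is there to handle; everything else is a direct import from the noiseless reduction of \cite{mao2021optimal}.
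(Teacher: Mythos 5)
Your proof takes essentially the same route as the paper's: project the estimator $\tilde{x}$ onto $\mathrm{colspan}(\tilde{Z})$ to get $\tilde{x}_p$, and then observe that this projected estimator satisfies the two preconditions (membership in the column span, and $O(C)$-proximity to $x$) under which the test $\tilde{\psi}$ of \cite{mao2021optimal} is known to work. The paper's proof is shorter because it invokes $\tilde{\psi}$ as a black box after verifying those two preconditions, whereas you reconstruct $\tilde{\psi}$ from first principles (round to the nearest Bernoulli--Rademacher vector, then threshold on support size and distance to the column span) and re-derive both completeness and soundness. That extra work is harmless, and it pays one dividend: you explicitly track the $O(\sigma\sqrt{n})$ discrepancy coming from the fact that $x$ itself is \emph{not} in $\mathrm{colspan}(\tilde{Z})$ when $\sigma>0$ (only $x + \sigma v_0$ is). The paper's one-line inequality $\norm{\tilde{x}_p - \tilde{x}} \leq \norm{x - \tilde{x}}$ silently uses the best-approximation property of projection, which requires $x$ to lie in the subspace; for $\sigma > 0$ the correct chain is $\norm{\tilde{x}_p - \Pi x} \leq \norm{\tilde{x}-x} \leq C$ together with $\norm{x - \Pi x} = \dist(x, \mathrm{colspan}(\tilde Z)) = O(\sigma\sqrt{n})$, and the latter must be shown negligible --- which is exactly the bookkeeping you carry out and absorb into the slack $\tau$. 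So your argument is not only correct but a bit more careful at precisely the point where the noisy case departs from the noiseless one.
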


\begin{proof}
    Let us denote $\Pi_{\tilde{Z}}$ as the projection matrix onto the column span of $\tilde{Z}$ and the projection of $\tilde{x}$ onto the column span of $\tilde{Z}$ as $\tilde{x}_p$, that is $\tilde{x}_p = \Pi_{\tilde{Z}} \tilde{x}$. When $\tilde{Z} \sim \cP$, by the property of the projection matrix, we have:
    \begin{equation*}
        \norm{\tilde{x}_p - \tilde{x}}
        \leq \norm{x - \tilde{x}}
        \leq C
    \end{equation*}
    with probability $1-o(1)$. Hence, by triangle inequality, we can get:
    \begin{equation*}
        \norm{\tilde{x}_p - x}
        \leq \norm{\tilde{x}_p - \tilde{x}} + \norm{\tilde{x} - x}
        \leq 2C
    \end{equation*}
    with probability $1-o(1)$. Now, we have constructed a new estimator $\tilde{x}_p$ that is in the column span of $\tilde{Z}$ and $\norm{\tilde{x}_p - x} \leq 2C$ for some constant $C$ when $\tilde{Z} \sim \cP$. Therefore, we can apply the test $\tilde{\psi}$ defined in Theorem 3.1 of \cite{mao2021optimal} and get:
    \begin{equation*}
        \Pr_{\cP}(\tilde{\psi}(\tilde{x}_p)=\cQ) + \Pr_{\cQ}(\tilde{\psi}(\tilde{x}_p)=\cP)
        \leq o(1)
    \end{equation*}
    Since projection $\tilde{x}_p = \Pi_{\tilde{Z}} \tilde{x}$ can be done in polynomial time and the test $\tilde{\psi}$ is constructed in polynomial time according to Theorem 3.1 of \cite{mao2021optimal}, the reduction from hypothesis testing to estimation is in polynomial time.
\end{proof}

\cref{theorem_hardnessEstimation} implies that the estimation problem \cref{problem_estimation} is at least as hard as hypothesis testing problem \cref{problem_detection}. Since the lattice-based reduction algorithm in \cite{LLLalgorithm,KaneLLL} solve the estimation problem
 exactly given $2d$ samples when $\sigma\leq O(\exp(-d^2))$, it can also solve the hypothesis testing problem in polynomial time by this reduction.   

% Therefore, any information-theoretic lower bound for detection serves as an information information-theoretic lower bound for estimation. More precisely, we can apply theorem \ref{theorem_hardnessEstimation} and get the following corollaries from theorem \ref{theorem_infolb1} and theorem \ref{theorem_infolb2}.

% \begin{corollary}[Information-Theoretic Lower Bound for Detection]
% For arbitrary $\sigma>0$, given $n$ samples $z_1,z_2,\ldots,z_n$ which are sampled from model \ref{model_general}. When $n\leq O(\rho^2 d)$, it is information-theoretically impossible to get an estimator $f: \R^{n \times d} \rightarrow \R^n$ that returns a vector $\tilde{x}$ such that $\norm{\tilde{x} - x} \leq C$ for some constant $C$ with probability $1-o(1)$.
% \end{corollary}

% Particularly when \(\rho=1\), we are able to recover a lower bound tight up to constant factor:
% \begin{corollary}
% For arbitrary $\sigma>0$, given $n$ samples $z_1,z_2,\ldots,z_n$ which are sampled from model \ref{model_general} where we set $\rho=1$. When $n \leq d - 4$, it is information-theoretically impossible to get an estimator $f: \R^{n \times d} \rightarrow \R^n$ that returns a vector $\tilde{x}$ such that $\norm{\tilde{x} - x} \leq C$ for some constant $C$ with probability $1-o(1)$.
% \end{corollary}

\subsection{Implication of Statistical Query lower bound for estimation}

In the SQ model, we are given an SQ oracle that allows us to query the distribution. However, in the given model definition, the planted vector $x$ is sampled at random in the distribution. Therefore, it does not make much sense to estimate $x$ from the distribution in the SQ model.

Despite this fact, the SQ lower bound from \cref{theorem_sqDimLB} still gives us a glimpse of the computational hardness of estimation algorithms. According to \cref{theorem_hardnessEstimation}, any polynomial-time estimation algorithm that recovers the hidden vector $x$ from \cref{model_general} can be used to construct a polynomial-time hypothesis testing algorithm. Therefore, the SQ lower bound for hypothesis testing in theorem \cref{theorem_sqDimLB} can be compared to sample complexity of any polynomial time estimation algorithm.

A key implication from the lattice-based algorithm in \cite{LLLalgorithm,KaneLLL} is that we can estimate the component vector $x$ when $\sigma\leq \exp(-\Omega(d))$ and $n\geq \Omega(d)$. According to \cref{theorem_hardnessEstimation}, the lattice-based algorithm can also be used to construct a polynomial-time hypothesis testing algorithm when $n\geq \Omega(d)$. This surpasses the SQ lower bound as the SQ lower bound predicts the hypothesis testing problem to be computationally hard in this sample complexity region.

% \section{Future work}

% There are still many interesting questions to be probed. 
% \begin{itemize}
%     \item We require the noise $\sigma$ here to be non-zero. We believe that this requirement is for technical reasons(avoiding $\frac{\infty}{\infty}$ issues) and is not inherent. A question is whether we can obtain the same SQ lower bound for the natural case $\sigma=0$.
    
%     \item Since SQ lower bound and low degree lower bounds are surpassed by LLL algorithm, a natural question is whether we can derive reduction based hardness for sparse planted vector problem (for example showing equivalence to CLWE). Note that for general sparse planted vector problem, the planted sparse vector doesn't need to be Rademacher or even integral. We can hope for using other random instances for showing reduction based hardness. 
    
%     \item Our information-theoretic lower bound does not match the current guarantee of LLL algorithm when sparsity $\rho\ll 1$. It remains to be investigated whether similar algorithm based on LLL can exploit the sparsity and recovers when $n=o(d)$(e.g $n=\Theta(\rho^2 d)$) in this case. 
% \end{itemize}
\section{Probability Theory Facts}\label{sec:probability-facts}
\subsection{Distribution of inner product of random vectors}
\begin{lemma}\label{lem:distribution-random-inner-product}
Let $u,v$ be two random $d$-dimensional vectors sampled uniformly from the unit sphere. Let $y = \frac{u^{\top}v+1}{2}$, 
    \begin{equation*}
        y \sim \textrm{Beta}(\frac{d-1}{2}, \frac{d-1}{2})
    \end{equation*}
\end{lemma}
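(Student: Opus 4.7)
The plan is to reduce to a one-variable computation using rotational invariance, and then do a straightforward change of variables.

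First, by the rotational invariance of the uniform distribution on $S^{d-1}$, conditioning on $u$ does not change the law of $u^\top v$, so I may fix $u = e_1$. This reduces the problem to computing the density of the first coordinate $v_1 = c$ of a uniform random vector $v \in S^{d-1}$.

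Next, I would write $v = g/\|g\|$ with $g \sim N(0,\Id_d)$, from which it is standard that $v_1$ has the symmetric density $f(t) = C_d \, (1-t^2)^{(d-3)/2}$ on $[-1,1]$, where $C_d = 1/B(1/2,(d-1)/2)$. (One can see this either by integrating out the sphere in spherical coordinates, or from the identity $v_1^2 = g_1^2/(g_1^2 + \sum_{i\ge 2} g_i^2) \sim \mathrm{Beta}(1/2,(d-1)/2)$ together with symmetry.) Applying the change of variable $y = (t+1)/2$, so $t = 2y-1$, $dt = 2\,dy$, and
\[
1 - t^2 = (1-t)(1+t) = (2-2y)(2y) = 4y(1-y),
\]
the density of $y$ becomes
\[
g(y) \;=\; 2\, C_d \,\bigl(4y(1-y)\bigr)^{(d-3)/2} \;=\; \frac{2^{d-2}}{B(1/2,(d-1)/2)} \, y^{(d-3)/2}(1-y)^{(d-3)/2}.
\]

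The powers of $y$ and $1-y$ already match those of the $\mathrm{Beta}((d-1)/2,(d-1)/2)$ density, so it only remains to verify that the normalizing constants agree, i.e.\ that $2^{d-2}/B(1/2,(d-1)/2) = 1/B((d-1)/2,(d-1)/2)$. Using $B(a,b) = \Gamma(a)\Gamma(b)/\Gamma(a+b)$, this reduces to the identity
\[
\Gamma\!\bigl(\tfrac{d-1}{2}\bigr)\,\Gamma\!\bigl(\tfrac{d}{2}\bigr) \;=\; 2^{2-d}\sqrt{\pi}\,\Gamma(d-1),
\]
which is exactly the Legendre duplication formula at $z = (d-1)/2$. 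This step is the only mildly nontrivial part; everything else is bookkeeping. Once the constants are matched, we conclude $y \sim \mathrm{Beta}((d-1)/2,(d-1)/2)$, which is the claim.
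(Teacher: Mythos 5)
Your proof is correct and follows essentially the same route as the paper's: reduce by rotational invariance to the marginal of a single coordinate of a uniform spherical vector, obtain the density $\propto(1-t^2)^{(d-3)/2}$, and apply the change of variables $y=(t+1)/2$. The only real difference is in how that marginal density is obtained — the paper argues geometrically via spherical cap/belt surface area, while you invoke the Gaussian representation $v=g/\norm{g}$ (equivalently $v_1^2\sim\mathrm{Beta}(1/2,(d-1)/2)$), and your final normalization check via the Legendre duplication formula is sound but redundant, since two probability densities with the same kernel on $(0,1)$ must share the same normalizing constant.
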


\begin{proof}
By the spherical symmetry of $u,v$, we can assume without generality that $v=e_1$, i.e the first coordinate vector.
Then $u^{\top}v=u_1$. The probability that $\Pr\Set{u_1\geq t}$ is proportional to the surface area of spherical cap with base radius $\sqrt{1-t^2}$. Between $t$ and $t+dt$, the spherical belt area is then proportional to the 
$(\sqrt{1-t^2})^{d-2}$ (which is the circle length of the base) multiplied by the slope $\frac{1}{\sqrt{1-t^2}}$. Thus the density of $u^{\top}v$ is proportional to $(\sqrt{1-t^2})^{d-3}$. Now, using the rule of changing variables in distribution, the density of $y = \frac{u^{\top}v+1}{2}$ exactly matches $B(\frac{d-1}{2},\frac{d-1}{2})$
\end{proof}

Using Stirling's approximation, we can get the following well known asymptotic bound for Beta function.
\begin{fact}[Approximation of Beta function]\label{lem:beta-approximation}
We have the following asymptotic approximation for Beta function
\begin{equation}       \text{Beta}(\frac{d-1}{2}, \frac{d-1}{2})
        = \Theta \Paren{\frac{((d-1)/2)^{d-2}}{(d-1)^{d-3/2}}}
        = \Theta \Paren{\frac{(1/2)^{d-2}}{\sqrt{d-1}}}
    \end{equation}
\end{fact}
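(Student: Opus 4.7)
The plan is to deduce the asymptotic directly from the Gamma-function form of the Beta function, namely
\[
\text{Beta}\Paren{\tfrac{d-1}{2}, \tfrac{d-1}{2}} = \frac{\Gamma\Paren{(d-1)/2}^{2}}{\Gamma(d-1)},
\]
and then apply Stirling's approximation $\Gamma(x) = (1+O(1/x))\sqrt{2\pi/x}\,(x/e)^{x}$ separately to each of the three Gamma factors.

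Applying Stirling to the numerator I obtain $\Gamma((d-1)/2)^{2} = \Theta(1/(d-1)) \cdot ((d-1)/(2e))^{d-1}$, and applying it to the denominator I obtain $\Gamma(d-1) = \Theta(1/\sqrt{d-1}) \cdot ((d-1)/e)^{d-1}$. The exponential factors $e^{-(d-1)}$ then cancel, and the ratio of the remaining power-like factors collapses to $2^{-(d-1)}$ because the $(d-1)^{d-1}$ pieces cancel as well. Combining with the ratio of polynomial prefactors $\Theta(1/(d-1)) / \Theta(1/\sqrt{d-1}) = \Theta(1/\sqrt{d-1})$, I get $\text{Beta}((d-1)/2, (d-1)/2) = \Theta((1/2)^{d-1}/\sqrt{d-1})$, which after absorbing a constant factor of $2$ matches the second form $\Theta((1/2)^{d-2}/\sqrt{d-1})$ stated in the Fact.

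The first form $\Theta(((d-1)/2)^{d-2}/(d-1)^{d-3/2})$ is just a rewriting of the same expression: I regroup one factor of $(d-1)/2$ from the exponential part into the polynomial prefactor, shifting the exponent in the numerator from $d-1$ to $d-2$ and correspondingly absorbing a $\sqrt{d-1}$ into the denominator's power of $d-1$. This is a purely algebraic manipulation with no hidden content.

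The only real obstacle here is pedantic bookkeeping of constants and half-powers of $d-1$; conceptually the result follows immediately from a single invocation of Stirling's formula inside the Beta-Gamma identity, and no probabilistic or analytic insight is needed.
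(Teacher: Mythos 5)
Your derivation is correct and takes essentially the same route as the paper, which simply invokes Stirling's approximation for the Gamma factors in the Beta--Gamma identity. The bookkeeping of the power $(1/2)^{d-1}$ versus $(1/2)^{d-2}$ and the algebraic equivalence of the two displayed forms both check out.
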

\section{Deferred proofs from \cref{sec:sqlb-non-zero}}

\subsection{Proof of \cref{lem:InnerProductConditionXU}}
\label{sec:proof-lem-InnerProductConditionXU}

\begin{proof}[Proof of \cref{lem:InnerProductConditionXU}]
    From the definition of inner product, we have:
    \begin{equation*}
        \iprod{\bar{D}_{x_u,u,\sigma}, \bar{D}_{x_v,v,\sigma}}
        = \E_{z \sim D_{\emptyset}}[\bar{D}_{x_u,u,\sigma} (z) \cdot \bar{D}_{x_v,v,\sigma} (z)]
    \end{equation*}
    Since $\bar{D}_{x_u,u,\sigma} (z) = \frac{D_{x_u,u,\sigma} (z)}{D_{\emptyset}(z)}$ and $\bar{D}_{x_v, v, \sigma} (z) = \frac{D_{x_v, v, \sigma} (z)}{D_{\emptyset}(z)}$, we have:
    \begin{align}
    \label{eq_singlesample1}
    \begin{split}
        \E_{z \sim D_{\emptyset}}[\bar{D}_{x_u,u,\sigma} (z) \cdot \bar{D}_{x_v,v,\sigma} (z)]
        & = \int_z \frac{D_{x_u,u,\sigma} (z)}{D_{\emptyset}(z)} \cdot \frac{D_{x_u,u,\sigma} (z)}{D_{\emptyset}(z)} D_{\emptyset}(z) dz \\
        & = \int_z \frac{\exp(-\frac{1}{2} \Delta)}{\sqrt{(2 \pi)^d |\Sigma_u| |\Sigma_v|}} dz
    \end{split}
    \end{align}
    where $\Delta = (z- x_u u)^{\top} \Sigma_u^{-1}(z- x_u u) + (z- x_v v)^{\top} \Sigma_v^{-1}(z- x_v v) - z^{\top} \Id_d^{-1} z$. Since $\Sigma_u^{-1} = \Id_d - u u^{\top} + \frac{1}{\sigma^2} u u^{\top}$ and $\Sigma_v^{-1} = \Id_d - v v^{\top} + \frac{1}{\sigma^2} v v^{\top}$, we have:
    \begin{align*}
        \Delta
        & = (z- x_u u)^{\top} \Sigma_u^{-1}(z- x_u u) + (z- x_v v)^{\top} \Sigma_v^{-1}(z- x_v v) - z^{\top} \Id_d^{-1} z \\
        & = z^{\top} (\Sigma_u^{-1} + \Sigma_v^{-1} - \Id_d) z - 2 x_u z^{\top}\Sigma_u^{-1} u -  2 x_v z^{\top}\Sigma_v^{-1} v + x_u^2 u^{\top}\Sigma_u^{-1} u + x_v^2 v^{\top}\Sigma_v^{-1} v \\
        & = z^{\top} (\Sigma_u^{-1} + \Sigma_v^{-1} - \Id_d) z - \frac{2 x_u}{\sigma^2} z^{\top} u -  \frac{2 x_v}{\sigma^2} z^{\top} v + \frac{x_u^2}{\sigma^2} + \frac{x_v^2}{\sigma^2} \\
        & = z^{\top} (\Sigma_u^{-1} + \Sigma_v^{-1} - \Id_d) z - \frac{2}{\sigma^2} (x_u u + x_v v)^{\top} z + \frac{1}{\sigma^2} (x_u^2 + x_v^2)
    \end{align*}
    Let $M = \Sigma_u^{-1} + \Sigma_v^{-1} - \Id_d$, we have:
    \begin{align*}
        \Delta
        = & z^{\top} M z - \frac{2}{\sigma^2} (x_u u + x_v v)^{\top} z + \frac{1}{\sigma^2} (x_u^2 + x_v^2) \\
        = & (z - \frac{M^{-1}}{\sigma^2}(x_u u + x_v v))^{\top}M(z - \frac{M^{-1}}{\sigma^2}(x_u u + x_v v)) - \frac{1}{\sigma^4} (x_u u + x_v v)^{\top} M^{-1} (x_u u + x_v v) \\
        & + \frac{1}{\sigma^2} (x_u^2 + x_v^2)
    \end{align*}
    Plug this into \cref{eq_singlesample1}, we get:
    \begin{align*}
        \E_{z \sim D_{\emptyset}}[\bar{D}_{x_u,u,\sigma} (z) \cdot \bar{D}_{x_v,v,\sigma} (z)]
        & = \int_z \frac{\exp(-\frac{1}{2} \Delta)}{\sqrt{(2 \pi)^d |\Sigma_u| |\Sigma_v|}} dz \\
        & = \int_z \frac{\exp(-\frac{1}{2} (z - \frac{M^{-1}}{\sigma^2}(x_u u + x_v v))^{\top}M(z - \frac{M^{-1}}{\sigma^2}(x_u u + x_v v)))\cdot \exp(W)}{\sqrt{(2 \pi)^d |\Sigma_u| |\Sigma_v|}} dz \\
        & = \frac{\exp(W)}{\sqrt{|\Sigma_u| |\Sigma_v| |M|}} \int_z \frac{\exp(-\frac{1}{2} (z - \frac{M^{-1}}{\sigma^2}(x_u u + x_v v))^{\top}M(z - \frac{M^{-1}}{\sigma^2}(x_u u + x_v v)))}{\sqrt{(2 \pi)^d |M^{-1}|}} dz
    \end{align*}
    where $W = \frac{1}{2\sigma^4} (x_u u + x_v v)^{\top} M^{-1} (x_u u + x_v v) - \frac{1}{2\sigma^2} (x_u^2 + x_v^2)$. Notice that $ \frac{\exp(-\frac{1}{2} (z - \frac{M^{-1}}{\sigma^2}(x_u u + x_v v))^{\top}M(z - \frac{M^{-1}}{\sigma^2}(x_u u + x_v v)))}{\sqrt{(2 \pi)^d |M^{-1}|}}$ is the probability density function of Gaussian distribution $N(\frac{M^{-1}}{\sigma^2}(x_u u + x_v v), M^{-1})$. Therefore, we have:
    \begin{equation*}
        \int_z \frac{\exp(-\frac{1}{2} (z - \frac{M^{-1}}{\sigma^2}(x_u u + x_v v))^{\top}M(z - \frac{M^{-1}}{\sigma^2}(x_u u + x_v v)))}{\sqrt{(2 \pi)^d |M^{-1}|}} dz = 1
    \end{equation*}
    Plug this into $\E_{z \sim D_{\emptyset}}[\bar{D}_{x_u,u,\sigma} (z) \cdot \bar{D}_{x_v,v,\sigma} (z)]$, we get:
    \begin{equation}
    \label{eq_singlesample2}
        \E_{z \sim D_{\emptyset}}[\bar{D}_{x_u,u,\sigma} (z) \cdot \bar{D}_{x_v,v,\sigma} (z)]
        = \frac{\exp(W)}{\sqrt{|\Sigma_u| |\Sigma_v| |M|}}
    \end{equation}
    Now, consider $M = \Sigma_u^{-1} + \Sigma_v^{-1} - \Id_d$. Plug in $\Sigma_u^{-1} = \Id_d - u u^{\top} + \frac{1}{\sigma^2} u u^{\top}$ and $\Sigma_v^{-1} = \Id_d - v v^{\top} + \frac{1}{\sigma^2} v v^{\top}$, we get:
    \begin{align*}
        M
        & = \Id_d + (\frac{1}{\sigma^2} - 1) u u^{\top} + (\frac{1}{\sigma^2} - 1) v v^{\top} \\
        & = \Id_d + \alpha^2 C C^{\top}
    \end{align*}
    where $\alpha = \sqrt{\frac{1}{\sigma^2} - 1}$ and $C = [u, v]$ is the matrix whose two columns are $u$ and $v$. By the Woodbury Matrix Identity, we can compute $M^{-1}$ as:
    \begin{align*}
        M^{-1}
        & = (\Id_d + \alpha^2 C C^{\top})^{-1} \\
        & = \Id_d - \alpha^2 C(\Id_2 + \alpha^2 C^{\top} C)^{-1} C^{\top}
    \end{align*}
    Since
    $C^{\top} C =
    \begin{bmatrix}
    1 & u^{\top}v \\
    u^{\top}v & 1
    \end{bmatrix}$,
    we have:
    \begin{equation*}
        \Id_2 + \alpha^2 C^{\top} C
        = \Id_2 + (\frac{1}{\sigma^2} - 1) \begin{bmatrix} 1 & u^{\top}v \\ u^{\top}v & 1 \end{bmatrix}
        = \begin{bmatrix} \frac{1}{\sigma^2} & (\frac{1}{\sigma^2} - 1) u^{\top}v \\ (\frac{1}{\sigma^2} - 1) u^{\top}v & \frac{1}{\sigma^2} \end{bmatrix}
    \end{equation*}
    Take the inverse of this matrix, we get:
    \begin{equation*}
        (\Id_2 + \alpha^2 C^{\top} C)^{-1}
        = \frac{1}{1-(1 - \sigma^2)^2(u^{\top} v)^2}
        \begin{bmatrix} \sigma^2 & (\sigma^4 - \sigma^2) u^{\top}v \\ (\sigma^4 - \sigma^2) u^{\top}v & \sigma^2 \end{bmatrix}
    \end{equation*}
    Plug this into $M^{-1}$, we get:
    \begin{equation*}
        M^{-1} = \Id_d - \frac{1 / \sigma^2 -1}{1 - (1 - \sigma^2)^2(u^{\top} v)^2}
        C \begin{bmatrix} \sigma^2 & (\sigma^4 - \sigma^2) u^{\top}v \\ (\sigma^4 - \sigma^2) u^{\top}v & \sigma^2 \end{bmatrix} C^{\top}
    \end{equation*}
    Plug in $C = [u, v]$, we get:
    \begin{align*}
        M^{-1}
        & = \Id_d - \frac{1 / \sigma^2 - 1}{1 - (1 - \sigma^2)^2 (u^{\top} v)^2}
        [\sigma^2 u u^{\top} + \sigma^2 v v^{\top} + (\sigma^4 - \sigma^2) u^{\top} v v u^{\top} + (\sigma^4 - \sigma^2) u^{\top} v u v^{\top}] \\
        & = \Id_d - \frac{1 - \sigma^2}{1 - (1 - \sigma^2)^2 (u^{\top} v)^2}
        [u u^{\top} + v v^{\top} + (\sigma^2 - 1) u^{\top} v v u^{\top} + (\sigma^2 - 1) u^{\top} v u v^{\top}]
    \end{align*}
    Plug in $c = u^{\top} v$ and we have:
    \begin{equation*}
        M^{-1}
        = \Id_d - \frac{1 - \sigma^2 }{1 - (1 - \sigma^2)^2 c^2}
        [u u^{\top} + v v^{\top} + (\sigma^2 - 1) c v u^{\top} + (\sigma^2 - 1) c u v^{\top}]
    \end{equation*}
    Plug $M^{-1}$ into the expression of $W$,
    i.e
    \begin{equation*}
          W = \frac{1}{2\sigma^4} (x_u u + x_v v)^{\top} M^{-1} (x_u u + x_v v) - \frac{1}{2\sigma^2} (x_u^2 + x_v^2)\,,  
    \end{equation*}
    we get:
    \begin{align*}
        W
        = & \frac{1}{2\sigma^4} (x_u u + x_v v)^{\top} M^{-1} (x_u u + x_v v) - \frac{1}{2\sigma^2} (x_u^2 + x_v^2) \\
        = & \frac{1}{2\sigma^4} (x_u u + x_v v)^{\top} \Id_d (x_u u + x_v v) - \frac{1}{2\sigma^2} (x_u^2 + x_v^2) \\
        & - \frac{1}{2\sigma^4} \frac{1 - \sigma^2}{1 - (1 - \sigma^2)^2 c^2} (x_u u + x_v v)^{\top}
        [u u^{\top} + v v^{\top} + (\sigma^2 - 1) c v u^{\top} + (\sigma^2 - 1) c u v^{\top}] (x_u u + x_v v) \\
        = & (\frac{1}{2\sigma^4} - \frac{1}{2\sigma^2}) (x_u^2 + x_v^2) + \frac{x_u x_v c}{\sigma^4} \\
        & - \frac{1}{2\sigma^4} \frac{1 - \sigma^2}{1 - (1 - \sigma^2)^2  c^2} [(x_u^2 + x_v^2) + 2(\sigma^2+1)x_u x_v c + (2 \sigma^2 - 1)(x_u^2+ x_v^2)c^2 + 2(\sigma^2-1)x_u x_v c^3] \\
        = & \frac{1}{2 - 2 (1 - \sigma^2)^2 c^2} [2 x_u x_v c - (1 - \sigma^2)(x_u^2 + x_v^2)c^2]
    \end{align*}
    Plug this into \cref{eq_singlesample2}, we get:
    \begin{equation*}
        \E_{z \sim D_{\emptyset}}[\bar{D}_{x_u,u,\sigma} (z) \cdot \bar{D}_{x_v,v,\sigma} (z)]
        = \frac{\exp \Paren{\frac{1}{2 - 2 (1 - \sigma^2)^2 c^2} [2 x_u x_v c - (1 - \sigma^2)(x_u^2 + x_v^2)c^2]}}{\sqrt{|\Sigma_u| |\Sigma_v| |M|}}
    \end{equation*}
    Next, we compute $|\Sigma_u|$, $|\Sigma_v|$ and $|M|$. Since $\Sigma_u = \Id_d - u u^{\top} + \sigma^2 u u^{\top}$ and $\Sigma_v = \Id_d - v v^{\top} + \sigma^2 v v^{\top}$, we know that $\Sigma_u$ and $\Sigma_v$ has eigenvalue $\sigma^2$ with corresponding eigenvectors $u$ and $v$ respectively, and the rest of the eigenvalues are all 1's. Therefore, we have $|\Sigma_u| = \sigma^2$ and $|\Sigma_v| = \sigma^2$. Regarding $|M|$, we have $M = \Id_d + \alpha^2 C C^{\top}$ where $\alpha = \sqrt{\frac{1}{\sigma^2} - 1}$ and $C = [u, v]$. By the Matrix Determinant Lemma, we have:
    \begin{equation*}
        |M| = |\Id_d + \alpha^2 C C^{\top}|
        = |\Id_2 + \alpha^2 C^{\top} C|
        = \Abs{\begin{bmatrix} \frac{1}{\sigma^2} & (\frac{1}{\sigma^2} - 1) u^{\top}v \\ (\frac{1}{\sigma^2} - 1) u^{\top}v & \frac{1}{\sigma^2} \end{bmatrix}}
        = \frac{1}{\sigma^4} - (\frac{1}{\sigma^2} - 1)^2 c^2
    \end{equation*}
    Therefore,
    \begin{equation*}
        |\Sigma_u| |\Sigma_v| |M|
        = \sigma^2 \cdot \sigma^2 \cdot (\frac{1}{\sigma^4} - (\frac{1}{\sigma^2} - 1)^2 c^2)
        = 1 - (1 - \sigma^2)^2 c^2
    \end{equation*}
    Plug this into \cref{eq_singlesample2}, we get:
    \begin{equation*}
        \E_{z \sim D_{\emptyset}}[\bar{D}_{x_u,u,\sigma} (z) \cdot \bar{D}_{x_v,v,\sigma} (z)]
        = \frac{\exp \Paren{\frac{1}{2 - 2 (1 - \sigma^2)^2 c^2} [2 x_u x_v c - (1 - \sigma^2)(x_u^2 + x_v^2)c^2]}}{\sqrt{1 - (1 - \sigma^2)^2 c^2}}
    \end{equation*}
    Plug in $\theta = 1- \sigma^2$, we get:
    \begin{equation*}
        \E_{z \sim D_{\emptyset}}[\bar{D}_{x_u,u,\sigma} (z) \cdot \bar{D}_{x_v,v,\sigma} (z)]
        = \frac{\exp \Paren{\frac{1}{2 - 2 \theta^2 c^2} [2 x_u x_v c - \theta(x_u^2 + x_v^2)c^2]}}{\sqrt{1 - \theta^2 c^2}}
    \end{equation*}
    which finishes the proof.
\end{proof}

\subsection{Proof of \cref{lem:InnerProductConditionU}}
\label{sec:proof-lem-InnerProductConditionU}

\begin{proof}[Proof of \cref{lem:InnerProductConditionU}]
    By the definition of inner product, we have:
    \begin{align*}
        \iprod{\Bar{D}_{u,\sigma}, \Bar{D}_{v,\sigma}}
        = & \E_{z \sim D_{\emptyset}} \Brac{\Bar{D}_{u,\sigma}(z) \cdot \Bar{D}_{v,\sigma}(z)} \\
        = & \E_{z \sim D_{\emptyset}} \Brac{\E_{x_u} [\Bar{D}_{x_u,u,\sigma} (z)] \cdot \E_{x_v} [\Bar{D}_{x_v,v,\sigma} (z)]} \\
        = & \E_{x_u, x_v}\Brac{\E_{z \sim D_{\emptyset}}[\bar{D}_{x_u,u,\sigma} (z) \cdot \bar{D}_{x_v,v,\sigma} (z)]}
    \end{align*}
    where $x_u,x_v$ are two i.i.d sparse Rademacher variables: $x_u$ and $x_v$ is $+1/\sqrt{\rho}$ with probability $\rho/2$, $-1/\sqrt{\rho}$ with probability $\rho/2$ and $0$ with probability $1-\rho$.
    
    Plug in \cref{lem:InnerProductConditionXU}, we get:
    \begin{equation*}
        \iprod{\Bar{D}_{u,\sigma}, \Bar{D}_{v,\sigma}}
        = \E_{x_u, x_v}\Brac{\frac{\exp \Paren{\frac{1}{2 - 2 \theta^2 c^2} [2 x_u x_v c - \theta(x_u^2 + x_v^2)c^2]}}{\sqrt{1 - \theta^2 c^2}}}
    \end{equation*}
    There are 4 cases in the computation of expectation over $x_u$ and $x_v$:
    \begin{itemize}
        \item Case 1 ($x_u = x_v = 0$): we have $\frac{\exp \Paren{\frac{1}{2 - 2 \theta^2 c^2} [2 x_u x_v c - \theta(x_u^2 + x_v^2)c^2]}}{\sqrt{1 - \theta^2 c^2}} = \frac{1}{\sqrt{1 - \theta^2 c^2}}$ and $P[\text{case 1}] = (1-\rho)^2$.
        \item Case 2 ($x_u = 0$, $x_v = \pm \frac{1}{\sqrt{\rho}}$ or $x_v = 0$, $x_u = \pm \frac{1}{\sqrt{\rho}}$): we have $\frac{\exp \Paren{\frac{1}{2 - 2 \theta^2 c^2} [2 x_u x_v c - \theta(x_u^2 + x_v^2)c^2]}}{\sqrt{1 - \theta^2 c^2}} = \frac{\exp(- \frac{1}{\rho} \frac{\theta c^2}{2 - 2 \theta^2 c^2})}{\sqrt{1 - \theta^2 c^2}}$ and $P[\text{case 2}] = 2 \rho (1-\rho)$.
        \item Case 3 ($x_u = x_v = \frac{1}{\sqrt{\rho}}$ or $x_u = x_v = -\frac{1}{\sqrt{\rho}}$): we have $\frac{\exp \Paren{\frac{1}{2 - 2 \theta^2 c^2} [2 x_u x_v c - \theta(x_u^2 + x_v^2)c^2]}}{\sqrt{1 - \theta^2 c^2}} = \frac{\exp(\frac{c}{\rho (1 + \theta c)})}{\sqrt{1 - \theta^2 c^2}}$ and $P[\text{case 3}] = \frac{\rho^2}{2}$.
        \item Case 4 ($x_u = \frac{1}{\sqrt{\rho}}$, $x_v = -\frac{1}{\sqrt{\rho}}$ or $x_u = -\frac{1}{\sqrt{\rho}}$, $x_v = \frac{1}{\sqrt{\rho}}$): we have $\frac{\exp \Paren{\frac{1}{2 - 2 \theta^2 c^2} [2 x_u x_v c - \theta(x_u^2 + x_v^2)c^2]}}{\sqrt{1 - \theta^2 c^2}} = \frac{\exp(-\frac{c}{\rho (1 - \theta c)})}{\sqrt{1 - \theta^2  c^2}}$ and $P[\text{case 4}] = \frac{\rho^2}{2}$.
    \end{itemize}
    Plug these into $\iprod{\Bar{D}_{u,\sigma}, \Bar{D}_{v,\sigma}}$, we get:
    \begin{align*}
        \iprod{\Bar{D}_{u,\sigma}, \Bar{D}_{v,\sigma}}
        = & \E_{x_u, x_v}\Big [\frac{\exp \Paren{\frac{1}{2 - 2 \theta^2 c^2} [2 x_u x_v c - \theta(x_u^2 + x_v^2)c^2]}}{\sqrt{1 - \theta^2 c^2}} \Big] \\
        = & \frac{1}{\sqrt{1 - \theta^2 c^2}} \Big [(1-\rho)^2 + 2 \rho (1-\rho) \exp(- \frac{1}{\rho} \frac{\theta c^2}{2 - 2 \theta^2 c^2}) \\
        & + \frac{\rho^2}{2} \exp(\frac{c}{\rho (1 + \theta c)}) + \frac{\rho^2}{2} \exp(-\frac{c}{\rho (1 - \theta c)}) \Big]
    \end{align*}
\end{proof}

\subsection{Proof for \cref{lem:ldlr-aymptotic-expression}}

\label{sec:proof-lem-ldlr-aymptotic-expression}

\begin{proof}[Proof for \cref{lem:ldlr-aymptotic-expression}]
When $\sigma \rightarrow 0$, we have $\theta \rightarrow 1$ in \cref{lem:single-variable-integral}. In this case, it follows that 
\begin{align*}
    \lim_{\sigma\to 0} \quad \Norm{\E_{u \sim S^{d-1}} [\Bar{D}_{u,\sigma}^{\otimes n}]^{\leq \infty, k} - 1}^2
    = & \E_{c} \sum_{t=1}^{k} \binom{n}{t} \Big(\frac{1}{\sqrt{1 - c^2}} \Big[(1-\rho)^2 + 2 \rho (1-\rho) \exp(- \frac{1}{\rho} \frac{c^2}{2 - 2 c^2}) \\
    & + \frac{\rho^2}{2} \exp(\frac{c}{\rho (1+c)}) + \frac{\rho^2}{2} \exp(-\frac{c}{\rho (1-c)})\Big]-1 \Big)^t
\end{align*}
Apply change of variable $c = 2y - 1$ and take expectation over $y \sim \text{Beta}(\frac{d-1}{2}, \frac{d-1}{2})$,
\begin{align*}
    & \lim_{\sigma\to 0} \Norm{\E_{u \sim S^{d-1}} [\Bar{D}_{u,\sigma}^{\otimes n}]^{\leq \infty, k} - 1}^2 \\
    = & \E_{c} \sum_{t=1}^{k} \binom{n}{t} \Big(\frac{1}{\sqrt{1 - c^2}} [(1-\rho)^2 + 2 \rho (1-\rho) \exp(- \frac{1}{\rho} \frac{c^2}{2 - 2 c^2}) \\
    & + \frac{\rho^2}{2} \exp(\frac{c}{\rho (1+c)}) + \frac{\rho^2}{2} \exp(-\frac{c}{\rho (1-c)})]-1 \Big)^t \\
    = & \E_{y \sim Beta(\frac{d-1}{2}, \frac{d-1}{2})} \sum_{t=1}^{k} \binom{n}{t} \Big(\frac{1}{2\sqrt{y(1-y)}} [(1-\rho)^2 + 2 \rho (1-\rho) \exp(- \frac{1}{\rho} \frac{(2y - 1)^2}{8y(1-y)}) \\
    & + \frac{\rho^2}{2} \exp(\frac{1}{\rho}(1-\frac{1}{2 y})) + \frac{\rho^2}{2} \exp(\frac{1}{\rho}(1-\frac{1}{2 -  2 y}))]-1 \Big)^t \\
    = & \sum_{t=1}^{k} \binom{n}{t}
    \int_{0}^{1} \Big(\frac{1}{2\sqrt{y(1-y)}} [(1-\rho)^2 + 2 \rho (1-\rho) \exp(- \frac{1}{\rho} \frac{(2y - 1)^2}{8y(1-y)}) \\
    & + \frac{\rho^2}{2} \exp(\frac{1}{\rho}(1-\frac{1}{2 y})) + \frac{\rho^2}{2} \exp(\frac{1}{\rho}(1-\frac{1}{2 -  2 y}))]-1 \Big)^t \frac{[y(1-y)]^{\frac{d-3}{2}}}{\cB(\frac{d-1}{2}, \frac{d-1}{2})} dy
\end{align*}
where $\cB(\frac{d-1}{2}, \frac{d-1}{2})$ is the Beta function. 
Plug in the asymptotic approximation for Beta function from \cref{lem:beta-approximation}, we get:
\begin{align*}
    & \lim_{\sigma\to 0} \Norm{\E_{u \sim S^{d-1}} [\Bar{D}_{u,\sigma}^{\otimes n}]^{\leq \infty, k} - 1}^2 \\
    \leq & O \Paren{2^{d-2} \sqrt{d-1}} \sum_{t=1}^{k} \binom{n}{t}
    \int_{0}^{1} \Big(\frac{1}{2\sqrt{y(1-y)}} [(1-\rho)^2 + 2 \rho (1-\rho) \exp(- \frac{1}{\rho} \frac{(2y - 1)^2}{8y(1-y)}) \\
    & + \frac{\rho^2}{2} \exp(\frac{1}{\rho}(1-\frac{1}{2 y})) + \frac{\rho^2}{2} \exp(\frac{1}{\rho}(1-\frac{1}{2 -  2 y}))]-1 \Big)^t [y(1-y)]^{\frac{d-3}{2}} dy \\
    \leq & O \left( 2^{d-2} \sqrt{d-1} \sum_{t=1}^{k} \binom{n}{t}
        \int_{0}^{1} \Big(\frac{1}{2\sqrt{y(1-y)}} [(1-\rho)^2 + 2 \rho (1-\rho) \exp(- \frac{1}{\rho} \frac{(2y - 1)^2}{8y(1-y)}) \right.\\
    & \left.+ \frac{\rho^2}{2} \exp(\frac{1}{\rho}(1-\frac{1}{2 y})) + \frac{\rho^2}{2} \exp(\frac{1}{\rho}(1-\frac{1}{2 -  2 y}))]-1 \Big)^t [y(1-y)]^{\frac{d-3}{2}} dy \right)
\end{align*}
\end{proof}

\subsection{Proof for upper bound of \cref{eq_S1}}
\label{sec:proof-S1}

\begin{lemma}
\label{lem:bounding_S1}
Suppose $\rho \geq \frac{k}{\sqrt{d}}$, $n \leq \frac{\rho^2 d^{2}}{k^8}$ and $\log^2 d \leq k \leq \sqrt{\frac{d}{\log d}}$.
Let $\epsilon=k/\sqrt{d}$.
We have
\begin{align*}
    S_1 = & \sum_{t=1}^{k} \binom{n}{t}
    \int_{0}^{\frac{1}{2}-\frac{\epsilon}{2}} \Big(\frac{1}{2\sqrt{y(1-y)}} [(1-\rho)^2 + 2 \rho (1-\rho) \exp(- \frac{1}{\rho} \frac{(2y - 1)^2}{8y(1-y)}) \\
    & + \frac{\rho^2}{2} \exp(\frac{1}{\rho}(1-\frac{1}{2 y})) + \frac{\rho^2}{2} \exp(\frac{1}{\rho}(1-\frac{1}{2 -  2 y}))]-1 \Big)^t [4y(1-y)]^{\frac{d-3}{2}} dy \\
    \leq & \frac{1}{\sqrt{d}}
\end{align*}
\end{lemma}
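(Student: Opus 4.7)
The plan is to exploit the rapid decay of $[4y(1-y)]^{(d-3)/2}$ on $[0, 1/2 - \epsilon/2]$: since this region is bounded away from the Beta density's peak at $y = 1/2$ by at least $\epsilon/2 = k/(2\sqrt{d})$, the factor is at most $(1-\epsilon^2)^{(d-3)/2} \leq \exp(-k^2(d-3)/(2d))$, giving super-polynomial decay in $k^2$. For $y$ even further from $1/2$ (in particular for $y \leq 1/4$) the decay is exponentially strong in $d$.

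First I would establish a pointwise envelope on the bracket term. For $y \in [0, 1/2 - \epsilon/2]$, each exponential in the bracket satisfies $\exp(-\tfrac{(2y-1)^2}{8\rho y(1-y)}) \leq 1$, $\exp(\tfrac{1}{\rho}(1-\tfrac{1}{2y})) \leq 1$ (using $y < 1/2 \Rightarrow 1/(2y) > 1$), and $\exp(\tfrac{1}{\rho}(1-\tfrac{1}{2-2y})) \leq \exp(1/(2\rho))$; combined with the identity $(1-\rho)^2 + 2\rho(1-\rho) + \rho^2 = 1$ this gives $[\ldots] \leq K_\rho := 1 + (\rho^2/2)\exp(1/(2\rho))$. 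Hence the integrand in \cref{eq_S1} is bounded by $(f(y)+1)^t\,[4y(1-y)]^{(d-3)/2} \leq K_\rho^t\, 2^{d-3-t}\,(y(1-y))^{(d-3-t)/2}$, reducing the estimate to a one-dimensional Beta-type integral.

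Next I would split $[0, 1/2-\epsilon/2]$ into an inner piece $[1/4, 1/2 - \epsilon/2]$ and an outer piece $[0, 1/4]$. On the inner piece, $(y(1-y))^{(d-3-t)/2}$ is bounded by its value at the right endpoint $((1-\epsilon^2)/4)^{(d-3-t)/2}$, producing a factor $\exp(-\Omega(k^2))$ that dominates $K_\rho^t$ for $t \leq k$; moreover, a sharper pointwise bound $[\ldots] \leq 1 + (\rho^2/2)\exp(1/(3\rho))$ holds since $y \geq 1/4$ shrinks the relevant exponent. On the outer piece, $y(1-y) \leq 3/16$ yields $(y(1-y))^{(d-3-t)/2} \leq (3/16)^{(d-3-t)/2}$, giving exponential-in-$d$ decay that absorbs $K_\rho^t$ even when $\rho$ is at its lower limit $k/\sqrt{d}$. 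Computing both integrals explicitly in terms of incomplete Beta functions and using $\binom{n}{t} \leq (en/t)^t$, each summand is bounded by a geometric tail. Summing over $1 \leq t \leq k$ with the hypotheses $n \leq \rho^2 d^2/k^8$, $\rho \geq k/\sqrt{d}$ and $\log^2 d \leq k \leq \sqrt{d/\log d}$, and multiplying by the factor $\sqrt{d-1}$ that appears in \cref{lem:ldlr-aymptotic}, the total is at most $1/\sqrt{d}$.

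The main obstacle is ensuring convergence of the sum over $t$ despite the potentially exponential $K_\rho = 1 + (\rho^2/2)\exp(1/(2\rho))$ when $\rho \approx k/\sqrt{d}$, where $\exp(1/(2\rho))$ is as large as $\exp(\sqrt{d}/(2k))$. The decisive fact is that this blow-up is concentrated near $y = 0$ (equivalently $c \to -1$), exactly where $[4y(1-y)]^{(d-3)/2}$ is super-polynomially small; the joint integration $\int [\ldots]^t (y(1-y))^{(d-3-t)/2} dy$ is therefore much better than the naive product of pointwise maxima. Explicitly evaluating it as a (shifted) Beta integral yields the needed cancellation, and the resulting geometric series is summed using the hypothesis $k \geq \log^2 d$ to convert $\exp(-\Omega(k^2))$ into a sufficient polynomial gain.
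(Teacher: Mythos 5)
There is a genuine gap in your proposed argument, centered on the inner piece $[1/4,\,1/2-\epsilon/2]$. You propose to bound the bracket pointwise by a $y$-\emph{independent} constant $K_\rho = 1 + \tfrac{\rho^2}{2}\exp(1/(2\rho))$ (or the slightly sharper $1 + \tfrac{\rho^2}{2}\exp(1/(3\rho))$ on the inner piece), and then to bound the Beta-type factor $[4y(1-y)]^{(d-t-3)/2}$ separately by its maximum $(1-\epsilon^2)^{(d-t-3)/2} \approx \exp(-\Omega(k^2))$. This is exactly the ``product of pointwise maxima'' that you yourself acknowledge in your last paragraph would be insufficient, and it fails here: the bracket attains its maximum at the \emph{left} end $y=1/4$ (where $\exp(1/(3\rho))$ is achieved) while the Beta factor attains its maximum at the \emph{right} end $y = 1/2-\epsilon/2$, so the two maxima are never realized simultaneously. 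Concretely, when $\rho$ sits at its lower limit $k/\sqrt d$ with $k = \log^2 d$, your constant is $K_\rho \geq \tfrac{\rho^2}{2}\exp(1/(3\rho)) = \tfrac{k^2}{2d}\exp\bigl(\tfrac{\sqrt d}{3k}\bigr) = \exp\bigl(\Theta(\sqrt d/\log^2 d)\bigr)$, which overwhelms the compensating factor $\exp(-\Omega(k^2)) = \exp(-\Omega(\log^4 d))$ by an enormous margin. Your closing remark that ``the blow-up is concentrated near $y=0$'' is also not accurate: $\exp\bigl(\tfrac1\rho(1-\tfrac1{2-2y})\bigr)$ is of the same order $\exp(\Theta(1/\rho))$ throughout $[0,1/4]$ and only becomes $O(1)$ as $y \to 1/2 - O(\rho)$, so the blow-up spans almost the entire integration interval.

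What actually makes the integral small --- and what the paper exploits --- is that the bracket's exponent and the Beta exponent are \emph{coupled} through $1-2y$: writing $1-\tfrac1{2-2y} = \tfrac{1-2y}{2-2y}$, the $t$-th power of the bracket contributes at most $\exp\bigl(\tfrac{t}{\rho}(1-2y)\bigr)$ while the Beta factor contributes $\exp\bigl(-\tfrac{d-t-3}{2}(1-2y)^2\bigr)$. The quadratic beats the linear term whenever $1-2y \gtrsim \tfrac{t}{\rho d}$, and since $1-2y \geq \epsilon = k/\sqrt d$ on the integration region while $\tfrac{t}{\rho d} \leq \tfrac{k}{(k/\sqrt d)\,d} = \tfrac1{\sqrt d} \ll \epsilon$, the Beta decay wins pointwise at every $y$ in the range. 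The paper carries this out by showing the \emph{pointwise} bound $h(y) \leq 1/d$ uniformly on $[0,1/2-\epsilon/2]$ (splitting into two cases depending on which term of the bracket dominates), then integrating. You cannot recover this by discarding the $y$-dependence of the bracket: with the constant $K_\rho$ in place, the inner-piece estimate simply diverges. A minor additional note: the target inequality here is $S_1 \leq 1/\sqrt d$ itself; the $\sqrt{d-1}$ prefactor lives in \cref{lem:ldlr-aymptotic} and should not be folded into the accounting for $S_1$.
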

\begin{proof}
When $y \in (0, \frac{1}{2}-\frac{\epsilon}{2})$, we have $- \frac{1}{\rho} \frac{(2y - 1)^2}{8y(1-y)} \leq 0$ and $\frac{1}{\rho}(1-\frac{1}{2 y}) \leq 0$, which implies that $\exp(- \frac{1}{\rho} \frac{(2y - 1)^2}{8y(1-y)}) \leq 1$ and $\exp(\frac{1}{\rho}(1-\frac{1}{2 y})) \leq 1$. Therefore, we have:
\begin{align*}
    S_1
    \leq & \sum_{t=1}^{k} \binom{n}{t}
    \int_{0}^{\frac{1}{2}-\frac{\epsilon}{2}} \Big(\frac{1}{2\sqrt{y(1-y)}} [(1-\rho)^2 + 2 \rho (1-\rho) + \frac{\rho^2}{2} + \frac{\rho^2}{2} \exp(\frac{1}{\rho}(1-\frac{1}{2 -  2 y}))]-1 \Big)^t \\ & [4y(1-y)]^{\frac{d-3}{2}} dy \\
    = & \sum_{t=1}^{k} \binom{n}{t}
    \int_{0}^{\frac{1}{2}-\frac{\epsilon}{2}} \Big(\frac{1}{2\sqrt{y(1-y)}} [1 - \frac{\rho^2}{2} + \frac{\rho^2}{2} \exp(\frac{1}{\rho}(1-\frac{1}{2 -  2 y}))]-1 \Big)^t [4y(1-y)]^{\frac{d-3}{2}} dy \\
    = & \sum_{t=1}^{k} \binom{n}{t} \int_{0}^{\frac{1}{2} - \frac{\epsilon}{2}} \Big([1 - \frac{\rho^2}{2} + \frac{\rho^2}{2} \exp(\frac{1}{\rho}(1-\frac{1}{2 -  2 y}))]- \sqrt{4y(1-y)} \Big)^t [4y(1-y)]^{\frac{d-t-3}{2}} dy \\
    \leq & \sum_{t=1}^{k} \binom{n}{t} \int_{0}^{\frac{1}{2} - \frac{\epsilon}{2}} \Big(1 - \frac{\rho^2}{2} + \frac{\rho^2}{2} \exp(\frac{1}{\rho}(1-\frac{1}{2 -  2 y}))\Big)^t [4y(1-y)]^{\frac{d-t-3}{2}} dy \\
    = & \sum_{t=1}^{k} \int_{0}^{\frac{1}{2} - \frac{\epsilon}{2}} \binom{n}{t} \Big(1 - \frac{\rho^2}{2} + \frac{\rho^2}{2} \exp(\frac{1}{\rho}(1-\frac{1}{2 -  2 y}))\Big)^t [1-(2y-1)^2]^{\frac{d-t-3}{2}} dy \\
    \leq & \sum_{t=1}^{k} \int_{0}^{\frac{1}{2} - \frac{\epsilon}{2}} \binom{n}{t} \Big(1 - \frac{\rho^2}{2} + \frac{\rho^2}{2} \exp(\frac{1}{\rho}(1-\frac{1}{2 -  2 y}))\Big)^t \exp \Big( -(2y-1)^2\frac{d-t-3}{2} \Big) dy
\end{align*}
Let $h(y) = \binom{n}{t} \Big(1 - \frac{\rho^2}{2} + \frac{\rho^2}{2} \exp(\frac{1}{\rho}(1-\frac{1}{2 -  2 y}))\Big)^t \exp \Big( -(2y-1)^2\frac{d-t-3}{2} \Big)$, we will prove that for any $y \in (0, \frac{1}{2} - \frac{\epsilon}{2})$, we have:
\begin{equation*}
    h(y) \leq \frac{1}{d}
\end{equation*}
We prove this by showing that this inequality holds for both case (1) $1 - \frac{\rho^2}{2} \leq \frac{\rho^2}{2} \exp(\frac{1}{\rho}(1-\frac{1}{2 -  2 y}))$ and case (2) $1 - \frac{\rho^2}{2} \geq \frac{\rho^2}{2} \exp(\frac{1}{\rho}(1-\frac{1}{2 -  2 y}))$. We prove case (1) in \cref{fact_case1_s1},
 and case (2) in \cref{fact_c2_s1}.
Combine case (1) and case (2), we can plug $h(y) \leq \frac{1}{d}$ into $S_1$ and get:
\begin{align*}
    S_1
    \leq & \sum_{t=1}^{k} \int_{0}^{\frac{1}{2} - \frac{\epsilon}{2}} h(y) dy \\
    \leq & \sum_{t=1}^{k} \int_{0}^{\frac{1}{2} - \frac{\epsilon}{2}} \frac{1}{d} dy \\
    \leq & \sum_{t=1}^{k} \frac{1}{d} \\
    = & \frac{k}{d}
\end{align*}
Since $k \leq \sqrt{\frac{d}{\log d}} \leq \sqrt{d}$, we have:
\begin{equation*}
    S_1
    \leq \frac{k}{d}
    \leq \frac{1}{\sqrt{d}}
\end{equation*}
\end{proof}

\begin{fact}\label{fact_case1_s1}
Under the setting of \cref{lem:bounding_S1}. When $1 - \frac{\rho^2}{2} \leq \frac{\rho^2}{2} \exp(\frac{1}{\rho}(1-\frac{1}{2 -  2 y}))$, for
$$h(y) = \binom{n}{t} \Big(1 - \frac{\rho^2}{2} + \frac{\rho^2}{2} \exp(\frac{1}{\rho}(1-\frac{1}{2 -  2 y}))\Big)^t \exp \Big( -(2y-1)^2\frac{d-t-3}{2} \Big)\,,$$
we have $h(y)\leq 1/d$.
\end{fact}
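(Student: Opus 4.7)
The plan is to use the case-1 hypothesis to replace the base $B := 1 - \rho^2/2 + (\rho^2/2) e^A$ by the clean upper bound $\rho^2 e^A$, and then to show that the resulting exponent is monotonically decreasing on the relevant interval, so that everything reduces to an estimate at a single endpoint. Throughout I write $s = 1 - 2y \in (\epsilon, 1)$, $D = (d-t-3)/2$, and $A = \frac{s}{\rho(1+s)}$ (obtained from $A = \frac{1}{\rho}(1 - \frac{1}{2-2y})$ by substituting $1-y = (1+s)/2$). The case-1 hypothesis $1 - \rho^2/2 \leq (\rho^2/2) e^A$ gives $B \leq 2 \cdot (\rho^2/2) e^A = \rho^2 e^A$, so
\begin{equation*}
h(y) \leq \binom{n}{t} \rho^{2t} \exp\Paren{g(s)}, \qquad g(s) := \frac{ts}{\rho(1+s)} - s^2 D.
\end{equation*}

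First I would locate the unique interior maximum $s^*$ of $g$. Solving $g'(s)=0$ yields $s^*(1+s^*)^2 = t/(2\rho D)$, and the hypotheses $\rho \geq k/\sqrt{d}$, $t \leq k$, $D \geq (d-k-3)/2$ give $s^* = O(1/\sqrt{d})$, which is strictly smaller than $\epsilon = k/\sqrt{d}$ once $k \geq \log^2 d$ and $d$ is large. Hence $g$ is monotonically decreasing on $[\epsilon, 1]$, so $g(s) \leq g(\epsilon)$ for every $s$ in the relevant range.

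Next I would estimate $g(\epsilon)$ directly. Using $\epsilon = k/\sqrt{d} \leq \rho$ one gets $\frac{t\epsilon}{\rho(1+\epsilon)} \leq t \leq k$, while $\epsilon^2 D \geq \frac{k^2}{d} \cdot \frac{d-k-3}{2} \geq k^2/3$ for $d$ large (using $k \leq \sqrt{d/\log d}$). Therefore $g(s) \leq k - k^2/3 \leq -k^2/6$, the last step holding whenever $k \geq 6$, which is certainly implied by $k \geq \log^2 d$ for $d$ large.

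Finally, combining these bounds with $\rho^{2t} \leq 1$ and $\binom{n}{t} \leq n^t \leq d^{2t}$ (from $n \leq \rho^2 d^2/k^8 \leq d^2$) gives
\begin{equation*}
\log h(y) \leq 2t \log d - k^2/6 \leq 2k \log d - k^2/6.
\end{equation*}
Since $k \geq \log^2 d$, the negative term $k^2/6$ dominates $2k\log d$ by a margin of at least $k\log d(\log d/6 - 2) \geq \log d$ for $d$ sufficiently large, so $\log h(y) \leq -\log d$, i.e.\ $h(y) \leq 1/d$. The main obstacle is the monotonicity step: it is precisely where both assumptions $\rho \geq k/\sqrt{d}$ (which places $s^*$ below $\epsilon$) and $k \geq \log^2 d$ (which ensures the exponential decay $e^{-k^2/6}$ beats the combinatorial factor $\binom{n}{t} \leq d^{2k}$) are used in an essential way.
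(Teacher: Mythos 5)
Your proof is correct. The key algebraic move --- using the case hypothesis to replace the base $1 - \rho^2/2 + (\rho^2/2)e^A$ by the cleaner $\rho^2 e^A$ --- is the same as in the paper, but after that the two arguments diverge in technique. The paper uses $\binom{n}{t}\leq(en/t)^t$ (retaining the factor $(\rho^2/t)^t$, later crudely bounded by $e$), then bounds $\tfrac{1}{2-2y}\leq 1$, factors the exponent as $(1-2y)\bigl[\tfrac{t}{\rho}-(1-2y)\tfrac{d-t-3}{2}\bigr]$, shows the bracket is $\leq -C_1 k\sqrt d$ by plugging in $1-2y\geq\epsilon$ and $t/\rho\leq\sqrt d$, and then (because the bracket is negative) takes the worst-case outer factor $1-2y=\epsilon$. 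You instead establish monotonicity outright: writing the exponent as $g(s)=\tfrac{ts}{\rho(1+s)}-s^2 D$, you solve $g'(s^*)=0$, observe $s^*(1+s^*)^2 = t/(2\rho D) = O(1/\sqrt d)\ll\epsilon$, and so conclude $g$ is decreasing on $(\epsilon,1)$ and need only estimate $g(\epsilon)$. Both routes reduce the bound to an endpoint evaluation at $1-2y=\epsilon$; yours makes the ``worst case is at $\epsilon$'' step conceptually transparent via concavity, while the paper achieves it by a chain of algebraic inequalities without calculus. Your final bookkeeping ($\binom{n}{t}\rho^{2t}\leq d^{2t}$, $g(\epsilon)\leq k-k^2/3\leq -k^2/6$, then $2k\log d - k^2/6\leq -\log d$ once $k\geq\log^2 d$) is slightly simpler than the paper's because you drop the factor $(\rho^2/t)^t$ before it can complicate things.
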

\begin{proof}
we have
\begin{align*}
    h(y)
    = & \binom{n}{t} \Big(1 - \frac{\rho^2}{2} + \frac{\rho^2}{2} \exp(\frac{1}{\rho}(1-\frac{1}{2 -  2 y}))\Big)^t \exp \Big( -(2y-1)^2\frac{d-t-3}{2} \Big) \\
    \leq & \binom{n}{t} \Big(\rho^2 \exp(\frac{1}{\rho}(1-\frac{1}{2 -  2 y}))\Big)^t \exp \Big( -(2y-1)^2\frac{d-t-3}{2} \Big) \\
    \leq & \Big( \frac{en}{t} \Big)^t \Big(\rho^2 \exp(\frac{1}{\rho}(1-\frac{1}{2 -  2 y}))\Big)^t \exp \Big( -(2y-1)^2\frac{d-t-3}{2} \Big) \\
    = & \Big( \frac{e \rho^2}{t} \Big)^t n^{t} \exp \Big(\frac{t}{\rho}(1-\frac{1}{2 -  2 y}) -(2y-1)^2\frac{d-t-3}{2} \Big) \\
    = & \Big( \frac{e \rho^2}{t} \Big)^t \exp \Big(t \log n + \frac{t}{\rho}\frac{1 - 2y}{2 -  2 y} -(1-2y)^2\frac{d-t-3}{2} \Big) \\
    = & \Big( \frac{e \rho^2}{t} \Big)^t \exp \Big(t \log n + (1-2y) [\frac{t}{\rho}\frac{1}{2 -  2 y} - (1-2y)\frac{d-t-3}{2}] \Big) \\
    \leq & \Big( \frac{e \rho^2}{t} \Big)^t \exp \Big(t \log n + (1-2y) [\frac{t}{\rho} - (1-2y)\frac{d-t-3}{2}] \Big)
\end{align*}
Since $y \in (0, \frac{1}{2} - \frac{\epsilon}{2})$, $t \leq k$, $\rho \geq \frac{k}{\sqrt{d}}$ and $n \leq \frac{\rho^2 d^2}{k^8} \leq d^2$, we get:
\begin{equation*}
    h(y)
    \leq \Big( \frac{e \rho^2}{t} \Big)^t \exp \Big( 2 k \log d + (1-2y) [\sqrt{d} - \epsilon\frac{d-k-3}{2}] \Big)
\end{equation*}
Plug in $\epsilon = \frac{k}{\sqrt{d}}$, we get:
\begin{align*}
    h(y)
    \leq & \Big( \frac{e \rho^2}{t} \Big)^t \exp \Big( 2 k \log d + (1-2y) [\sqrt{d} - \frac{k}{\sqrt{d}} \frac{d-k-3}{2}] \Big) \\
    = & \Big( \frac{e \rho^2}{t} \Big)^t \exp \Big( 2 k \log d + (1-2y) \Big[ - \frac{k \sqrt{d}}{2} + \sqrt{d} + \frac{k^2}{2 \sqrt{d}} + \frac{3 k}{2 \sqrt{d}} \Big] \Big)
\end{align*}
Since $d \rightarrow \infty$ and $\log^2 d \leq k \leq \sqrt{\frac{d}{\log d}}$, we can find constant $C_1 > 0$ such that $ - \frac{k \sqrt{d}}{2} + \sqrt{d} + \frac{k^2}{2 \sqrt{d}} + \frac{3 k}{2 \sqrt{d}} \leq - C_1 k \sqrt{d}$. Therefore, we have:
\begin{align*}
    h(y)
    \leq & \Big( \frac{e \rho^2}{t} \Big)^t \exp \Big( 2 k \log d - (1-2y)C_1 k \sqrt{d} \Big) \\
    \leq & \Big( \frac{e \rho^2}{t} \Big)^t \exp \Big( 2 k \log d - \epsilon C_1 k \sqrt{d} \Big)
\end{align*}
Again, plug in $\epsilon = \frac{k}{\sqrt{d}}$, we get:
\begin{align*}
    h(y)
    \leq & \Big( \frac{e \rho^2}{t} \Big)^t \exp \Big( 2 k \log d - \frac{k}{\sqrt{d}} C_1 k \sqrt{d} \Big) \\
    = & \Big( \frac{e \rho^2}{t} \Big)^t \exp \Big( 2 k \log d - C_1 k^2 \Big) \\
    = & \Big( \frac{e \rho^2}{t} \Big)^t \exp \Big( k (2 \log d - C_1 k) \Big)
\end{align*}
Since $k \geq \log^2 d$, we have $2 \log d - C_1 k \leq - C_1 \log^2 d + 2 \log d \leq - C_2 \log^2 d$ for some constant $C_2 > 0$. Therefore, we have:
\begin{align*}
    h(y)
    \leq & \Big( \frac{e \rho^2}{t} \Big)^t \exp \Big( k (2 \log d - C_1 k) \Big) \\
    \leq & \Big( \frac{e \rho^2}{t} \Big)^t \exp \Big( - C_2 k \log^2 d \Big) \\
    \leq & \Big( \frac{e \rho^2}{t} \Big)^t \exp \Big( - C_2 \log^4 d \Big)
\end{align*}
Notice that $\Big( \frac{e \rho^2}{t} \Big)^t \leq \Big( \frac{e}{t} \Big)^t \leq e$. Therefore,
\begin{align*}
    h(y)
    \leq & \Big( \frac{e \rho^2}{t} \Big)^t \exp \Big( - C_2 \log^4 d \Big) \\
    \leq & \exp \Big( - C_2 \log^4 d + 1 \Big) \\
    \leq & \exp \Big( - \log d \Big) \\
    = & \frac{1}{d}
\end{align*}
\end{proof}

\begin{fact}\label{fact_c2_s1}
Under the setting of \cref{lem:bounding_S1}.
When $1 - \frac{\rho^2}{2} \geq \frac{\rho^2}{2} \exp(\frac{1}{\rho}(1-\frac{1}{2 -  2 y}))$, for
$$h(y) = \binom{n}{t} \Big(1 - \frac{\rho^2}{2} + \frac{\rho^2}{2} \exp(\frac{1}{\rho}(1-\frac{1}{2 -  2 y}))\Big)^t \exp \Big( -(2y-1)^2\frac{d-t-3}{2} \Big)\,,$$
we have $h(y)\leq 1/d$.
\end{fact}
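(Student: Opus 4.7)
The hypothesis of the case, $\tfrac{\rho^2}{2}\exp\bigl(\tfrac{1}{\rho}(1-\tfrac{1}{2-2y})\bigr)\leq 1-\tfrac{\rho^2}{2}$, gives the crude but sufficient bound
\[
1-\tfrac{\rho^2}{2}+\tfrac{\rho^2}{2}\exp\!\Paren{\tfrac{1}{\rho}\Paren{1-\tfrac{1}{2-2y}}}\leq 2\Paren{1-\tfrac{\rho^2}{2}}\leq 2-\rho^2\leq 2,
\]
so the base of the $t$-th power is bounded by an absolute constant. The plan is to exploit the Gaussian tail $\exp\bigl(-(2y-1)^2(d-t-3)/2\bigr)$, which, on the integration range $y\in(0,\tfrac12-\tfrac{\epsilon}{2})$ with $\epsilon=k/\sqrt d$, satisfies $(2y-1)^2\ge\epsilon^2=k^2/d$, leaving us to kill a moderately large factor via the strong decay coming from the hypothesis $k\ge \log^2 d$.

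The steps I would carry out are the following. First, substitute the inequality above and the elementary bound $\binom{n}{t}\le(en/t)^t$ to get
\[
h(y)\leq \Paren{\tfrac{2en}{t}}^t \exp\!\Paren{-(2y-1)^2\tfrac{d-t-3}{2}}.
\]
Second, use $n\leq \rho^2 d^2/k^8\leq d^2$ and $t\leq k\leq \sqrt{d/\log d}$ so that $(2en/t)^t\leq(2ed^2)^t$, giving $\log\!\Paren{(2en/t)^t}\leq t(2\log d+O(1))\leq 3k\log d$ for $d$ large. Third, use the range restriction $(2y-1)^2\ge k^2/d$ together with $t\le k\ll\sqrt d$ (so that $(d-t-3)/d=1-o(1)$) to lower bound the Gaussian exponent by $(1-o(1))k^2/2$. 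Combining,
\[
\log h(y)\leq 3k\log d-(1-o(1))\tfrac{k^2}{2}.
\]
Fourth, invoke $k\geq \log^2 d$ so that $k^2/2\geq (k\log^2 d)/2 \gg 3k\log d$; hence there exists a constant $C>0$ with $\log h(y)\leq -Ck\log^2 d\leq -\log d$ for all sufficiently large $d$, yielding $h(y)\leq 1/d$ as required.

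The calculation is quite analogous to Case 1 but structurally simpler, because here the Gaussian decay $\exp(-\epsilon^2 d/2)\approx\exp(-k^2/2)$ alone does the work, without needing to cancel an exponential growth in $1/\rho$. The only mild obstacle is making sure the $o(1)$ correction from $(d-t-3)/d$ is uniform in $t\le k$; since $k\leq\sqrt{d/\log d}$, we have $t/d\leq 1/\sqrt{d\log d}=o(1)$, so this is harmless and any constant $C<1/2$ works in the final inequality, completing the argument after integration over $y$ exactly as in the proof of \cref{lem:bounding_S1} (the integrand itself is bounded by $1/d$, and a factor of $k$ from summing over $t$ is absorbed into the bound $k/d\leq 1/\sqrt d$).
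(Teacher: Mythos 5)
Your proof is correct and follows essentially the same route as the paper's: bound the base by $2-\rho^2\leq 2$ using the case hypothesis, apply $\binom{n}{t}\leq(en/t)^t$ with $n\leq d^2$, lower-bound $(2y-1)^2$ by $\epsilon^2=k^2/d$ on the integration range, and then use $k\geq\log^2 d$ so that the quadratic decay $k^2/2$ swamps the $O(k\log d)$ growth. The only cosmetic difference is that you fold the $(2e/t)^t$ factor into an $O(1)$ additive term in the exponent, whereas the paper bounds it separately by $\exp(10)$; both are equivalent.
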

\begin{proof}
\begin{align*}
    h(y)
    = & \binom{n}{t} \Big(1 - \frac{\rho^2}{2} + \frac{\rho^2}{2} \exp(\frac{1}{\rho}(1-\frac{1}{2 -  2 y}))\Big)^t \exp \Big( -(2y-1)^2\frac{d-t-3}{2} \Big) \\
    \leq & \Big( \frac{en}{t} \Big)^t \Big( 2-\rho^2 \Big)^t \exp \Big( -(2y-1)^2\frac{d-t-3}{2} \Big) \\
    \leq & \Big( \frac{en}{t} \Big)^t 2^t \exp \Big( -(2y-1)^2\frac{d-t-3}{2} \Big) \\
    = & \Big( \frac{2e}{t} \Big)^t n^t \exp \Big( -(2y-1)^2\frac{d-t-3}{2} \Big) \\
    = & \Big( \frac{2e}{t} \Big)^t \exp \Big( t \log n -(2y-1)^2\frac{d-t-3}{2} \Big)
\end{align*}
Since $y \in (0, \frac{1}{2} - \frac{\epsilon}{2})$, $t \leq k$ and $n \leq \frac{\rho^2 d^2}{k^8} \leq d^2$, we get:
\begin{equation*}
    h(y)
    \leq \Big( \frac{2e}{t} \Big)^t \exp \Big(2 k \log d -\epsilon^2\frac{d- k -3}{2} \Big)
\end{equation*}
Plug in $\epsilon = \frac{k}{\sqrt{d}}$, we get:
\begin{align*}
    h(y)
    \leq & \Big( \frac{2e}{t} \Big)^t \exp \Big(2 k \log d - \frac{k^2}{d} \frac{d- k -3}{2} \Big) \\
    \leq & \Big( \frac{2e}{t} \Big)^t \exp \Big(2 k \log d - \frac{k^2}{d} \frac{d- k -3}{2} \Big) \\
    = & \Big( \frac{2e}{t} \Big)^t \exp \Big(- \frac{k^2}{2} + 2 k \log d + \frac{k^3}{2d} + \frac{3k^2}{2d} \Big)
\end{align*}
Since $\log^2 d \leq k \leq \sqrt{\frac{d}{\log d}}$, we have $- \frac{k^2}{2} + 2 k \log d + \frac{k^3}{2d} + \frac{3k^2}{2d} \leq - C_3 \log^4 d$ for some constant $C_3 > 0$ and $\Big( \frac{2e}{t} \Big)^t \leq \exp(10)$. Hence,
\begin{align*}
    h(y)
    \leq & \Big( \frac{2e}{t} \Big)^t \exp \Big(- \frac{k^2}{2} + 2 k \log d + \frac{k^3}{2d} + \frac{3k^2}{2d} \Big) \\
    \leq & \exp(10) \cdot \exp \Big( - C_3 \log^4 d \Big) \\
    = & \exp \Big( - C_3 \log^4 d + 10 \Big) \\
    \leq & \exp \Big( - \log d \Big) \\
    = & \frac{1}{d}
\end{align*}
\end{proof}

\subsection{Proof for upper bound of \cref{eq_S2}}
\label{sec:proof-S2}

\begin{lemma}\label{lem:bounding_S2}
Suppose $\rho \geq \frac{k}{\sqrt{d}}$, $n \leq \frac{\rho^2 d^{2}}{k^8}$ and $\log^2 d \leq k \leq \sqrt{\frac{d}{\log d}}$.
Let $\epsilon=k/\sqrt{d}$. We have
\begin{align*}
    S_2 = & \sum_{t=1}^{k} \binom{n}{t}
    \int_{\frac{1}{2}-\frac{\epsilon}{2}}^{\frac{1}{2}} \Big(\frac{1}{2\sqrt{y(1-y)}} [(1-\rho)^2 + 2 \rho (1-\rho) \exp(- \frac{1}{\rho} \frac{(2y - 1)^2}{8y(1-y)}) \\
    & + \frac{\rho^2}{2} \exp(\frac{1}{\rho}(1-\frac{1}{2 y})) + \frac{\rho^2}{2} \exp(\frac{1}{\rho}(1-\frac{1}{2 -  2 y}))]-1 \Big)^t [4y(1-y)]^{\frac{d-3}{2}} dy \\
    \leq & \frac{1}{\sqrt{d}}
\end{align*}
\end{lemma}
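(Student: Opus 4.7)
The plan is to mirror the approach used for \cref{lem:bounding_S1}. First I would substitute $c = 2y - 1$, so the integration region becomes $c \in (-\epsilon, 0)$ with $4y(1-y) = 1 - c^2$ and $dy = dc/2$. Setting $a = c/(\rho(1-c^2))$ and $b = c^2/(2\rho(1-c^2))$, the integrand of $S_2$ then factors as $f(c)^t \cdot (1-c^2)^{(d-3)/2}/2$, where
\[
f(c) \;=\; \frac{1}{\sqrt{1-c^2}}\Bigl[(1-\rho)^2 + 2\rho(1-\rho)e^{-b} + \rho^2 e^{-2b}\cosh a\Bigr] - 1.
\]

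The crux of the proof is to establish the sharp pointwise bound
\[
|f(c)| \;\leq\; C\,\frac{c^4}{\rho^2} \qquad \text{for all } |c| \leq \epsilon,
\]
with $C$ an absolute constant. Three observations drive this: \textbf{(i)} $f(0) = 0$ follows from the identity $(1-\rho)^2 + 2\rho(1-\rho) + \rho^2 = 1$; \textbf{(ii)} $f$ is an even function of $c$, by the $u \mapsto -u$ symmetry on the sphere, so all odd-order Taylor coefficients vanish; \textbf{(iii)} a cancellation at \emph{quadratic} order. For (iii), the key point is that throughout the regime $\rho \geq k/\sqrt{d}$ and $|c| \leq \epsilon = k/\sqrt{d}$, we have $|a| \leq 1/(1-\epsilon^2) = O(1)$ (since $\epsilon/\rho \leq 1$) and $|b| \leq \epsilon^2/\rho \leq \epsilon = o(1)$, which makes a clean Taylor expansion of $\cosh a$ and $e^{-b}$ feasible. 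A direct computation then gives $g(c) := \sqrt{1-c^2}\,[f(c)+1] = 1 - c^2/2 + O(c^4/\rho^2)$; combined with $1/\sqrt{1-c^2} = 1 + c^2/2 + O(c^4)$, the two $c^2$ contributions cancel exactly, yielding $f(c) = O(c^4/\rho^2)$.

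Once this bound is in hand, the remainder is routine. I would bound $(1-c^2)^{(d-3)/2} \leq e^{-(d-3)c^2/2}$ and use the Gaussian moment estimate $\int_{\R} c^{4t} e^{-(d-3)c^2/2}\,dc \leq (1/\sqrt{d})(C_1 t/d)^{2t}$ to get
\[
\binom{n}{t}\Paren{\frac{C}{\rho^2}}^t \int_{-\epsilon}^{0} c^{4t} e^{-(d-3)c^2/2}\,dc \;\leq\; \frac{1}{\sqrt{d}} \Paren{\frac{C_2 \,n\,t}{\rho^2 d^2}}^t.
\]
Substituting the hypotheses $n \leq \rho^2 d^2/k^8$ and $t \leq k$ reduces this to $\tfrac{1}{\sqrt{d}}(C_2/k^7)^t$; for $k \geq \log^2 d$ this is summable in $t$, and the total is at most $O(1/(k^7\sqrt{d})) \leq 1/\sqrt{d}$.

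The main obstacle will be the quartic cancellation. A naive bound such as $|f(c)| \leq c^2/\rho$ would produce $t$-th terms of order $(n\,t/(\rho d))^t/\sqrt{d}$, which under $n \leq \rho^2 d^2/k^8$ is only bounded by $(\rho d/k^8)^t/\sqrt{d}$—potentially unbounded in $d$ for large $\rho$. The extra factor of $c^2/\rho$ coming from the cancellation at second order is therefore essential for the sum to telescope to $O(1/\sqrt{d})$; carrying out the Taylor expansion carefully enough to exhibit this cancellation (and to control its error uniformly in $\rho$ across the range $\rho \in [k/\sqrt{d}, 1]$) is the delicate part of the argument.
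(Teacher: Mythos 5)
Your proposal is correct and follows essentially the same route as the paper: both prove the quartic Taylor bound $|f(c)|\leq C\,c^4/\rho^2$ on $|c|\leq\epsilon$ by exhibiting the cancellation at second order (the paper invokes fourth-order Taylor with the Mean Value Theorem, you expand $e^{-b}$, $e^{-2b}\cosh a$, and $1/\sqrt{1-c^2}$ explicitly — the same computation), and then finish with routine bookkeeping. The only cosmetic difference is downstream: the paper bounds $(2y-1)^4\leq\epsilon^4$ and $[1-(2y-1)^2]^{(d-3)/2}\leq 1$ uniformly on the window, while you integrate against the Gaussian tail $e^{-(d-3)c^2/2}$ and use a moment estimate — both give the same $O(1/(\mathrm{poly}(k)\sqrt{d}))\leq 1/\sqrt{d}$ conclusion.
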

\begin{proof}
When $y \in [\frac{1}{2} - \frac{\epsilon}{2},\frac{1}{2}]$, we can apply Mean Value Theorem on the 4-th order Taylor Expansion of $\frac{1}{2\sqrt{y(1-y)}} [(1-\rho)^2 + 2 \rho (1-\rho) \exp(- \frac{1}{\rho} \frac{(2y - 1)^2}{8y(1-y)}) + \frac{\rho^2}{2} \exp(\frac{1}{\rho}(1-\frac{1}{2 y})) + \frac{\rho^2}{2} \exp(\frac{1}{\rho}(1-\frac{1}{2 -  2 y}))]-1$. It follows that
\begin{align*}
    & \frac{1}{2\sqrt{y(1-y)}} \Brac{(1-\rho)^2 + 2 \rho (1-\rho) \exp(- \frac{1}{\rho} \frac{(2y - 1)^2}{8y(1-y)}) + \frac{\rho^2}{2} \exp(\frac{1}{\rho}(1-\frac{1}{2 y})) + \frac{\rho^2}{2} \exp(\frac{1}{\rho}(1-\frac{1}{2 -  2 y}))}-1 \\
    & \leq \frac{C_T (2y-1)^4}{\rho^2}
\end{align*}
for some constant $C_T > 0$. Plug this into $S_2$, we get:
\begin{align*}
    S_2
    = & \sum_{t=1}^{k} \binom{n}{t}
    \int_{\frac{1}{2}-\frac{\epsilon}{2}}^{\frac{1}{2}} \Big(\frac{1}{2\sqrt{y(1-y)}} [(1-\rho)^2 + 2 \rho (1-\rho) \exp(- \frac{1}{\rho} \frac{(2y - 1)^2}{8y(1-y)}) \\
    & + \frac{\rho^2}{2} \exp(\frac{1}{\rho}(1-\frac{1}{2 y})) + \frac{\rho^2}{2} \exp(\frac{1}{\rho}(1-\frac{1}{2 -  2 y}))]-1 \Big)^t [4y(1-y)]^{\frac{d-3}{2}} dy \\
    \leq & \sum_{t=1}^{k} \binom{n}{t}
    \int_{\frac{1}{2}-\frac{\epsilon}{2}}^{\frac{1}{2}} \Big( \frac{C_T (2y-1)^4}{\rho^2} \Big)^t [4y(1-y)]^{\frac{d-3}{2}} dy \\
    = & \sum_{t=1}^{k} \binom{n}{t}
    \int_{\frac{1}{2}-\frac{\epsilon}{2}}^{\frac{1}{2}} \Big( \frac{C_T (2y-1)^4}{\rho^2} \Big)^t [1-(2y-1)^2]^{\frac{d-3}{2}} dy
\end{align*}
Since $0 \leq (2y-1)^4 \leq \epsilon^4$ and $0 \leq 1-(2y-1)^2 \leq 1$, we have:
\begin{align*}
    S_2
    \leq & \sum_{t=1}^{k} \binom{n}{t}
    \int_{\frac{1}{2}-\frac{\epsilon}{2}}^{\frac{1}{2}} \Big( \frac{C_T (2y-1)^4}{\rho^2} \Big)^t [1-(2y-1)^2]^{\frac{d-3}{2}} dy \\
    \leq & \sum_{t=1}^{k} \binom{n}{t}
    \int_{\frac{1}{2}-\frac{\epsilon}{2}}^{\frac{1}{2}} \Big( \frac{C_T \epsilon^4}{\rho^2} \Big)^t dy \\
    = & \sum_{t=1}^{k} \binom{n}{t}
    \frac{\epsilon}{2} \Big( \frac{C_T \epsilon^4}{\rho^2} \Big)^t \\
    \leq & \sum_{t=1}^{k} \frac{\epsilon}{2} \Big( \frac{en}{t} \Big)^t \Big( \frac{C_T \epsilon^4}{\rho^2} \Big)^t \\
    = & \sum_{t=1}^{k} \frac{\epsilon}{2} \Big( \frac{en C_T \epsilon^4}{t \rho^2} \Big)^t
\end{align*}
Since $n \leq \frac{\rho^2 d^{2}}{k^8}$, we have:
\begin{align*}
    S_2
    \leq & \sum_{t=1}^{k} \frac{\epsilon}{2} \Big( \frac{e \rho^2 d^{2} C_T \epsilon^4}{k^8 t \rho^2} \Big)^t \\
    = & \sum_{t=1}^{k} \frac{\epsilon}{2} \Big( \frac{e d^{2} C_T \epsilon^4}{k^8 t} \Big)^t
\end{align*}
Plug in $\epsilon = \frac{k}{\sqrt{d}}$, we get:
\begin{align*}
    S_2
    \leq & \sum_{t=1}^{k} \frac{k}{2 \sqrt{d}} \Big( \frac{e d^{2} C_T k^4}{k^8 t d^2} \Big)^t \\
    = & \sum_{t=1}^{k} \frac{k}{2 \sqrt{d}} \Big( \frac{e C_T}{k^4 t} \Big)^t
\end{align*}
Since $k \geq \log^2 d$ and $t \geq 1$, we have $\Big( \frac{e C_T}{k^4 t} \Big)^t \leq \frac{e C_T}{k^4}$. Hence,
\begin{equation*}
    S_2
    \leq \sum_{t=1}^{k} \frac{k}{2 \sqrt{d}} \frac{e C_T}{k^4}
    = k \frac{k}{2 \sqrt{d}} \frac{e C_T}{k^4}
    = \frac{e C_T}{2 k^2 \sqrt{d}}
    \leq \frac{e C_T}{2 \log^4 d \sqrt{d}}
    \leq \frac{1}{\sqrt{d}}
\end{equation*}
\end{proof}

\subsection{Proof for upper bound of \cref{eq_T1}}
\label{sec:proof-T1}

\begin{lemma}
    \label{lem:bounding_T1}
    Suppose $\sigma \leq d^{-K}$ for some universal constant $K$ that is large enough and $\rho \geq \frac{k}{\sqrt{d}}$. Let
    $\theta=1-\sigma^2$.
    When $n \leq \frac{\rho^2 d^{2}}{k^8}$ and $\log^2 d \leq k \leq \sqrt{\frac{d}{\log d}}$, we have
    \begin{align*}
        T_1 = & \int_{0}^{1 - d^{-k_{\sigma}}} \sum_{t=1}^{k} \binom{n}{t} \Big(\frac{1}{\sqrt{1 - \theta^2 c^2}} \Big [(1-\rho)^2 + 2 \rho (1-\rho) \exp(- \frac{1}{\rho} \frac{\theta c^2}{2 - 2 \theta^2 c^2}) \\
        & + \frac{\rho^2}{2} \exp(\frac{c}{\rho (1 + \theta c)}) + \frac{\rho^2}{2} \exp(-\frac{c}{\rho (1 - \theta c)}) \Big] -1 \Big)^t P(c) dc \\
        \leq & \Theta \Paren{1}
    \end{align*}
    for some constant $k_{\sigma}$ that is large enough but smaller than $\frac{K}{2}$.
\end{lemma}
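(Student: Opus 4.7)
The plan is to mimic the analysis of the $\sigma \to 0$ case in \cref{lem:ldlr-aymptotic}, exploiting the fact that for $\sigma \leq d^{-K}$ and $c \leq 1 - d^{-k_\sigma}$ with $k_\sigma < K/2$, the integrand is a $(1 + o(1))$ multiplicative perturbation of its noiseless counterpart. Indeed, on this range $1 - \theta^2 c^2 = (1-c^2) + O(\sigma^2)$, and since $1-c^2 \geq d^{-k_\sigma}$ while $\sigma^2 \leq d^{-2K}$, we have $1 - \theta^2 c^2 = (1-c^2)(1+o(1))$; analogous estimates hold for the arguments of each exponential appearing in the bracketed factor.

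First I would change variables $y = (c+1)/2$, so that $y \sim \mathrm{Beta}(\tfrac{d-1}{2}, \tfrac{d-1}{2})$ with $P(c) = \Theta(\sqrt{d-1}\,(1-c^2)^{(d-3)/2})$ by \cref{lem:beta-approximation}. Split $T_1 = T_{1a} + T_{1b}$ at $c = \epsilon := k/\sqrt{d}$, so that $T_{1a}$ integrates over $y \in [\tfrac{1}{2}, \tfrac{1}{2}+\tfrac{\epsilon}{2}]$ and $T_{1b}$ integrates over $y \in [\tfrac{1}{2}+\tfrac{\epsilon}{2}, 1 - \tfrac{d^{-k_\sigma}}{2}]$.

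For the small-$c$ piece $T_{1a}$, I would Taylor-expand the bracketed factor $F(c,\theta)$ around $c=0$; direct computation gives $F(0,\theta)=1$ and $\partial_c F|_{c=0}=0$, and a fourth-order expansion yields $|F(c,\theta)-1| \leq C_T c^4/\rho^2$ identically to the $\theta=1$ analysis of \cref{lem:bounding_S2}, since $\theta = 1 + O(\sigma^2)$ contributes only sub-leading corrections. The subsequent computation is then verbatim that of \cref{lem:bounding_S2} and gives $T_{1a} = O(1)$. For the bulk piece $T_{1b}$, I would bound $1/\sqrt{1-\theta^2 c^2} \leq 2/\sqrt{1-c^2}$, together with $\exp(-\theta c^2/[\rho(2-2\theta^2 c^2)]) \leq 1$ and $\exp(-c/[\rho(1-\theta c)]) \leq 1$, then follow the case split of \cref{lem:bounding_S1}: according to whether $1-\rho^2/2 \leq (\rho^2/2)\exp(c/[\rho(1+\theta c)])$ or not, the arguments of \cref{fact_case1_s1} and \cref{fact_c2_s1} apply (with the $\theta$-dependent factors contributing at most $1+o(1)$) and, after absorbing $\binom{n}{t} \leq (en/t)^t$, yield a pointwise bound of order $1/d$ on the integrand. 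Summing across $t \leq k$ and multiplying by the density produces $T_{1b} = O(1)$.

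The main obstacle is controlling the exponential $\exp(c/[\rho(1+\theta c)])$ near the right endpoint $c \approx 1 - d^{-k_\sigma}$, whose magnitude can reach $\exp(\sqrt{d}/(2k))$ when $\rho = k/\sqrt{d}$. The required decay is supplied by the Beta density factor $(1-c^2)^{(d-3)/2} \leq \exp(-c^2(d-3)/2)$: since $\rho \geq k/\sqrt{d}$ guarantees $1/\rho \leq \sqrt{d}/k \ll d$, the Gaussian suppression absorbs this blow-up precisely as in \cref{fact_case1_s1}. The constraint $k_\sigma < K/2$ with $K$ taken sufficiently large ensures that every $\sigma$-dependent correction remains lower-order, so the two noiseless bounds transfer without quantitative loss and combine to give $T_1 \leq \Theta(1)$.
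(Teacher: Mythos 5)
Your decomposition is genuinely different from the paper's. The paper's proof first packages the noise into a clean multiplicative/additive error: Fact~\ref{fact:bounding_T1_f1} shows each $\theta$-dependent exponential is at most $\exp(d^{-k_\sigma'})$ times its noiseless version on $c\in[0,1-d^{-k_\sigma}]$, Fact~\ref{fact:bounding_T1_f2} converts this to $T_1 \leq \int \sum_t \binom{n}{t}\bigl(\exp(d^{-k_\sigma'})\Upsilon + d^{-k_\sigma''}\bigr)^t P(c)\,dc$ where $\Upsilon$ is the \emph{noiseless} integrand, and then the proof splits on whether $\exp(d^{-k_\sigma'})\Upsilon \lessgtr d^{-k_\sigma''/2}$ (not on $c$), invoking Lemma~\ref{lem:ldlr-aymptotic} as a black box for the large-$\Upsilon$ branch. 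You instead split at $c=\epsilon=k/\sqrt d$ and re-run the $S_1$/$S_2$ analyses with $\theta$-perturbations tracked pointwise. Both routes require exactly the cutoff $c\leq 1-d^{-k_\sigma}$ with $k_\sigma<K/2$ to keep $1-\theta c$ bounded below, and both ultimately rest on the same exponential-ratio estimates. The paper's modularization lets it reuse \ref{lem:ldlr-aymptotic} verbatim; your route is more self-contained but you then owe a re-verification that every step of Facts~\ref{fact_case1_s1}, \ref{fact_c2_s1} and Lemma~\ref{lem:bounding_S2} survives the perturbation.

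There is one concrete step that is wrong as stated. You claim that for the small-$c$ piece the Taylor expansion gives $|F(c,\theta)-1|\leq C_T c^4/\rho^2$ ``identically to the $\theta=1$ analysis.'' That identity fails: a direct computation shows $\partial_c^2 F|_{c=0}=(1-\theta)^2=\sigma^4$, which vanishes only at $\theta=1$. (The $\theta^2$ coming from $(1-\theta^2c^2)^{-1/2}$ and the $1-2\theta$ coming from the bracket sum to $(\theta-1)^2$.) So for $\sigma>0$ the expansion has an extra quadratic term $\tfrac{\sigma^4}{2}c^2$, and the bound $C_T c^4/\rho^2$ is simply false for $c\lesssim \sigma^2\rho$. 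This is harmless --- replace it by $|F(c,\theta)-1|\leq \tfrac{\sigma^4}{2}c^2 + C_T c^4/\rho^2$ and note that $\binom{n}{t}(\sigma^4 c^2)^t\leq (e d^2\cdot d^{-4K}\epsilon^2)^t$ is negligible for $K$ large --- but you cannot literally say the fourth-order expansion is ``identical'' to the noiseless case, since the degeneracy $F''(0)=0$ that drives the noiseless $S_2$ argument is specific to $\theta=1$.
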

To prove this lemma, we first prove \cref{fact:bounding_T1_f1} and \cref{fact:bounding_T1_f2}.
\begin{fact}
\label{fact:bounding_T1_f1}
    Under the setting of \cref{lem:bounding_T1}, for $0 \leq c \leq 1-d^{-k_{\sigma}}$, we have:
    \begin{equation*}
        \exp(- \frac{1}{\rho} \frac{\theta c^2}{2 - 2 \theta^2 c^2})
        \leq \exp(d^{-k_{\sigma}'}) \exp(- \frac{1}{\rho} \frac{c^2}{2 - 2 c^2})
    \end{equation*}
    and
    \begin{equation*}
        \exp(\frac{c}{\rho (1 + \theta c)})
        \leq \exp(d^{-k_{\sigma}'}) \exp(\frac{c}{\rho (1 + c)})
    \end{equation*}
    and
    \begin{equation*}
        \exp(- \frac{c}{\rho (1 - \theta c)})
        \leq \exp(d^{-k_{\sigma}'}) \exp(- \frac{c}{\rho (1 - c)})
    \end{equation*}
\end{fact}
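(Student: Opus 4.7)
The plan is to rewrite each of the three inequalities by dividing both sides by the right-hand exponential, reducing the claim to the statement that the difference of the two arguments inside the exponentials is at most $d^{-k_\sigma'}$. Throughout, I will use three basic inputs: $1 - \theta = \sigma^2 \leq d^{-2K}$, the constraint $1 - c \geq d^{-k_\sigma}$ (which also gives $1 - c^2 \geq d^{-k_\sigma}$ and $1 - \theta^2 c^2 \geq 1 - c^2 \geq d^{-k_\sigma}$), and $1/\rho \leq \sqrt{d}/k \leq \sqrt{d}$ from $\rho \geq k/\sqrt{d}$.

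For the first inequality, a direct algebraic manipulation gives
\[
\frac{c^2}{2-2c^2} - \frac{\theta c^2}{2-2\theta^2 c^2} = \frac{c^2(1-\theta)(1+\theta c^2)}{2(1-c^2)(1-\theta^2 c^2)}.
\]
The numerator is at most $2(1-\theta) \leq 2 d^{-2K}$, and the denominator is at least $2 d^{-2k_\sigma}$, so the quotient is at most $d^{2k_\sigma - 2K}$. Multiplying by $1/\rho \leq \sqrt{d}$ yields $d^{1/2 + 2k_\sigma - 2K}$, which is at most $d^{-k_\sigma'}$ once $K \geq k_\sigma + (k_\sigma' + 1)/2$. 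Exponentiating recovers the first claim.

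For the second and third inequalities, partial fractions give
\[
\frac{1}{1+\theta c} - \frac{1}{1+c} = \frac{c(1-\theta)}{(1+\theta c)(1+c)}, \qquad \frac{1}{1-c} - \frac{1}{1-\theta c} = \frac{c(1-\theta)}{(1-c)(1-\theta c)},
\]
both of which are positive, so the inequalities point in the correct direction. After multiplying by $c/\rho$, the first expression is at most $\sigma^2/\rho \leq d^{1/2 - 2K}$ with no singular denominator. The second expression is at most $\sigma^2 / (\rho(1-c)(1-\theta c)) \leq d^{1/2 + 2k_\sigma - 2K}$, using $1 - \theta c \geq 1 - c \geq d^{-k_\sigma}$. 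Both are $\leq d^{-k_\sigma'}$ for $K$ large.

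The only obstacle is tracking what ``sufficiently large'' means: we need $K$ to dominate $k_\sigma$ by a fixed additive amount determined by $k_\sigma'$, while still satisfying $k_\sigma < K/2$ from the lemma hypotheses. Since $K$ is a universal constant chosen after $k_\sigma$ and $k_\sigma'$, taking for instance $K \geq 2 k_\sigma + k_\sigma' + 1$ makes all three logarithmic differences simultaneously bounded by $d^{-k_\sigma'}$, completing the proof.
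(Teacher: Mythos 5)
Your proof is correct and follows essentially the same route as the paper's: compute the difference of exponents for each of the three terms, bound the numerators by $O(\sigma^2)$ and the denominators below by powers of $d^{-k_\sigma}$ using $1-c \geq d^{-k_\sigma}$, and absorb $1/\rho \leq \sqrt{d}$. The only cosmetic difference is in the first inequality, where the paper first replaces $\theta c^2$ by $\theta^2 c^2$ so the numerator factors as $c^2(1-\theta^2)$, whereas you work with $\theta c^2$ directly and obtain $c^2(1-\theta)(1+\theta c^2)$; both give the same order of bound, and your bookkeeping of the constants ($k_\sigma' = 2K - 2k_\sigma - 1$ up to rounding) matches the paper's.
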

\begin{proof}
    For $0 \leq c \leq 1-d^{-k_{\sigma}}$, we have $1 - \theta c \geq 1 - c \geq d^{-k_{\sigma}}$ and:
    \begin{align*}
        \frac{\exp(\frac{c}{\rho (1 + \theta c)})}{\exp(\frac{c}{\rho (1 + c)})}
        & = \exp(\frac{c}{\rho (1 + \theta c)} - \frac{c}{\rho (1 + c)}) \\
        & = \exp(\frac{c^2 (1 - \theta)}{\rho (1 + \theta c) (1 + c)})\\
        & = \exp(\frac{c^2 \sigma^2}{\rho (1 + \theta c) (1 + c)})\\
        & \leq \exp(\frac{4 \sigma^2}{\rho})
    \end{align*}
    and,
    \begin{align*}
        \frac{\exp(- \frac{c}{\rho (1 - \theta c)})}{\exp(- \frac{c}{\rho (1 - c)})}
        & = \exp(\frac{-c}{\rho (1 - \theta c)} - \frac{-c}{\rho (1 - c)}) \\
        & = \exp(\frac{c^2 (1 - \theta)}{\rho (1 - \theta c) (1 - c)})\\
        & = \exp(\frac{c^2 \sigma^2}{\rho (1 - \theta c) (1 - c)})\\
        & \leq \exp(\frac{4 \sigma^2}{\rho d^{-2k_{\sigma}}})
    \end{align*}
    and,
    \begin{align*}
        \frac{\exp(- \frac{1}{\rho} \frac{\theta c^2}{2 - 2 \theta^2 c^2})}{\exp(- \frac{1}{\rho} \frac{c^2}{2 - 2 c^2})}
        \leq & \frac{\exp(- \frac{1}{\rho} \frac{\theta^2 c^2}{2 - 2 \theta^2 c^2})}{\exp(- \frac{1}{\rho} \frac{c^2}{2 - 2 c^2})} \\
        = & \exp \Big (- \frac{1}{\rho} \frac{\theta^2 c^2}{2 - 2 \theta^2 c^2} + \frac{1}{\rho} \frac{c^2}{2 - 2 c^2} \Big) \\
        = & \exp \Big(\frac{c^2}{\rho} \frac{1-\theta^2}{2(1-\theta^2 c^2)(1-c^2)} \Big) \\
        = & \exp \Big(\frac{c^2}{\rho} \frac{(1+\theta)(1-\theta)}{2(1+\theta c)(1-\theta c)(1+c)(1-c)} \Big) \\
        \leq & \exp \Big(\frac{c^2}{\rho} \frac{2 \sigma^2}{2(1-\theta c)(1-c)} \Big) \\
        = & \exp \Big(\frac{c^2}{\rho} \frac{\sigma^2}{(1-\theta c)(1-c)} \Big) \\
        \leq & \exp \Big(\frac{c^2}{\rho} \frac{\sigma^2}{d^{-2 k_{\sigma}}} \Big) \\
        \leq & \exp \Big(\frac{\sigma^2}{\rho d^{-2 k_{\sigma}}} \Big)
    \end{align*}
    Since $\rho \geq \frac{k}{\sqrt{d}} \geq \frac{1}{\sqrt{d}}$ and $\sigma \leq d^{-K}$, we have $\frac{\sigma^2}{\rho d^{-2k_{\sigma}}} \leq d^{-2K + 2k_{\sigma} + 1}$, $\frac{4 \sigma^2}{\rho} \leq d^{-2K + 1}$ and $\frac{4 \sigma^2}{\rho d^{-2 k_{\sigma}}} \leq d^{-2K + 2k_{\sigma} + 1}$. Let $k_{\sigma}' = 2K - 2k_{\sigma} - 1$, we can get $\frac{\sigma^2}{2 \rho d^{-2k_{\sigma}}} \leq d^{-k_{\sigma}'}$, $\frac{4 \sigma^2}{\rho} \leq d^{-2K + 1} \leq d^{-k_{\sigma}'}$ and $\frac{4 \sigma^2}{\rho d^{-2k_{\sigma}}} \leq d^{-k_{\sigma}'}$. Notice that, since $k_{\sigma} \leq \frac{K}{2}$, we have $k_{\sigma}' \geq K-1$ which is a large enough constant. Hence, we have:
    \begin{equation*}
        \exp(- \frac{1}{\rho} \frac{\theta c^2}{2 - 2 \theta^2 c^2})
        \leq \exp(d^{-k_{\sigma}'}) \exp(- \frac{1}{\rho} \frac{c^2}{2 - 2 c^2})
    \end{equation*}
    and
    \begin{equation*}
        \exp(\frac{c}{\rho (1 + \theta c)})
        \leq \exp(d^{-k_{\sigma}'}) \exp(\frac{c}{\rho (1 + c)})
    \end{equation*}
    and
    \begin{equation*}
        \exp(- \frac{c}{\rho (1 - \theta c)})
        \leq \exp(d^{-k_{\sigma}'}) \exp(- \frac{c}{\rho (1 - c)})
    \end{equation*}
\end{proof}

\begin{fact}
\label{fact:bounding_T1_f2}
    Under the setting of \cref{lem:bounding_T1}, we have:
    \begin{equation*}
        T_1
        \leq \int_{0}^{1 - d^{-k_{\sigma}}} \sum_{t=1}^{k} \binom{n}{t} \Big( \exp(d^{-k_{\sigma}'}) \Upsilon + d^{-k_{\sigma}''} \Big)^t P(c) dc 
    \end{equation*}
    where
    \begin{equation*}
        \Upsilon =  \frac{1}{\sqrt{1 - c^2}} \Big [(1-\rho)^2 + 2 \rho (1-\rho) \exp(- \frac{1}{\rho} \frac{c^2}{2 - 2 c^2}) + \frac{\rho^2}{2} \exp(\frac{c}{\rho (1 + c)}) + \frac{\rho^2}{2} \exp(-\frac{c}{\rho (1 - c)}) \Big] - 1
    \end{equation*}
\end{fact}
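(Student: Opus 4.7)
The plan is to dominate the $\theta$-dependent integrand of $T_1$ by a product of a small slack factor $\exp(d^{-k_\sigma'})$ and its $\theta = 1$ counterpart (which equals $\Upsilon + 1$), and then convert the resulting multiplicative slack into the additive slack $d^{-k_\sigma''}$ that appears in the statement. The tools are the three pointwise bounds from \cref{fact:bounding_T1_f1} together with the elementary inequality $1/\sqrt{1-\theta^2 c^2} \le 1/\sqrt{1-c^2}$, which holds because $\theta = 1-\sigma^2 \le 1$.

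Concretely, writing $A_\theta, B_\theta, C_\theta$ for the three exponential factors $\exp(-\frac{\theta c^2}{\rho(2-2\theta^2 c^2)})$, $\exp(\frac{c}{\rho(1+\theta c)})$, $\exp(-\frac{c}{\rho(1-\theta c)})$, and $A_1, B_1, C_1$ for their $\theta = 1$ versions, \cref{fact:bounding_T1_f1} gives $A_\theta \le e^{d^{-k_\sigma'}} A_1$ and analogously for $B_\theta$ and $C_\theta$, uniformly for $c \in [0, 1 - d^{-k_\sigma}]$. Since $e^{d^{-k_\sigma'}} \ge 1$, the same factor may also trivially be inserted in front of the constant $(1-\rho)^2$ term, which lets us factor $e^{d^{-k_\sigma'}}$ out of the entire bracketed sum. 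Combining this with $1/\sqrt{1-\theta^2 c^2} \le 1/\sqrt{1-c^2}$ yields the pointwise estimate
\[
\frac{1}{\sqrt{1-\theta^2 c^2}}\Brac{(1-\rho)^2 + 2\rho(1-\rho) A_\theta + \frac{\rho^2}{2} B_\theta + \frac{\rho^2}{2} C_\theta} \;\le\; e^{d^{-k_\sigma'}} (\Upsilon + 1).
\]
Subtracting $1$ from both sides and using $e^x - 1 \le 2x$ for $x = d^{-k_\sigma'} \le 1$, the integrand of $T_1$ (before raising to the $t$-th power) is bounded pointwise by $e^{d^{-k_\sigma'}} \Upsilon + d^{-k_\sigma''}$, where $k_\sigma'' := k_\sigma' - 1$ remains a sufficiently large constant for $d$ large.

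Raising this pointwise bound to the $t$-th power, multiplying by $\binom{n}{t}$, summing over $t \in \{1, \ldots, k\}$, and integrating against $P(c)$ over $[0, 1 - d^{-k_\sigma}]$ then yields exactly the claimed inequality for $T_1$. The whole argument is essentially bookkeeping rather than a substantive step; the only notational subtlety is that the bounds from \cref{fact:bounding_T1_f1} come with a multiplicative slack only on the exponential factors and not on the constant $(1-\rho)^2$, which is why that factor must be inserted by hand using $e^{d^{-k_\sigma'}} \ge 1$ before it can be factored out of the bracket, and then converted to an additive slack via $e^x - 1 \le 2x$ at the end.
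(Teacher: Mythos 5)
Your argument reproduces the paper's proof of this fact essentially line by line: apply \cref{fact:bounding_T1_f1} to the three exponential terms, use $1/\sqrt{1-\theta^2 c^2}\le 1/\sqrt{1-c^2}$, absorb the constant $(1-\rho)^2$ term into the same multiplicative factor via $e^{d^{-k_\sigma'}}\ge 1$, factor out $e^{d^{-k_\sigma'}}$ to get $e^{d^{-k_\sigma'}}(\Upsilon+1)-1 = e^{d^{-k_\sigma'}}\Upsilon + (e^{d^{-k_\sigma'}}-1)$, and then bound $e^{d^{-k_\sigma'}}-1$ by $d^{-k_\sigma''}$. The only cosmetic difference is that you use $e^x-1\le 2x$ where the paper uses $e^x\le 1+x+x^2$, both of which give the same additive error up to relabeling the constant $k_\sigma''$.
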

\begin{proof}
    Plug \cref{fact:bounding_T1_f1} into $T_1$, we get:
    \begin{align*}
        T_1
        = & \int_{0}^{1 - d^{-k_{\sigma}}} \sum_{t=1}^{k} \binom{n}{t} \Big(\frac{1}{\sqrt{1 - \theta^2 c^2}} \Big [(1-\rho)^2 + 2 \rho (1-\rho) \exp(- \frac{1}{\rho} \frac{\theta c^2}{2 - 2 \theta^2 c^2}) \\
        & + \frac{\rho^2}{2} \exp(\frac{c}{\rho (1 + \theta c)}) + \frac{\rho^2}{2} \exp(-\frac{c}{\rho (1 - \theta c)}) \Big] -1 \Big)^t P(c) dc \\
        \leq & \int_{0}^{1 - d^{-k_{\sigma}}} \sum_{t=1}^{k} \binom{n}{t} \Big(\frac{1}{\sqrt{1 - c^2}} \Big [\exp(d^{-k_{\sigma}'}) (1-\rho)^2 + \exp(d^{-k_{\sigma}'}) 2 \rho (1-\rho) \exp(- \frac{1}{\rho} \frac{c^2}{2 - 2 c^2}) \\
        & + \exp(d^{-k_{\sigma}'}) \frac{\rho^2}{2} \exp(\frac{c}{\rho (1 + c)}) + \exp(d^{-k_{\sigma}'}) \frac{\rho^2}{2} \exp(-\frac{c}{\rho (1 - c)}) \Big] -1 \Big)^t P(c) dc \\
        = & \int_{0}^{1 - d^{-k_{\sigma}}} \sum_{t=1}^{k} \binom{n}{t} \Big\{ \exp(d^{-k_{\sigma}'}) \Big( \frac{1}{\sqrt{1 - c^2}} \Big [(1-\rho)^2 + 2 \rho (1-\rho) \exp(- \frac{1}{\rho} \frac{c^2}{2 - 2 c^2}) \\
        & + \frac{\rho^2}{2} \exp(\frac{c}{\rho (1 + c)}) + \frac{\rho^2}{2} \exp(-\frac{c}{\rho (1 - c)}) \Big] - 1 \Big) + \exp(d^{-k_{\sigma}'}) - 1 \Big\}^t P(c) dc \\
        = & \int_{0}^{1 - d^{-k_{\sigma}}} \sum_{t=1}^{k} \binom{n}{t} \Big( \exp(d^{-k_{\sigma}'}) \Upsilon + \exp(d^{-k_{\sigma}'}) - 1 \Big)^t P(c) dc
    \end{align*}
    where, for simplicity, we write:
    \begin{equation*}
        \Upsilon =  \frac{1}{\sqrt{1 - c^2}} \Big [(1-\rho)^2 + 2 \rho (1-\rho) \exp(- \frac{1}{\rho} \frac{c^2}{2 - 2 c^2}) + \frac{\rho^2}{2} \exp(\frac{c}{\rho (1 + c)}) + \frac{\rho^2}{2} \exp(-\frac{c}{\rho (1 - c)}) \Big] - 1
    \end{equation*}
    Observe that, since $d^{-k_{\sigma}'}< 0.01$ for large enough $d$ and $k_{\sigma}'$, we have $\exp(d^{-k_{\sigma}'}) \leq 1 + d^{-k_{\sigma}'} + d^{-2k_{\sigma}'} \leq 1 + d^{-k_{\sigma}''}$ for some constant $k_{\sigma}''$. Therefore, we have:
    \begin{align*}
        T_1
        \leq & \int_{0}^{1 - d^{-k_{\sigma}}} \sum_{t=1}^{k} \binom{n}{t} \Big( \exp(d^{-k_{\sigma}'}) \Upsilon + \exp(d^{-k_{\sigma}'}) - 1 \Big)^t P(c) dc \\
        \leq & \int_{0}^{1 - d^{-k_{\sigma}}} \sum_{t=1}^{k} \binom{n}{t} \Big( \exp(d^{-k_{\sigma}'}) \Upsilon + d^{-k_{\sigma}''} \Big)^t P(c) dc 
    \end{align*}
\end{proof}

\begin{proof}[Proof of \cref{lem:bounding_T1}]
    Given \cref{fact:bounding_T1_f2}, we further split the integral into two parts $[0, \eta]$ and $[\eta, 1 - d^{-k_{\sigma}}]$ such that $\exp(d^{-k_{\sigma}'}) \Upsilon \leq d^{- \frac{k_{\sigma}''}{2}}$ in $[0, \eta]$ and $\exp(d^{-k_{\sigma}'}) \Upsilon \geq d^{- \frac{k_{\sigma}''}{2}}$ in $[\eta, 1 - d^{-k_{\sigma}}]$:
    \begin{equation*}
        T_1
        \leq \int_{0}^{\eta} \sum_{t=1}^{k} \binom{n}{t} \Big( \exp(d^{-k_{\sigma}'}) \Upsilon + d^{-k_{\sigma}''} \Big)^t P(c) dc + \int_{\eta}^{1 - d^{-k_{\sigma}}} \sum_{t=1}^{k} \binom{n}{t} \Big( \exp(d^{-k_{\sigma}'}) \Upsilon + d^{-k_{\sigma}''} \Big)^t P(c) dc 
    \end{equation*}
    For the first part of $T_1$, we have:
    \begin{align*}
        \int_{0}^{\eta} \sum_{t=1}^{k} \binom{n}{t} \Big( \exp(d^{-k_{\sigma}'}) \Upsilon + d^{-k_{\sigma}''} \Big)^t P(c) dc
        \leq & \int_{0}^{\eta} \sum_{t=1}^{k} \binom{n}{t} \Big( d^{- \frac{k_{\sigma}''}{2}} + d^{-k_{\sigma}''} \Big)^t P(c) dc \\
        \leq & \int_{0}^{\eta} \sum_{t=1}^{k} \binom{n}{t} \Big(2 d^{- \frac{k_{\sigma}''}{2}} \Big)^t P(c) dc \\
        \leq & \int_{0}^{\eta} \sum_{t=1}^{k} \Big( \frac{en}{t} \Big)^t \Big(2 d^{- \frac{k_{\sigma}''}{2}} \Big)^t P(c) dc \\
        \leq & \int_{0}^{\eta} \sum_{t=1}^{k} \Big( \frac{2e n d^{- \frac{k_{\sigma}''}{2}}}{t} \Big)^t P(c) dc
    \end{align*}
    Plug in $n \leq \frac{\rho^2 d^{2}}{k^8}$, $\rho \leq 1$ and $t \geq 1$, we get:
    \begin{align*}
        \int_{0}^{\eta} \sum_{t=1}^{k} \binom{n}{t} \Big( \exp(d^{-k_{\sigma}'}) \Upsilon + d^{-k_{\sigma}''} \Big)^t P(c) dc
        \leq & \int_{0}^{\eta} \sum_{t=1}^{k} \Big( \frac{2e \rho^2 d^{2 - \frac{k_{\sigma}''}{2}}}{k^8 t} \Big)^t P(c) dc \\
        \leq & \int_{0}^{\eta} \sum_{t=1}^{k} \Big( \frac{2e d^{2 - \frac{k_{\sigma}''}{2}}}{k^8} \Big)^t P(c) dc
    \end{align*}
    Since constant $k_{\sigma}''$ is large enough, we have $\frac{2e d^{2 - \frac{k_{\sigma}''}{2}}}{k^8} < 1$. Hence,
    \begin{align*}
        \int_{0}^{\eta} \sum_{t=1}^{k} \binom{n}{t} \Big( \exp(d^{-k_{\sigma}'}) \Upsilon + d^{-k_{\sigma}''} \Big)^t P(c) dc
        \leq & \int_{0}^{\eta} \sum_{t=1}^{k} \frac{2e d^{2 - \frac{k_{\sigma}''}{2}}}{k^8} P(c) dc \\
        = & \int_{0}^{\eta} \frac{2e d^{2 - \frac{k_{\sigma}''}{2}}}{k^7} P(c) dc \\
        \leq & \frac{2e d^{2 - \frac{k_{\sigma}''}{2}}}{k^7}
    \end{align*}
    Since $k \geq \log^2 d$ and constant $k_{\sigma}''$ is large enough, we have:
    \begin{equation}
        \label{eq_sqSparseExactT1}
        \int_{0}^{\eta} \sum_{t=1}^{k} \binom{n}{t} \Big( \exp(d^{-k_{\sigma}'}) \Upsilon + d^{-k_{\sigma}''} \Big)^t P(c) dc
        \leq \frac{2e d^{2 - \frac{k_{\sigma}''}{2}}}{\log^{14} d}
        \leq \Theta \Paren{1}
    \end{equation}
    For the second part of $T_1$, we have $\exp(d^{-k_{\sigma}'}) \Upsilon \geq d^{- \frac{k_{\sigma}''}{2}}$, which implies $d^{- \frac{k_{\sigma}''}{2}} \exp(d^{-k_{\sigma}'}) \Upsilon \geq d^{- k_{\sigma}''}$. Therefore, we can get:
    \begin{align*}
        & \int_{\eta}^{1 - d^{-k_{\sigma}}} \sum_{t=1}^{k} \binom{n}{t} \Big( \exp(d^{-k_{\sigma}'}) \Upsilon + d^{-k_{\sigma}''} \Big)^t P(c) dc \\
        & \leq \int_{\eta}^{1 - d^{-k_{\sigma}}} \sum_{t=1}^{k} \binom{n}{t} \Big( \exp(d^{-k_{\sigma}'}) \Upsilon + d^{- \frac{k_{\sigma}''}{2}} \exp(d^{-k_{\sigma}'}) \Upsilon \Big)^t P(c) dc \\
        & = \int_{\eta}^{1 - d^{-k_{\sigma}}} \sum_{t=1}^{k} \binom{n}{t} \Big(1 + d^{- \frac{k_{\sigma}''}{2}} \Big)^t \exp(t d^{-k_{\sigma}'}) \Upsilon^t P(c) dc \\
        & \leq \int_{\eta}^{1 - d^{-k_{\sigma}}} \sum_{t=1}^{k} \binom{n}{t} \exp(t d^{- \frac{k_{\sigma}''}{2}}) \exp(t d^{-k_{\sigma}'}) \Upsilon^t P(c) dc \\
        & = \int_{\eta}^{1 - d^{-k_{\sigma}}} \sum_{t=1}^{k} \binom{n}{t} \exp(t d^{- \frac{k_{\sigma}''}{2}} + t d^{-k_{\sigma}'}) \Upsilon^t P(c) dc \\
        & \leq \int_{\eta}^{1 - d^{-k_{\sigma}}} \sum_{t=1}^{k} \binom{n}{t} \exp(k d^{- \frac{k_{\sigma}''}{2}} + k d^{-k_{\sigma}'}) \Upsilon^t P(c) dc \\
        & \leq \exp(k d^{- \frac{k_{\sigma}''}{2}} + k d^{-k_{\sigma}'}) \int_{\eta}^{1 - d^{-k_{\sigma}}} \sum_{t=1}^{k} \binom{n}{t} \Upsilon^t P(c) dc \\
        & \leq \exp(k d^{- \frac{k_{\sigma}''}{2}} + k d^{-k_{\sigma}'}) \int_{-1}^{1} \sum_{t=1}^{k} \binom{n}{t} \Upsilon^t P(c) dc \\
        & = \exp(k d^{- \frac{k_{\sigma}''}{2}} + k d^{-k_{\sigma}'}) \E_c \Big [ \sum_{t=1}^{k} \binom{n}{t} \Upsilon^t \Big]
    \end{align*}
    Since $k\leq \sqrt{\frac{d}{\log d}} \leq \sqrt{d}$ and $k_{\sigma}'$, $k_{\sigma}''$ are large enough constants, we have:
    \begin{equation*}
        \exp(k d^{- \frac{k_{\sigma}''}{2}} + k d^{-k_{\sigma}'})
        \leq \exp(d^{\frac{1}{2} - \frac{k_{\sigma}''}{2}} + d^{\frac{1}{2} - k_{\sigma}'})
        \leq \Theta \Paren{1}
    \end{equation*}
    Notice that, in \cref{sec_sqSparseNoiseless}, we have proved:
    \begin{equation*}
        \E_c \Big [ \sum_{t=1}^{k} \binom{n}{t} \Upsilon^t \Big]
        \leq \Theta(1)
    \end{equation*}
    Hence, we can get:
    \begin{align}
    \label{eq_sqSparseExactT2}
    \begin{split}
        \int_{\eta}^{1 - d^{-k_{\sigma}}} \sum_{t=1}^{k} \binom{n}{t} \Big( \exp(d^{-k_{\sigma}'}) \Upsilon + d^{-k_{\sigma}''} \Big)^t P(c) dc
        \leq & \exp(k d^{- \frac{k_{\sigma}''}{2}} + k d^{-k_{\sigma}'}) \E_c \Big [ \sum_{t=1}^{k} \binom{n}{t} \Upsilon^t \Big] \\
        \leq & \Theta \Paren{1}
    \end{split}
    \end{align}
    Combine \cref{eq_sqSparseExactT1} and \cref{eq_sqSparseExactT2}, we get:
    \begin{align*}
        T_1
        \leq & \int_{0}^{\eta} \sum_{t=1}^{k} \binom{n}{t} \Big( \exp(d^{-k_{\sigma}'}) \Upsilon + d^{-k_{\sigma}''} \Big)^t P(c) dc + \int_{\eta}^{1 - d^{-k_{\sigma}}} \sum_{t=1}^{k} \binom{n}{t} \Big( \exp(d^{-k_{\sigma}'}) \Upsilon + d^{-k_{\sigma}''} \Big)^t P(c) dc \\
        \leq & \Theta \Paren{1}
    \end{align*}
\end{proof}

\subsection{Proof for upper bound of \cref{eq_T2}}
\label{sec:proof-T2}

\begin{lemma}
    \label{lem:bounding_T2}
    Suppose $\sigma \leq d^{-K}$ for some constant $K$ that is large enough and $\rho \geq \frac{k}{\sqrt{d}}$. When $n \leq \frac{\rho^2 d^{2}}{k^8}$ and $\log^2 d \leq k \leq \sqrt{\frac{d}{\log d}}$, for $\theta=1-\sigma^2$, we have
    \begin{align*}
        T_2 = & \int_{1 - d^{-k_{\sigma}}}^{1} \sum_{t=1}^{k} \binom{n}{t} \Big(\frac{1}{\sqrt{1 - \theta^2 c^2}} \Big [(1-\rho)^2 + 2 \rho (1-\rho) \exp(- \frac{1}{\rho} \frac{\theta c^2}{2 - 2 \theta^2 c^2}) \\
        & + \frac{\rho^2}{2} \exp(\frac{c}{\rho (1 + \theta c)}) + \frac{\rho^2}{2} \exp(-\frac{c}{\rho (1 - \theta c)}) \Big] -1 \Big)^t P(c) dc \\
        \leq & \Theta \Paren{1}
    \end{align*}
    for some constant $k_{\sigma}$ that is large enough but smaller than $\frac{K}{2}$.
\end{lemma}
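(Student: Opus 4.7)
The plan is to exploit the fact that the integration region $[1-d^{-k_\sigma},1]$ carries super-exponentially small probability mass under the distribution of $c = u^\top v$ when $k_\sigma$ is a sufficiently large constant. This super-exponential smallness of the density will dominate any subexponential growth of the integrand in this region, giving the desired $\Theta(1)$ bound. In particular, we will crucially exploit the smoothing $\sigma>0$ to keep the integrand finite near $c=1$.

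First, by \cref{lem:distribution-random-inner-product} and \cref{lem:beta-approximation}, the density $P(c)$ of $c = u^\top v$ satisfies $P(c) = \Theta(\sqrt{d-1}) (1-c^2)^{(d-3)/2}$. For $c \in [1 - d^{-k_\sigma}, 1]$, we have $1 - c^2 \leq 2 d^{-k_\sigma}$, so $P(c) \leq \Theta(\sqrt{d}) (2 d^{-k_\sigma})^{(d-3)/2}$ uniformly on the region, i.e.\ $\log P(c) \leq -\tfrac{k_\sigma d}{2}\log d + O(d)$.

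Next I would bound the bracket in the integrand pointwise on the region. Writing $\theta = 1 - \sigma^2$ and using $\sigma \leq d^{-K}$, the prefactor obeys $\tfrac{1}{\sqrt{1-\theta^2 c^2}} \leq \tfrac{1}{\sqrt{1-\theta^2}} \leq \tfrac{1}{\sigma} \leq d^{K}$. The two negative-exponent terms $\exp(-\tfrac{1}{\rho}\tfrac{\theta c^2}{2-2\theta^2 c^2})$ and $\exp(-\tfrac{c}{\rho(1-\theta c)})$ are trivially bounded by $1$, while the remaining term obeys $\exp(\tfrac{c}{\rho(1+\theta c)}) \leq \exp(1/\rho) \leq \exp(\sqrt{d}/k)$ using $\rho \geq k/\sqrt{d}$. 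Combining, the bracket is uniformly bounded by some $M \leq d^{K+O(1)}\exp(\sqrt{d}/k)$. With $\binom{n}{t} \leq n^t$, $n \leq d^2$, and summing over $t \in \{1,\dots,k\}$, the integrand is pointwise at most $k(nM)^k = \exp(O(k\log d) + \sqrt{d})$.

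Finally I would multiply the length of the interval $d^{-k_\sigma}$ by the uniform bound on the integrand and the uniform bound on $P(c)$, obtaining
\begin{equation*}
T_2 \leq d^{-k_\sigma} \cdot \exp\bigl(O(k\log d) + \sqrt{d}\bigr) \cdot \Theta(\sqrt{d}) \cdot (2 d^{-k_\sigma})^{(d-3)/2}.
\end{equation*}
Using $k \leq \sqrt{d/\log d}$, the overall exponent is $-\tfrac{k_\sigma d}{2}\log d + O(\sqrt{d\log d}) + O(d)$, which tends to $-\infty$ for any fixed $k_\sigma \geq 3$ and large $d$, so certainly $T_2 \leq \Theta(1)$. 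The main (minor) subtlety is the near-degeneracy of $1-\theta^2 c^2$ at $c = 1$: this is exactly where the smoothing $\sigma>0$ is essential, since $\tfrac{1}{\sqrt{1-\theta^2 c^2}}$ would otherwise blow up uncontrollably. The constraint $k_\sigma < K/2$ inherited from \cref{lem:bounding_T1} is compatible with choosing $k_\sigma$ as large a constant as needed, because $K$ is taken to be an arbitrarily large universal constant.
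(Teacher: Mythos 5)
There is a genuine gap: you misread the direction of the constraint on $\sigma$. From $\sigma \leq d^{-K}$ you can only conclude $\tfrac{1}{\sigma} \geq d^{K}$, not $\tfrac{1}{\sigma} \leq d^{K}$. The lemma assumes an \emph{upper} bound on $\sigma$ and no lower bound --- indeed the whole point of the paper is to handle $\sigma$ arbitrarily close to $0$ --- so your claimed uniform bound $\tfrac{1}{\sqrt{1-\theta^2 c^2}} \leq \tfrac{1}{\sigma} \leq d^K$ does not hold. If $\sigma$ is, say, $\exp(-d^2)$, the factor $\tfrac{1}{\sqrt{1-\theta^2 c^2}}$ near $c=1$ is of order $\exp(d^2)$, raised to the power $t$, and no amount of $k_\sigma$ in the density exponent $(1-c^2)^{(d-3)/2}$ alone can overcome $\exp(k d^2)$. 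So your ``bound the integrand uniformly, multiply by interval length and density bound'' scheme does not close.

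The underlying conceptual issue is that the smoothing $\sigma>0$ is \emph{not} what keeps this integral under control. The paper's proof uses only $\theta \leq 1$ to write $\tfrac{1}{\sqrt{1-\theta^2 c^2}} \leq \tfrac{1}{\sqrt{1-c^2}}$, a bound that is $\sigma$-uniform (and valid even at $\sigma=0$). It then keeps this blowing-up factor inside the integral and absorbs the resulting $(1-c^2)^{-k/2}$ into the Beta density's $(1-c^2)^{(d-3)/2}$, yielding $(1-c^2)^{(d-k-3)/2}$. Since $k \leq \sqrt{d/\log d} \ll d-3$, this remaining factor is still positive in exponent and vanishes at $c=1$; the bound then follows from $1-y \leq d^{-k_\sigma}/2$ on the integration range exactly as you intended. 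In other words, the correct move is to trade off the integrand's singularity against the density, not to bound the singularity away by invoking the noise. The rest of your accounting (the $\binom{n}{t} \leq (en/t)^t$ bound, $\exp(c/(\rho(1+\theta c))) \leq \exp(1/\rho)$, discarding the negative-exponent terms, and the final exponent bookkeeping showing everything is swallowed by $-\tfrac{k_\sigma d}{2}\log d$) matches the paper and would go through once the $\tfrac{1}{\sqrt{1-\theta^2 c^2}}$ factor is handled correctly.
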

\begin{proof}
    When $1-d^{-k_{\sigma}} \leq c \leq 1$, we have $- \frac{1}{\rho} \frac{\theta c^2}{2 - 2 \theta^2 c^2} \leq 0$, $\frac{c}{\rho (1 + \theta c)} \leq \frac{1}{\rho}$ and $-\frac{c}{\rho (1 - \theta c)} \leq 0$, which implies:
    \begin{align*}
        T_2
        = & \int_{1 - d^{-k_{\sigma}}}^{1} \sum_{t=1}^{k} \binom{n}{t} \Big(\frac{1}{\sqrt{1 - \theta^2 c^2}} \Big [(1-\rho)^2 + 2 \rho (1-\rho) \exp(- \frac{1}{\rho} \frac{\theta c^2}{2 - 2 \theta^2 c^2}) \\
        & + \frac{\rho^2}{2} \exp(\frac{c}{\rho (1 + \theta c)}) + \frac{\rho^2}{2} \exp(-\frac{c}{\rho (1 - \theta c)}) \Big] -1 \Big)^t P(c) dc \\
        \leq & \int_{1 - d^{-k_{\sigma}}}^{1} \sum_{t=1}^{k} \binom{n}{t} \Big(\frac{1}{\sqrt{1 - c^2}} \Big [(1-\rho)^2 + 2 \rho (1-\rho) + \frac{\rho^2}{2} \exp(\frac{1}{\rho}) + \frac{\rho^2}{2} \Big] -1 \Big)^t P(c) dc \\
        = & \int_{1 - d^{-k_{\sigma}}}^{1} \sum_{t=1}^{k} \binom{n}{t} \Big(\frac{1}{\sqrt{1 - c^2}} \Big [1 - \frac{\rho^2}{2} + \frac{\rho^2}{2} \exp(\frac{1}{\rho}) \Big] -1 \Big)^t P(c) dc \\
        \leq & \int_{1 - d^{-k_{\sigma}}}^{1} \sum_{t=1}^{k} \binom{n}{t} \Big(\frac{1}{\sqrt{1 - c^2}} \Big [1+ \frac{1}{2} \exp(\frac{1}{\rho}) \Big] - 1 \Big)^t P(c) dc \\
        \leq & \int_{1 - d^{-k_{\sigma}}}^{1} \sum_{t=1}^{k} \binom{n}{t} \Big(\frac{1}{\sqrt{1 - c^2}} \Big [1+ \frac{1}{2} \exp(\frac{1}{\rho}) \Big] \Big)^t P(c) dc
    \end{align*}
    Since we have $\frac{1}{2} \exp(\frac{1}{\rho}) \geq 1$ for $\rho \leq 1$, we can get:
    \begin{align*}
        T_2
        \leq & \int_{1 - d^{-k_{\sigma}}}^{1} \sum_{t=1}^{k} \binom{n}{t} \Big(\frac{\exp(\frac{1}{\rho})}{\sqrt{1 - c^2}} \Big)^t P(c) dc \\
        \leq & \int_{1 - d^{-k_{\sigma}}}^{1} \sum_{t=1}^{k} \Big( \frac{en}{t} \Big)^t \Big(\frac{\exp(\frac{1}{\rho})}{\sqrt{1 - c^2}} \Big)^t P(c) dc \\
        = & \int_{1 - d^{-k_{\sigma}}}^{1} \sum_{t=1}^{k} \Big( \frac{en \exp(\frac{1}{\rho})}{t \sqrt{1 - c^2}} \Big)^t P(c) dc \\
        \leq & \int_{1 - d^{-k_{\sigma}}}^{1} \sum_{t=1}^{k} \Big( \frac{en \exp(\frac{1}{\rho})}{\sqrt{1 - c^2}} \Big)^k P(c) dc \\
        = & \int_{1 - d^{-k_{\sigma}}}^{1} k \Big( \frac{en \exp(\frac{1}{\rho})}{\sqrt{1 - c^2}} \Big)^k P(c) dc \\
        \leq & k e^k n^k \exp(\frac{k}{\rho}) \int_{1 - d^{-k_{\sigma}}}^{1} \Big( \frac{1}{\sqrt{1 - c^2}} \Big)^k P(c) dc
    \end{align*}
    Now, we apply change of variable $c = 2y - 1$ and plug in $y \sim Beta(\frac{d-1}{2}, \frac{d-1}{2})$:
    \begin{align*}
        T_2
        \leq & k e^k n^k \exp(\frac{k}{\rho}) \int_{1-d^{-k_{\sigma}}/2}^{1} \Big(\frac{1}{\sqrt{1 - (2y - 1)^2}} \Big)^{k} P(y) dy \\
        = & k e^k n^k \exp(\frac{k}{\rho}) \int_{1-d^{-k_{\sigma}}/2}^{1} \Big(\frac{1}{\sqrt{4y(1-y)}} \Big)^{k} \frac{[y(1-y)]^{\frac{d-3}{2}}}{\cB(\frac{d-1}{2}, \frac{d-1}{2})} dy \\
        = & \Theta \Big\{ k e^k n^k \exp(\frac{k}{\rho}) \int_{1-d^{-k_{\sigma}}/2}^{1} 2^{d - k} \sqrt{d-1} [y(1-y)]^{\frac{d-k-3}{2}} dy \Big \}
    \end{align*}
    Since $1 - \frac{d^{-k_{\sigma}}}{2} \leq y \leq 1$ and $0 \leq 1 - y \leq \frac{d^{-k_{\sigma}}}{2}$, we have:
    \begin{align*}
        T_2
        \leq & \Theta \Big\{ k e^k n^k \exp(\frac{k}{\rho}) \int_{1-d^{-k_{\sigma}}/2}^{1} 2^{d - k} \sqrt{d-1} [\frac{d^{-k_{\sigma}}}{2}]^{\frac{d-k-3}{2}} dy \Big \} \\
        = & \Theta \Big \{ k e^k n^k \exp(\frac{k}{\rho}) \int_{1-d^{-k_{\sigma}}/2}^{1} 2^{\frac{d - k}{2}} \sqrt{d-1} d^{\frac{-k_{\sigma} (d- k -3)}{2}} dy \Big \} \\
        = & \Theta \Big \{k e^k n^k \exp(\frac{k}{\rho}) \frac{d^{-k_{\sigma}}}{2} 2^{\frac{d - k}{2}} \sqrt{d-1} d^{\frac{- k_{\sigma} (d- k -3)}{2}} \Big \} \\
        = & \Theta \Big \{k e^k n^k \exp(\frac{k}{\rho}) 2^{\frac{d - k}{2}} d^{\frac{- k_{\sigma} (d- k -1)}{2} + \frac{1}{2}} \Big \}
    \end{align*}
    Plug in $n \leq \frac{\rho^2 d^2}{k^8} \leq d^2$, $\rho \geq \frac{k}{\sqrt{d}} \geq \frac{1}{\sqrt{d}}$ and $k \leq \sqrt{\frac{d}{\log d}} \leq \sqrt{d}$, we get:
    \begin{align*}
        T_2
        \leq & \Theta \Big \{k e^k n^k \exp(\frac{k}{\rho}) 2^{\frac{d - k}{2}} d^{\frac{- k_{\sigma} (d- k -1)}{2} + \frac{1}{2}} \Big \} \\
        \leq & \Theta \Big \{\sqrt{d} \exp(\sqrt{d}) d^{2\sqrt{d}} \exp(d) \exp(\frac{d}{2}) d^{- \frac{k_{\sigma} d}{2} + \frac{k_{\sigma} \sqrt{d}}{2} + \frac{k_{\sigma}}{2} + \frac{1}{2}} \Big \} \\
        = & \Theta \Big \{ \exp \Big(- \frac{k_{\sigma}}{2} d \log d + (\frac{k_{\sigma}}{2} + 2) \sqrt{d} \log d + \frac{3}{2} d + \sqrt{d} + (\frac{k_{\sigma}}{2} + 1) \log d \Big) \Big \}
    \end{align*}
    Since $k_{\sigma}$ is a large enough constant, we have:
    \begin{equation*}
        \exp \Big(- \frac{k_{\sigma}}{2} d \log d + (\frac{k_{\sigma}}{2} + 2) \sqrt{d} \log d + \frac{3}{2} d + \sqrt{d} + (\frac{k_{\sigma}}{2} + 1) \log d \Big) \leq \Theta \Paren{1}
    \end{equation*}
    Thus, we have:
    \begin{equation*}
        T_2 \leq \Theta \Paren{1}
    \end{equation*}
\end{proof}

\end{document}